\documentclass{article}

\usepackage{microtype}
\usepackage{graphicx}
\usepackage{subcaption}
\usepackage{booktabs} 

\usepackage{fdsymbol}

\usepackage{xcolor}  
\definecolor{BrickRed}{rgb}{0.8, 0.25, 0.33}
\definecolor{MetallicGold}{RGB}{212, 175, 55}
\definecolor{goldenyellow}{rgb}{1.0, 0.87, 0.0}
\definecolor{hanblue}{rgb}{0.27, 0.42, 0.81}
\definecolor{lightgray}{rgb}{0.75, 0.75, 0.75}
\usepackage{graphicx}
\usepackage{subcaption}
\usepackage{causality}
\usepackage[accept]{editing}
\usepackage{wrapfig}
\usepackage{stfloats}
\usetikzlibrary{decorations.pathreplacing,calligraphy}
\usetikzlibrary{patterns}
\usepackage{hyperref}
\usepackage{amsthm}

\usepackage{amsmath}
\usepackage{amssymb}
\usepackage{bbm}
\usepackage{multicol}
\usepackage{csquotes}
\usepackage{lipsum}
\usetikzlibrary{positioning, shadows}
\usetikzlibrary {arrows.meta,bending}

\usepackage{cutwin}
\usepackage{caption}
\usepackage{enumitem}
\usepackage{titletoc}
\usetikzlibrary {shapes.symbols}
\theoremstyle{plain}
\newtheorem{theorem}{Theorem}[section]
\newtheorem{example}[theorem]{Example}
\usepackage{tabularx}
\usepackage{array}
\newcolumntype{P}[1]{>{\centering\arraybackslash}p{#1}}

\usepackage{hyperref}

\makeatletter
\let\latexaddcontentsline\addcontentsline
\makeatother

\usepackage[preprint]{icml2026}

\usepackage{amsmath}
\usepackage{amssymb}
\usepackage{mathtools}
\usepackage{amsthm}

\usepackage[capitalize,noabbrev]{cleveref}

\theoremstyle{plain}
\newtheorem{proposition}[theorem]{Proposition}
\newtheorem{lemma}[theorem]{Lemma}
\newtheorem{corollary}[theorem]{Corollary}
\theoremstyle{definition}
\newtheorem{definition}[theorem]{Definition}

\theoremstyle{remark}

\usepackage[disable,textsize=tiny]{todonotes}

\icmltitlerunning{Causal Identification from Counterfactual Data: Completeness and Bounding Results}

\begin{document}

\twocolumn[
  \icmltitle{Causal Identification from Counterfactual Data:\\Completeness and Bounding Results}



  \icmlsetsymbol{equal}{*}

  \begin{icmlauthorlist}
    \icmlauthor{Arvind Raghavan}{ailab}
    \icmlauthor{Elias Bareinboim}{ailab}
  \end{icmlauthorlist}

  \icmlaffiliation{ailab}{Causal Artificial Intelligence Lab, Department of Computer Science, Columbia University}

  \icmlcorrespondingauthor{Arvind Raghavan}{ar@cs.columbia.edu}

  \icmlkeywords{Machine Learning, ICML}

  \vskip 0.3in
]



\printAffiliationsAndNotice{}  

\begin{abstract}
  Previous work establishing completeness results for \textit{counterfactual identification} has been circumscribed to the setting where the input data belongs to observational or interventional distributions (Layers 1 and 2 of Pearl's Causal Hierarchy), since it was generally presumed impossible to obtain data from counterfactual distributions, which belong to Layer 3. However, recent work \cite{raghavan2025realizability} has formally characterized a family of counterfactual distributions which can be directly estimated via experimental methods - a notion they call \textit{counterfactual realizabilty}. This leaves open the question of what \textit{additional} counterfactual quantities now become identifiable, given this new access to (some) Layer 3 data. To answer this question, we develop the $\textsc{ctfIDu}^+$ algorithm for identifying counterfactual queries from an arbitrary set of Layer 3 distributions, and prove that it is complete for this task. Building on this, we establish the theoretical limit of which counterfactuals can be identified from physically realizable distributions, thus implying the \textit{fundamental limit to exact causal inference in the non-parametric setting}. Finally, given the impossibility of identifying certain critical types of counterfactuals, we derive novel analytic bounds for such quantities using realizable counterfactual data, and corroborate using simulations that counterfactual data helps tighten the bounds for non-identifiable quantities in practice.
\end{abstract}

\section{Introduction}
\label{sec:intro}

The Pearl Causal Hierarchy (PCH) provides a foundational framework for reasoning about causality \citep{pearl:mackenzie2018,Bareinboim2022OnPH}. The hierarchy formalizes three progressively richer modes of reasoning—\textit{seeing}, \textit{doing}, and \textit{imagining}—which correspond to \textit{observational}, \textit{interventional}, and \textit{counterfactual} regimes within an environment of interest. Consider the following example:

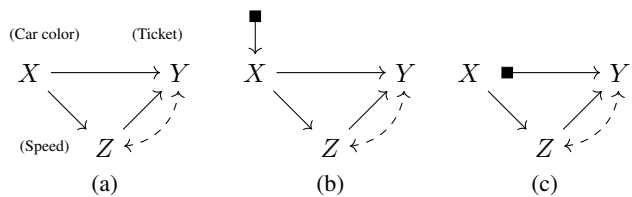
\begin{figure}[t]
        \centering
        \begin{tikzpicture}

        \begin{scope}[shift={(0cm,0cm)}]
        \node (t) at (0.2,-1) {\tiny (Speed)};
        \node (t) at (0.2,0.5) {\tiny (Car color)};
        \node (X) at (0,0) {$X$};
        \node (t) at (1.7,0.5) {\tiny (Ticket)};
        \node (Y) at (2,0) {$Y$};
        \path [->] (X) edge (Y);
        \node (t) at (1.0,-1.5) {\small (a)};
        \node (Z) at (1,-1) {$Z$};
        \path [->] (X) edge (Z);
        \path [->] (Z) edge (Y);
        \path [<->,dashed] (Z) edge[bend right=40] (Y);
        \end{scope}

        \begin{scope}[shift={(-1.cm,0cm)}]
        \node (X) at (4,0) {$X$};
        \node (Y) at (6,0) {$Y$};
        \node[fill=black,draw,inner sep=0.2em, minimum width=0.2em] (intervention) at (4,0.75) {\ };
        \path [->] (intervention) edge (X);
        \path [->] (X) edge (Y);
        \node (t) at (5,-1.5) {\small (b)};
        \node (Z) at (5,-1) {$Z$};
        \path [->] (X) edge (Z);
        \path [->] (Z) edge (Y);
        \path [<->,dashed] (Z) edge[bend right=40] (Y);
        \end{scope}

        \begin{scope}[shift={(1.85cm,0cm)}]
        \node (X) at (4,0) {$X$};
        \node (Y) at (6,0) {$Y$};
        \node[fill=black,draw,inner sep=0.2em, minimum width=0.2em] (intervention) at (4.5,0) {\ };
        \path [->] (intervention) edge (Y);
        \node (t) at (5,-1.5) {\small (c)};
        \node (Z) at (5,-1) {$Z$};
        \path [->] (X) edge (Z);
        \path [->] (Z) edge (Y);
        \path [<->,dashed] (Z) edge[bend right=40] (Y);
        \end{scope}
                 
    \end{tikzpicture}
    \caption{(a) Causal diagram for Ex. 1 (Traffic Camera); (b) Standard randomization overriding $X$ and affecting both $Z, Y$; (c) Counterfactual randomization of $X$ affecting $Y$, but not $Z$.}
    \vspace{-0.15in}
    \label{fig:ex1}
\end{figure}

\hypertarget{ex1}{\textbf{Example 1 (Traffic Camera).}} Consider a fairness auditor reviewing an AI system for issuing speeding tickets based on traffic footage. $X$ represents the color of the car, $Z$ the driving speed, $Y$ the decision to issue a ticket. Fig. \ref{fig:ex1}a shows the auditor's causal graphical assumptions: due to a high correlation in the training data between the speeding tendencies and car-color preference of different socioeconomic groups, $X$ might directly affect $Y$ in the algorithm. $X$ might affect $Z$ if pedestrians and other drivers react to, say, a red car and affect its speeding. Speeding and outcome might be affected by an unobserved confounder: unlabeled road obstacles (which present as video artifacts). $\hfill$ $\square$

The first layer of the PCH ($\mathcal{L}_1$) captures \textit{observational} distributions such as $P(Y=1 \mid X = x)$, how likely are drivers of $x$-colored cars to receive a ticket. The second layer ($\mathcal{L}_2$) concerns \textit{interventional} distributions, such as $P(Y_x =1)$, how likely is a speeding ticket when car color is fixed as $x$, say, by an experiment recruiting drivers and randomly assigning them test cars, as shown in Fig. \ref{fig:ex1}b. The third layer ($\mathcal{L}_3$) addresses \textit{counterfactual} distributions over conflicting realities, for example $P(Y_x =1 \mid X = x')$, the probability a driver receives a ticket if assigned an $x$-colored car, given that the original color was $x'$. Higher layers subsume lower layers. It is well-established that higher-layer questions cannot be answered using data from lower layers alone, and require causal assumptions to perform inference \citep{ibeling2020probabilistic,Bareinboim2022OnPH}.

Counterfactuals are widely acknowledged to be important in topics including personalized decision-making \citep{bareinboim:etal15, MuellerPearl_2023}, path-specific effect estimation \citep{pearl:01, rubin:04, avin:etal05}, fairness analysis \citep{zhang:bar18a, plecko:bareinboim24}, explainable AI \citep{leekz:nte2025} etc. This has spurred much work in the field of counterfactual \textit{identification} (defined in Sec. \ref{sec:preliminaries}): \citet{shpitser:pea08-r336} proved their \textsc{IDC*} algorithm is complete for identifying an $\mathcal{L}_3$ quantity when assuming knowledge of all $\mathcal{L}_2$ (inc. $\mathcal{L}_1$) data. Using this result, \citet{malinsky19} developed the \textsc{psIDC} algorithm for path-specific effect identification. \citet{correaetal:21} then proved their \textsc{ctfID} algorithm is complete for $\mathcal{L}_3$ identification, assuming access to a \textit{subset} of $\mathcal{L}_2$ data, and \citet{correa22a:ctf_transport} extended this to counterfactual \textit{transportability} across heterogenous environments. If a counterfactual is non-identifiable, \citet{zhang22ab} provide a Bayesian sampling method, which we call $\textsc{pID}$, to \textit{partially} identify the bounded {range} of this quantity. These methods are depicted in Table \ref{tab:alg_comparison} and Fig. \ref{fig:intro_fig}, along with the dimensions of consideration - which quantities the method is capable of identifying (output scope) and the scope of input data it assumes. We also distinguish between identification methods, which map counterfactuals to a unique function of input data, and statistical methods for practically estimating these functions using finite samples of input data.

\begin{table}[t] 
\centering

\caption{Comparison of different algorithms for counterfactual identification and the scope of input data. Ours is complete when assuming an arbitrary set of physically realizable input data.}
\newcolumntype{C}{>{\centering\arraybackslash}X}
\label{tab:alg_comparison}
\begin{tabularx}{\linewidth}{|p{1.4cm} P{3.cm} C|}
\hline
\multicolumn{1}{|c}{\rule{0pt}{3.2ex}\textit{Method}} &
\multicolumn{1}{c}{\rule{0pt}{3.2ex}\textit{ID Query}} &
\multicolumn{1}{c|}{\rule{0pt}{3.2ex}\textit{Input Data}} \\
\hline
\hline
\textsc{IDC*} & $\mathcal{L}_3$ query & Full $\mathcal{L}_2$ data \\
\hline
\textsc{psIDC} & Path-specific $\mathcal{L}_3$ & Full $\mathcal{L}_2$ data \\
\hline
\textsc{ctfID} & $\mathcal{L}_3$ query & Subset of $\mathcal{L}_2$ data \\
\hline
\textbf{$\textsc{ctfIDu}^+$} (ours) & $\mathcal{L}_3$ query & Subset of realizable $\mathcal{L}_3$ data \\
\hline
\end{tabularx}
\vspace{-0.1in}
\end{table}

To appreciate the relevance of counterfactuals, consider the \textit{natural direct effect} (NDE) of a treatment $X$ on outcome $Y$ \citep{pearl:01}. NDE is defined as $P(Y_{xZ_{x'}}=1) - P(Y_x = 1)$, where the first term is a \textit{nested} counterfactual. In \hyperlink{ex1}{Ex. 1}, $P(Y_{xZ_{x'}}=1)$ denotes the outcome probability if a driver's car were randomly assigned color $x$ \textit{and} speeding $Z$ was fixed to what it \textit{would have been} had her car been assigned color $x'$. Decomposing the total effect of $X$ on $Y$ this way allows an auditor to reason about algorithmic fairness in this scenario. Unfortunately, due to unobserved confounding in the graph in Fig. \ref{fig:ex1}a, NDE is non-identifiable using only $\mathcal{L}_2$ data. Hence, the algorithms cited earlier fail to identify it, since they assume the input data is limited to $\mathcal{L}_2$.

It is commonly believed that $\mathcal{L}_3$ distributions are inaccessible except indirectly via indentification \citep[e.g., see][]{dawid:00, shpitser:pea07}. However, \citet{raghavan2025realizability} recently provided a formal characterization of a family of counterfactuals which \textit{can} be directly sampled from in an experimental setting, a property they term \textit{counterfactual realizability}. This is made possible by the discovery of a physical procedure called \textit{counterfactual randomization} \citep{bareinboim:etal15}, which permits $\mathcal{L}_3$ data collection. E.g., in Ex. 1, the auditor can randomize the RGB values in the video footage to fix the car color $X$ \textit{as perceived} by $Y$, without affecting the natural value of $X,Z$, as shown in Fig. \ref{fig:ex1}c. NDE can be identified with this data as follows:
\begin{align}
        &P(Y_{xZ}=1 \mid X=x') &\text{by ctf. randomization} \label{eq:ctfrand_nde}\\
        =&P(Y_{xZ_{x'}} \mid X=x') &\text{  }\text{  }X=x' \implies Z=Z_{x'}\\
        =&P(Y_{xZ_{x'}}) &\text{d-separation} \label{eq:nde}
        \vspace{-0.3in}
\end{align}

\begin{figure}[t]
    \centering
    \includegraphics[width=\linewidth]{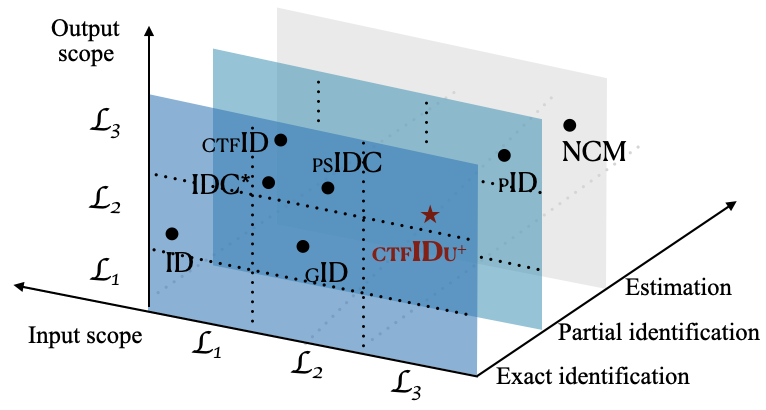}
    \caption{Landscape of causal identification and estimation methods. \textsc{ctfIDu+} is complete for $\mathcal{L}_3$ identification from a collection of realizable counterfactual data.}
    \label{fig:intro_fig}
    \vspace{-0.1in}
\end{figure}

The NDE now becomes identifiable with the possibility of counterfactual data collection. This realization in fact opens up more fundamental questions: \textit{which other $\mathcal{L}_3$ quantities also become identifiable given access to (some) $\mathcal{L}_3$ data? What is the relationship between counterfactual identifiability and realizability - does one imply the other?} We resolve these questions in this paper.

Specifically, our contributions are as follows:
\vspace{-0.15in}
\begin{itemize}[leftmargin=0.35cm]
  \item Sec. \ref{sec:ctf_id}: we develop the $\textsc{ctfIDu}^+$ algorithm (Alg. \ref{alg:ctfidu_plus}) which identifies a counterfactual quantity using data from an arbitrary set of $\mathcal{L}_3$ input (inc. realizable {counterfactual} data), or returns FAIL if the query is non-identifiable. We prove the algorithm is complete (Thm. \ref{thm:ctfidu_completeness}). $\textsc{ctfIDu}^+$ thus subsumes the previous algorithms in Table \ref{tab:alg_comparison}.
  \vspace{-0.1in}
  \item Sec. \ref{sec:id_limits}: we prove foundational results connecting counterfactual realizability and identifiability. We show that the theoretical limits of realizability are also the limits of exact identification (Thm. \ref{thm:id_limits}). This further implies a duality result - a counterfactual quantity is identifiable iff its distribution is realizable, in principle, via counterfactual randomization actions (Cor. \ref{cor:id_connection_informal}). 
  \vspace{-0.1in}
  \item Sec. \ref{sec:partial_id}: we show that, even for non-identifiable quantities, the partial identification bounds can be tightened by accessing (some) counterfactual data. We derive novel analytic bounds for an important type of $\mathcal{L}_3$ query using counterfactual data which are provably tighter than previous results (Prop. \ref{prop:bounds}). We then show via simulations that this extra data meaningfully narrows the $(1-\beta)$ credence interval for an identification query in practice (Ex. \hyperlink{ex2}{2}, \hyperlink{ex3}{3}).  
\end{itemize}

All proofs are provided in the supplementary material.

\section{Background and Notation}
\label{sec:preliminaries}



We denote variables by capital letters, $X$, and values by small letters, $x$. Bold letters, $\*X$, are sets of variables and $\*x$ sets of values. $P(\*x)$ is shorthand for $P(\*X = \*x)$. $\mathbbm{1}[.]$ is the indicator function. Two values $\*x$ and $\*z$ are consistent if they share the common values for $\*X \cap \*Z$. We denote by $\*x \setminus \*Z$ the subset of $\*x$ corresponding to variables in $\*X \setminus \*Z$, and by $\*x\cap \*Z$ the subset of $\*x$ corresponding to variables in $\*X \cap \*Z$. We assume finite-domain discrete variables.

\vspace{-0.05in}
\paragraph{Structural Causal Model. } We use \textit{Structural Causal Models} (SCMs) to describe the generative process for a system \citep{bareinboim:textbook,pearl:2k}. An SCM $\mathcal{M}$ is a tuple $\langle \*V, \*U, \mathcal{F}, P(\*u) \rangle$. $\*V$ is the set of observable variables in the system. $\*U$ is the set of unobservable variables exogenous to the system, distributed according to $P^{\mathcal{M}}(\*U)$. $\mathcal{F} = \{f_i\}$ is a set of functions s.t. each $f_i$ causally generates the value of $V_i \in \*V$ as $V_i \gets f_i(\*U_i, \*{Pa}_i)$, where $\*U_i \subseteq \*U$ and $\*{Pa}_i \in \*V \setminus V_i$. $\mathcal{M}$ is typically unknown.


\paragraph{Causal diagram.} Each $\mathcal{M}$ induces a \textit{causal diagram} $\mathcal{G}$, which is a graph containing a vertex for each $V_i \in \*V$, a directed edge from each node in $\*{Pa}_i$ to $V_i$, and a bidirected edge between $V_i, V_j$ if $\*U_i, \*U_{j}$ are not independent. $\mathcal{G}_{\overline{\*X}\underline{\*W}}$ denotes the result of removing edges coming
into variables in $\*X$, and edges coming out of $\*W$. $\mathcal{G}[\*W]$ denotes a subgraph of $\mathcal{G}$, which includes only $\*W$ and
the edges among its elements.  We use standard terminology like parents, descendants of a node (see App. \ref{app:scm}). Our treatment is limited to \textit{recursive} SCMs, which implies acyclic diagrams. 

Given graph $\mathcal{G}$, its vertices can be partitioned into \textit{confounded, or c-components} such that two variables belong to the same c-component if they are connected in $\mathcal{G}$ by a path made entirely of bidirected edges.

\paragraph{Potential response.} The $\doo{\*x}$ operator indexes a sub-model $\mathcal{M}_{\*x}$ where the functions generating $\*X$ are replaced with constant values $\*x$. I.e., this is an intervention in the model $\mathcal{M}$ which overrides  natural mechanisms and assigns fixed values $\*x$ to  variables $\*X$. A variable $Y \not \in \*X$ evaluated in this regime is called a \textit{potential response}, denoted $Y_{\*x}$.

\paragraph{Layers of the PCH.} ($\*W_\star=\*w$) denotes an arbitrary counterfactual event, e.g. ($Y_x = y , Y_{x'} = y' , X = x''$) denotes the joint realization of these "cross-regime" potential responses for a single unit in the study population. $\*V(\*W_\star)$ denotes the observable variables appearing in $\*W_\star$, e.g. $\{Y,X\}$ in the preceding. The probability of this event $P(\*W_\star = \*w)$ is given by the \textit{Layer 3 ($\mathcal{L}_3$) valuation}: 
\begin{align}
    \sum_{\*u}\bigg(\prod_{W_\*t \in \*W_\star} \mathbbm{1}[W_\*t(\*u) = w] \bigg)P(\*u),
\end{align}
with $w$ taken from $\*w$. If the subscripts of all the terms in $\*W_\star$ are the same $\*x$, this corresponds to the Layer 2 ($\mathcal{L}_2$) distribution $P(\*W_\*x) = P(\*W; \doo{\*x})$. If the subscripts are all $\emptyset$, this is the Layer 1 ($\mathcal{L}_1$) distribution $P(\*W)$. We assume throughout that all distributions are positive.

$\*W_\star$ could include potential responses under recursively defined regimes \citep[~Sec. 2.1.1]{correa2024ctfcalc}. For instance, in Fig. \ref{fig:ex1}, the \textit{nested counterfactual} $Y_{xZ_{x'}}$ refers to the variable $Y$ measured in a regime where $X$ is fixed to be $x$, and $Z$ is fixed to the value it would have taken had $X$ been fixed as $x'$. Such nesting can be arbitrarily deep.

\paragraph{Counterfactual (ctf-) factor.} \label{def:ctf_factor}
    Let $\*C_\star$ be a counterfactual set of the form $\{V_{1_{[\*{pa}_1]}}, ..., V_{k_{[\*{pa}_k] }}\}$, and $\*c = \{v_1, ..., v_k\}$, with $V_i \in \*V$. Then, $Q[\*C_\star](\*c)$ is called the \textit{counterfactual, or ctf-factor} of $\*C_\star$ and is defined as 
    \begin{align}
        Q[\*C_\star](\*c) = P(\*C_\star = \*c),
    \end{align}
    This is a generalization of the $\mathcal{L}_2$ notion of a \textit{confounded, or c-factor}, defined for $\*C \subseteq \*V$ and $\*c \subseteq \*v$ as
    \begin{align}
        Q[\*C](\*v) = P(\*c ; \doo{ \*v \setminus \*c})
        \label{eq:c_factor}
    \end{align}

\paragraph{Counterfactual identification.} A query $P(\*Y_\star = \*y)$ is said to be \textit{identifiable} from a set of input data distributions $\mathbb{A}$ given causal diagram $\mathcal{G}$, if $P(\*Y_\star = \*y)$ is uniquely computable from $\mathbb{A}$ in any causal model which induces $\mathcal{G}$.

\subsection{Realizability of a distribution}

A distribution $P(\*Y_\star)$ is said to be \textit{realizable} given graph $\mathcal{G}$, if it is possible to directly draw data samples from $P(\*Y_\star)$ using a sequence of physical actions taken from the set of permissible actions in the given environment \citep[~Def. 3.4]{raghavan2025realizability}. $\mathcal{L}_1$ distributions like $P(\*V)$ can be realized by observing the natural behavior of a system. $\mathcal{L}_2$ distributions like $P(\*V ; \doo{\*x})$ can be realized via the standard randomized intervention \textit{rand($\*X$)} - erasing the natural value of $\*X$ and assigning a random value $\*X=\*x$ for each unit, and sampling from this regime (Fig. \ref{fig:ex1}b).

\paragraph{Counterfactual randomization.}\citep[~Def. 2.3]{raghavan2025realizability} Given a graph $\mathcal{G}$, this intervention allows the value of a treatment variable $X$ \textit{as perceived} by some of its child variables $\*C \subseteq \*{Ch}(X)$ to be a randomly assigned, notated \textit{ctf-rand($X \rightarrow \*C$)}. Unlike the standard randomized intervention \textit{rand($X$)}, this neither (1) overrides the unit's naturally realized value of $X$, nor (2) affects the variables in $\*{Ch}(X) \setminus \*C$. For instance, in Fig. \ref{fig:ex1}c, the action \textit{ctf-rand($X \rightarrow Y$)} affects only $Y$ without affecting $Z$, and does not override the natural $X$.\footnote{Refer to \citep[~App. E]{raghavan2025realizability} for the conditions that permit such a procedure to be performed.} 

Thus, the set of permissible actions for data-collection now includes observation, and possibly \textit{rand()} and \textit{ctf-rand()} of one ore more variables. Of course, some randomizations may not be feasible or desirable in any given environment.

\section{Identification from Counterfactual Data}
\label{sec:ctf_id}


As discussed in previous sections, the possibility of performing \textit{ctf-rand()} expands the scope of input data available for supporting counterfactual identification. We index each input data distribution intuitively by the actions $\mathcal{A}$ an experimenter takes in that data-collection regime. For instance, in Fig. \ref{fig:ex1}c, the input distribution is indexed by by the action set $\mathcal{A}$ = \{ \textit{ctf-rand($X \rightarrow Y$)}\}. Here, the experimenter is able to sample directly from the counterfactual distribution $P(Y_{xZ}=y, Z=z, X=x')$ which by the consistency rule is equivalent to $P(y_{xz}, z, x')$. The set of available data distributions indexed by $\mathbb{A} = \{\mathcal{A}_1,...,\mathcal{A}_k\}$ forms an input to our identification algorithm detailed next.\footnote{$\mathcal{A}= \emptyset$ corresponds to the observational distribution $P(\*v)$, while $\mathcal{A}= \{rand(\*X)\}$ is the standard randomized intervention on $\*X$ and corresponds to the interventional distribution $P(\*v ; \doo{\*x})$.}

Our roadmap for this section is as follows.
\begin{itemize}
\vspace{-0.1in}
    \item We develop the $\textsc{identify}^+$ algorithm which identifies a target ctf-factor $Q[\*C_\star](\*c)$ from an input ctf-factor $Q[\*T_\star](\*t)$ or returns FAIL if it is non-identifiable;

    \item We prove that $\textsc{identify}^+$ is complete for this task, using a novel proof technique; we show that $\textsc{identify}^+$ returns FAIL only when it detects a data-structure called a \textit{counterfactual hedge}, which offers a certificate of non-identifiability;

    \item We develop the $\textsc{ctfIDu}^+$ algorithm that decomposes a target counterfactual into smaller ctf-factor terms which are necessary and sufficient for identification; it then runs $\textsc{identify}^+$ as a sub-routine to identify each term, or returns FAIL if one or more terms cannot be identified from the input data; and

    \item We prove that $\textsc{ctfIDu}^+$ is complete for identification from realizable counterfactual data.
\end{itemize}

Our first contribution is the $\textsc{identify}^+$ algorithm (Alg. \ref{alg:identify_plus}), which takes as input a ctf-factor $Q[\*T_\star](\*t)$ which can be obtained from the input data, and computes the value of some other target ctf-factor $Q[\*C_\star](\*c)$ which is a subset ($\*c_\star \subseteq \*t_\star$) iff it is identifiable from this input data. Refer to Sec. \ref{sec:preliminaries} for the definition of a ctf-factor.

For instance, in Fig. \ref{fig:nte_bounds}c, suppose we can access the $P(y_x, x')$ distribution by the action of $\textit{ctf-rand}(X \rightarrow Y)$, and we want to compute $P(y_x)$ using this input data. Calling $\textsc{identify}^+(\mathcal{G},P(y_x),P(y_x,x'))$ defines $\*H_\star := Y_x$ in Line 3. And Line 4 returns $P(y_x) = \sum_{x'} P(y_x, x')$ as needed.\footnote{We show in App. \ref{app:identify_example} a more involved example where $\textsc{identify}^+$ computes a ctf-factor using a sequence of non-trivial steps, beyond just marginalizing out extra terms.} Notably, $\textsc{identify}^+$ generalizes the celebrated $\textsc{identify}$ algorithm \citep[~Sec. 4.4]{tian:pea03-r290-L}, which works at the level of interventional ($\mathcal{L}_2$) c-factors.

\begin{algorithm}[t]
        \caption{$\textsc{identify}^+$} \label{alg:identify_plus}
        \begin{algorithmic}[1]
        
        \STATE {\bfseries Input:} Causal diagram $\mathcal{G}$; ctf-factor $Q[\*C_\star](\*c)$; ctf-factor $Q[\*T_\star](\*t)$, s.t. $\*c_\star \subseteq \*t_\star$ and $\*V(\*T_\star)$ is a single c-component in $\mathcal{G}[\*V(\*T_\star)]$

        \smallskip
        \STATE {\bfseries Output:} Expression for $Q[\*C_\star](\*c),\*c_\star \subseteq \*t_\star$, in terms of $Q[\*T_\star](\*t)$; or \textbf{FAIL}


        \medskip
        \STATE Let $\*H_\star$ be the smallest set s.t. $\*C_\star \subseteq \*H_\star \subseteq \*T_\star$ and there is no $C_{i_{[\*{pa}_i]}} \in \*H_\star, C_{j_{[\*{pa}_j]}} \in \*T_\star \setminus \*H_\star$  where $\*t \cap C_j \in \*{pa}_i$

        \smallskip
        \IF{$\*H_\star = \*C_\star$}
            \STATE Return $Q[\*C_\star](\*c) = \sum_{\*t \setminus \*c} Q[\*T_\star](\*t)$
        \smallskip
        \ELSIF{$\*H_\star = \*T_\star$}
            \STATE Return \textbf{FAIL}
        \smallskip
        \ELSIF{$\*C_\star \subset \*H_\star \subset \*T_\star$}
            \STATE $Q[\*H_\star](\*h) = \sum_{\*t \setminus \*h} Q[\*T_\star](\*t)$
            \STATE Let $\*H^1_\star,...,\*H^m_\star$ be a partition of $\*H_\star$ s.t. each $\*V(\*H^i_\star)$ forms a c-component in $\mathcal{G}[\*V(\*H_\star)]$
            \smallskip
            \STATE Let $\*H^i_\star$ be the subset s.t. $\*C_\star \subseteq \*H^i_\star$
            \smallskip
            \STATE Compute $Q[\*H^i_\star](\*h^i)$ from $Q[\*H_\star](\*h)$ by Thm. \ref{thm:ctf_factorization}
            \STATE Return $\textsc{identify}^+$ $\bigg(\mathcal{G}, Q[\*C_\star](\*c),Q[\*H^i_\star](\*h^i)\bigg)$
        \smallskip
        \ENDIF
        \end{algorithmic}
      \end{algorithm}
\vspace{-0.cm}

In order to build up to the completeness of $\textsc{identify}^+$, we formulate a novel data structure that may be observed in the distributions of potential responses. We define a \textit{counterfactual forest} and \textit{hedge} as follows.

\begin{definition}[Counterfactual (Ctf-) Forest] \label{def:ctf_forest}
    Let $Q[\*T_\star](\*t )$ be a ctf-factor satisfying the following:
    \begin{itemize}
    \vspace{-0.1in}
        \item [i.] $V_{j{[.]}}$ appears at most once in $\*T_\star = \{V_{1_{[\*{pa}_1]}},...,V_{k_{[\*{pa}_k]}}\}$ for any $j$;
        \item [ii.] For $\*T = \*V(\*T_\star)$, $\mathcal{G}[\*T]$ is a c-component whose bidirected edges form a minimum spanning tree;
        \item [iii.] $\*T = An(\*C)_{\mathcal{G}[\*T]}$ for some $\*C_\star \subseteq \*T_\star$, with $\*C = \*V(\*C_\star)$ and $\*c = \*t \cap \*C_\star$;
        \vspace{-0.03in}
        \item [iv.] Each vertex in $\mathcal{G}[\*T]$ has at most one child; then
    \end{itemize}
    \vspace{-0.1in}
    $\{\*T_\star = \*t\}$ is said to be a \textit{counterfactual, or ctf-forest} rooted in $\{\*C_\star = \*c\}$. \hfill $\square$
\end{definition}

\vspace{0.05in}

\begin{definition}[Counterfactual (Ctf-) Hedge] \label{def:ctf_hedge}
    Let $\{\*T_\star = \*t\}$ be a ctf-forest rooted in $\{\*C_\star=\*c\}$, having subgraph $\mathcal{G}$. If 
    \begin{itemize}
    \vspace{-0.1in}
        \item $\*T_\star \neq \*C_\star$; and
        \vspace{-0.03in}
        \item For each $V_{i_{[\*{pa}_i]}} \in \*T_\star \setminus \*C_\star$ and $V_j = Ch(V_i)_{\mathcal{G}}$, we have $\*t \cap V_{i_{[\*{pa}_i]}} = \*{pa}_j \cap V_{i_{[\*{pa}_i]}}$; that is, $\{\*t_\star\}$ forms a "value chain" where each term's value is in its child's subscript for $\*T_\star \setminus \*C_\star$, then
    \end{itemize}
    \vspace{-0.1in}
    $\{\*T_\star = \*t\}$ is a \textit{counterfactual, or ctf-hedge} according to $\mathcal{G}$, rooted in $\{\*C_\star = \*c\}$. \hfill $\square$
\end{definition} 

Consider the minimum spanning tree in Fig. \ref{fig:ctf_hedge}. $\{s, c, b_{s'c'}, d_{b'}, f_{d'}, e_{gh} \}$ is a ctf-forest rooted in $\{E_{gh} = e, F_{d'} = f\}$, while  $\{s, c, b_{sc}, d_{b}, f_{d}, e_{gh} \}$ satisfies the definition of a ctf-hedge rooted in $\{E_{gh} =e, F_{d}=f\}$. 

This structure marks an evolution of the previous hedge/thicket structures that have been used to witness non-identification \citep{shpitser:pea06a, lee:etal19}. This structure is designed to authenticate a failure to identify one ctf-factor from another, with a simplified proof strategy as compared to \citet{lee:etal19}.

\begin{lemma}[Ctf-hedge non-identifiability] \label{lem:ctfhedge_nonid}
    Let $\{\*T_\star = \*t\}$ be a ctf-hedge rooted in $\{\*C_\star = \*c\}$, with subgraph $\mathcal{G}$. $Q[\*C_\star](\*c)$ is not identifiable from $Q[\*T_\star](\*t)$ given $\mathcal{G}$.
\end{lemma}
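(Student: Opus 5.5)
We will prove non-identifiability in the standard way: construct two SCMs $\mathcal{M}^1,\mathcal{M}^2$ compatible with $\mathcal{G}$ (the subgraph induced by the ctf-forest, which can later be embedded in any supergraph by letting extra variables be independent noise) that agree on $Q[\*T_\star](\*t)$ for every value $\*t$ but disagree on $Q[\*C_\star](\*c)$. Positivity can be restored at the end by mixing each model with a small amount of independent noise, which preserves both the agreement and a strict disagreement. We work with binary variables.

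\textbf{The two models.} Since the bidirected edges of $\mathcal{G}[\*T]$ form a spanning tree, we attach an independent fair bit $U_e$ to each bidirected edge $e$. Each $V_i$ sees the bits of its incident bidirected edges. In $\mathcal{M}^1$, each $V_i$ is the parity (XOR) of its incident $U_e$'s plus its parent's value. In $\mathcal{M}^2$, the same holds except that the $U$'s are ignored, following the usual hedge construction of Shpitser and Pearl. Because the bidirected edges form a tree, each $U_e$ is counted exactly twice in the total parity. Thus in $\mathcal{M}^1$ the XOR over all of $\*T$ of the ``own'' parts is identically $0$, while in $\mathcal{M}^2$ it is unconstrained in a different way.

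\textbf{Exploiting the value-chain condition.} The key point is that in a ctf-hedge, every non-root term $V_{i[\*pa_i]}$ has observed value equal to the value its unique child $V_j$ is intervened with. By condition iv, the forest structure means each non-root node has exactly one child. We then show by induction from leaves to roots that, on the event $\{\*T_\star=\*t\}$, the joint event is equivalent to a natural (non-counterfactual) event in which each child takes its parent's actual value. We then verify that the distribution of this event is a function of the parity constraint only on $\*T$, and compute it in both models. Conversely, the marginal $Q[\*C_\star](\*c)$ sums over the non-root values, which breaks this equivalence: in $\mathcal{M}^1$ the parity constraint is then lost.

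The plan is therefore to choose the $\mathcal{M}^2$ mechanisms so that the full joint $P(\*T_\star=\*t)$ matches $\mathcal{M}^1$ on all $\*t$ (for instance, $2^{-|\*T|}$ times an indicator of a parity), while the root marginals differ. If pure parity does not distinguish the roots, we perturb with a bias parameter.

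\textbf{Main obstacle.} The hard step is guaranteeing agreement on $Q[\*T_\star](\*t)$ for \emph{all} $\*t$, including values violating the value chain, while the roots differ. Here the subscripts are fixed by the query, not by $\*t$: for off-chain $\*t$, the child's response is evaluated at a parent value different from the parent's realized value. We would first try to define $\mathcal{M}^2$ by letting each node's function depend on its parent input only through XOR, in the same way as in $\mathcal{M}^1$, and differ only in how the $U$'s enter. In that case the joint over potential responses with fixed subscripts factors identically across the two models except through $U$-parity. If this fails, we fall back on a doubling trick, adding a copy of each $U_e$ as in \citet{shpitser:pea06a}. We then check agreement by a counting argument over $U$-configurations.
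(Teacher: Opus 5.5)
\textbf{Your ``main obstacle'' is an impossible target, not a hard step.} You aim to match $P(\*T_\star=\*t')$ for every $\*t'$ with the subscripts frozen at the query's values, including off-chain $\*t'$. Under that reading the lemma is false. $Q[\*C_\star](\*c)$ is obtained from those entries by summing out the values of the terms in $\*T_\star\setminus\*C_\star$, which leaves the subscripts of the $\*C_\star$ terms untouched. So any two models that agree on all such entries also agree on the target. For the smallest hedge $\{X=x,Y_x=y\}$ rooted in $\{Y_x=y\}$, $P(Y_x=y)=\sum_{x''}P(X=x'',Y_x=y)$. This is exactly the untied case $\textsc{identify}^+(\mathcal{G},P(y_x),P(y_x,x'))$, which the paper resolves by marginalization, so no doubling trick or counting argument can rescue it.

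The lemma relies on the value-sensitive (AST) indexing. Along a value-chained edge the child's subscript \emph{is} the parent's value and co-varies with $\*t$, so there are no off-chain entries at all. Make the mechanisms ignore subscripts on variables outside the subgraph; then consistency turns the whole input into the observational law $P(\*T=\cdot)$, which your value-chain paragraph already derives. The target is different: the non-root values are summed out but the roots' subscripts stay fixed, so it is the interventional law of $\*C$ with its parents in $\*T\setminus\*C$ held at those values. That asymmetry is the entire proof, so the fixed-subscript worry should be dropped rather than attacked.

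\textbf{$\mathcal{M}^2$ is never specified.} Read literally, ``the $U$'s are ignored'' makes every variable deterministic, and then the observational laws differ. The construction that works (the paper's, after Shpitser--Pearl) keeps your $\mathcal{M}^1$ and changes only the roots. In $\mathcal{M}^2$, each variable in $\*C$ ignores its parents in $\*T\setminus\*C$ and the bits on bidirected edges joining $\*C$ to $\*T\setminus\*C$. Agreement is then a short count, with no bias parameter needed:
\begin{itemize}
\item Put $s_i=v_i\oplus\bigoplus_p v_p$ over the parents each mechanism actually reads; $v\mapsto s$ is a bijection.
\item In $\mathcal{M}^1$, $s$ is the tree's incidence matrix applied to $|\*T|-1$ fair bits, hence uniform on $\{\bigoplus_{\*T}s_i=0\}$. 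Since every non-root has exactly one child, $\bigoplus_{\*T}s_i=\bigoplus_{\*C}v$.
\item In $\mathcal{M}^2$, $s=v$ on $\*C$, and it is uniform on $\{\bigoplus_{\*C}s_i=0\}$ provided the tree's edges inside $\*C$ connect $\*C$. The coordinates on $\*T\setminus\*C$ are uniform and independent of these.
\end{itemize}
So both models give $P(\*T=\*t)=2^{1-|\*T|}\mathbbm{1}[\bigoplus\*c=0]$, not $2^{-|\*T|}$ as you wrote.

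Under the root subscripts the crossing bits no longer cancel in $\mathcal{M}^1$. Any proper subset of rows of a tree's incidence matrix is independent, so $P^1(\*C_\star=\*c)=2^{-|\*C|}$. In $\mathcal{M}^2$ the roots see only internal bits, so $P^2(\*C_\star=\*c)=2^{1-|\*C|}$ for even $\*c$, and the two models disagree on the target.
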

\vspace{-0.1in}
\begin{proof}[Proof sketch]
    We develop a bit-encoding scheme to construct a pair of SCMs that, by virtue of the edge count in a min. spanning tree, agree on $P(\*t_\star)$ but differ on $P(\*c_\star)$, witnessing non-identifiability. See App. \ref{app:proofs_for_id}.
\end{proof}

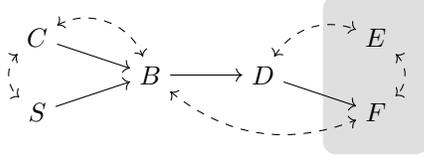
\begin{figure}
    \centering
    \begin{tikzpicture}
 
  \node[draw=none, fill=gray!25, rounded corners=5pt, inner xsep=20pt, inner ysep=30pt ] at (4.5,-0.5) {\ };

  \node (C) at (0, 0) {$C$};
  \node (S) at (0,-1) {$S$};
  \node (B) at (1.5, -0.5)   {$B$};
  \node (D) at (3, -0.5)   {$D$};
  \node (E) at (4.5, 0) {$E$};
  \node (F) at (4.5,-1) {$F$};

  \path [->] (C) edge (B);
  \path [->] (S) edge (B);
  \path [->] (B) edge (D);
  \path [->] (D) edge (F);
  \path [<->,dashed] (C) edge[bend right=50] (S);
  \path [<->,dashed] (C) edge[bend left=50] (B);
  \path [<->,dashed] (B) edge[bend right=30] (F);
  \path [<->,dashed] (D) edge[bend left=40] (E);
  \path [<->,dashed] (E) edge[bend left=50] (F);

\end{tikzpicture}
    \caption{Sugraph of a ctf-hedge}
    \label{fig:ctf_hedge}
\end{figure}

\begin{lemma}[$\textsc{identify}^+$ soundness and completeness] \label{lem:stepviii}
    Let $Q[\*T_\star](\*t)$ be a ctf-factor in which each observable variable appears at most once, and $\mathcal{G}[\*V(\*T_\star)]$ is a c-component. Let $Q[\*C_\star](\*c)$ be a ctf-factor s.t. $\*C_\star \subseteq \*T_\star, \*c \subseteq \*t$. $Q[\*C_\star](\*c)$  is identifiable from $Q[\*T_\star](\*t)$ and $\mathcal{G}$ iff $\textsc{identify}^+$ returns an expression for it.
\end{lemma}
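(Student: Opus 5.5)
The proof has two directions, argued by induction on $|\*T_\star|$ along the recursion of $\textsc{identify}^+$.

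\textbf{Soundness.} I check that each returned expression is a valid identity holding in every SCM inducing $\mathcal{G}$.

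In Line 4, $\*H_\star=\*C_\star$ means no term of $\*T_\star\setminus\*C_\star$ has its value appearing in the subscript of a term in $\*C_\star$. Hence marginalizing $\*t\setminus\*c$ out of $P(\*T_\star=\*t)$ leaves exactly $P(\*C_\star=\*c)$; this is plain marginalization of a joint counterfactual event.

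Line 8 is the same argument applied to $\*H_\star$. By minimality of $\*H_\star$, no member of $\*H_\star$ has a subscript value fixed by a member of $\*T_\star\setminus\*H_\star$.

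The factorization step invokes Thm.~\ref{thm:ctf_factorization}, taken as given, to extract $Q[\*H^i_\star](\*h^i)$ from $Q[\*H_\star](\*h)$. The recursive call has strictly smaller input $\*H^i_\star\subset\*T_\star$, and its hypotheses hold: $\*V(\*H^i_\star)$ is a c-component, and each variable still appears once. So soundness follows by induction.

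\textbf{Completeness.} I must show that whenever the algorithm returns FAIL, $Q[\*C_\star](\*c)$ is not identifiable from $Q[\*T_\star](\*t)$.

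Failure happens only at Line 6 of some recursive call, with inputs $Q[\*C_\star](\*c)$ and $Q[\*T'_\star](\*t')$, where $\*T'_\star$ is the last set reached. First I argue that non-identifiability from $Q[\*T'_\star]$ lifts to non-identifiability from $Q[\*T_\star]$. The pair of SCMs I build will agree on the whole of $Q[\*T_\star]$, not just on $Q[\*T'_\star]$. To arrange this, I let the variables of $\*T_\star\setminus\*T'_\star$ behave identically in both models, say uniform noise independent of everything else. By the c-component factorization, the extra factors are then equal in both models, so agreement propagates upward. I also note that $\*T'_\star$ is a single c-component in $\mathcal{G}[\*V(\*T'_\star)]$ that contains $\*C_\star$, and that at the failure point $\*H_\star=\*T'_\star$.

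The core step is to exhibit a ctf-hedge inside $\*T'_\star$ rooted in $\*C_\star$, so that Lemma~\ref{lem:ctfhedge_nonid} applies.
\begin{enumerate}
\item Because $\*H_\star=\*T'_\star$ is the smallest closed set containing $\*C_\star$, every term of $\*T'_\star\setminus\*C_\star$ lies on a "subscript chain" ending in $\*C_\star$: its value appears in the subscript of some term that is itself already forced into $\*H_\star$.
\item For each such term I choose one child along this chain. This yields a directed forest in which every vertex has at most one child, with the roots in $\*C$ (condition iv).
\item By construction every non-root term's value equals its child's subscript value, which is exactly the hedge's value-chain condition.
\item Within the c-component $\*T'$ I pick a spanning tree of bidirected edges (condition ii) and restrict to the induced subgraph. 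Condition iii holds because every vertex is an ancestor of $\*C$ via the chosen edges.
\item Condition i holds because each variable appears at most once, and $\*T'_\star\neq\*C_\star$ holds because the first branch was not taken.
\end{enumerate}

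Lemma~\ref{lem:ctfhedge_nonid} then gives two models that agree on $Q[\*T'_\star]$ but differ on $Q[\*C_\star]$. These are models over the subgraph, which I extend to the rest of $\mathcal{G}$ as in the lifting step above.

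\textbf{Main obstacle.} I expect the delicate part to be the extension step. The hedge lemma uses only a minimal spanning-tree subgraph, while $\mathcal{G}[\*T']$ may contain additional directed and bidirected edges. I would handle this by setting the extra mechanisms to ignore the extra parents and the extra latents, which keeps the models compatible with $\mathcal{G}$. I would then check that both the agreement and the disagreement survive: the probability of the fixed-subscript events is unchanged when the omitted edges are made inert.
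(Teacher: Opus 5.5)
Your proposal is correct and follows essentially the paper's own argument: soundness from marginalization (valid because no term of $\mathbf{H}_\star$ has a subscript tied to a summed-out value) together with Thm.~\ref{thm:ctf_factorization}, and completeness by reading the Line-6 failure as a ctf-hedge (value chains from the minimality of $\mathbf{H}_\star$, a bidirected spanning tree, at most one child per vertex by making extra edges inert) and then invoking Lemma~\ref{lem:ctfhedge_nonid}. You also explicitly lift the witness pair from the failing recursive call back to the original $Q[\mathbf{T}_\star](\mathbf{t})$ by letting $\mathbf{T}_\star\setminus\mathbf{T}'_\star$ be independent noise, which fills in a step the paper leaves implicit; note only that the product form there comes from independence in your constructed models, not from c-component factorization, since $\mathbf{V}(\mathbf{T}_\star)$ is a single c-component of $\mathcal{G}$.
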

\vspace{-0.1in}
\begin{proof}[Proof sketch]
    $\textsc{identify}^+$ returns a valid expression, and only FAILS when it detects a ctf-hedge. See App. \ref{app:proofs_for_id}.
\end{proof}

\vspace{-0.1in}

This sub-routine forms a key component in our next contribution: the $\textsc{ctfIDu}^+$ algorithm (Alg. \ref{alg:ctfidu_plus}). $\textsc{ctfIDu}^+$ takes as input a graph $\mathcal{G}$, a counterfactual query $Q$, and a set of available distributions (including possibly {counterfactual} data) indexed by $\mathbb{A}$, and computes $Q$ iff it is identifiable from the input data. Naturally, $\textsc{ctfIDu}^+$ thus subsumes the previous identification algorithms in Table \ref{tab:alg_comparison}.


\begin{algorithm}[t!]
        \caption{$\textsc{ctfIDu}^+$} \label{alg:ctfidu_plus}
        \begin{algorithmic}[1]
        
        \STATE {\bfseries Input:} Causal diagram $\mathcal{G}$ over variables $\*V$; un-nested counterfactual query $P(\*Y_\star = \*y)$ involving variables in $\*V$; input distribution specifications $\mathbb{A}$

        \smallskip
        \STATE {\bfseries Output:} Expression for $P(\*Y_\star = \*y)$, in terms of input distributions; or \textbf{FAIL} if not identifiable from $\langle \mathcal{G}, \mathbb{A} \rangle$

        \medskip
        \STATE Let $\*Y_\star \gets ||\*Y_\star ||$, by Lem. \ref{lem:exclusion}

        \medskip
        \IF{$\exists y_{\*x},y'_{\*x} \in \*y_\star$ or $y'_y \in \*y_\star$, s.t. $y\neq y'$}
            \STATE Return 0 (trivially impossible)
        \ENDIF

        \medskip
        \STATE Let $\*W_\star = An(\*Y_\star)$ (Def. \ref{def:ctf_ancestors})
        \STATE Let $P(\*W_\star=\*w) \gets P(\*W_\star=\*w)$ after applying the ancestral set transformation, or AST, to it (Thm. \ref{thm:ast})
        \STATE Let $\*C^1_\star,...,\*C^k_\star$ be a partition of $\*W_\star$ s.t. each $\*V(\*C^j_\star)$ forms a c-component in $\mathcal{G}[\*V(\*W_\star)]$

        \medskip
        \FOR{each $Q[\*C^j_\star](\*c^j)$ and $\mathcal{A} \in \mathbb{A}$}
            \smallskip

            \smallskip
            \STATE  $P(\*T_\star=\*t) \gets $ $\textsc{regime-regex}$($\mathcal{G},\mathcal{A}$), Alg. \ref{alg:regime_regex}
            \STATE Let $P(\*T_\star=\*w) \gets P(\*T_\star=\*w)$ after applying the ancestral set transformation to it (Thm. \ref{thm:ast})
            \STATE Let $\*T^1_\star,...,\*T^m_\star$ be a partition of $\*T_\star$ s.t. each $\*V(\*T^i_\star)$ forms a c-component in $\mathcal{G}$
            \STATE Compute each $Q[\*T^i_\star](\*t^i)$ from $P(\*T_\star=\*t)$ using Thm. \ref{thm:ctf_factorization}

            \smallskip
            \IF{there exists some set $\*T^i_\star$ s.t. $\*c^j_\star \subseteq$ $\*t^i_\star$ and $\textsc{identify}^+$($\mathcal{G},\*C^j_\star, Q[\*T^i_\star](\*t^i))$ does not \textbf{FAIL}}
                \STATE $Q[\*C^j_\star](\*c^j)  \gets  \textsc{identify}^+$$\bigg(\mathcal{G},Q[\*C^j_\star], Q[\*T^i_\star]\bigg)$
            \ENDIF
        \medskip    
        \ENDFOR

    \medskip
    \IF{some $Q[\*C^j_\star](\*c^j)$ was not identified from $\mathbb{A}$}
        \STATE Return \textbf{FAIL}
    \ENDIF

    \medskip
    \STATE Return $P(\*Y_\star = \*y) \gets \sum_{\*w \setminus \*y} \prod_j Q[\*C^j_\star](\*c^j)$
    \end{algorithmic}
    \vspace{-0.cm}
\end{algorithm}

The algorithm works as follows: (a) we first remove redundant subscripts from the input query (Line 3-4); (b) we then expand the query into its ancestral set (Line 7); and (c) factorize this expression into smaller ctf-factors which are necessary and sufficient for identifying the query (Line 9); (d) we then process each input distribution (Line 11-14) and run the $\textsc{identify}^+$ sub-routine using input ctf-factors (Line 16) to try and identify each target ctf-factor. If all the target terms are identified, these are combined into the final value (Line 22), or the algorithm FAILS (Line 20). We summarize these as Steps (i) to (viii) again in App. \ref{app:steps_to_id}. 

\begin{theorem}[$\textsc{ctfIDu}^+$ soundness and completeness] \label{thm:ctfidu_completeness} Given an un-nested counterfactual expression $\*Y_\star$, $P(\*Y_\star = \*y)$ is identifiable from a causal diagram $\mathcal{G}$ and a set of input distributions  $\mathbb{A}$, iff $\textsc{ctfIDu}^+$ returns an expression for it.
\end{theorem}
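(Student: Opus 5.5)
Soundness and completeness are handled separately, with most of the work in completeness.

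\emph{Soundness.} Each transformation in $\textsc{ctfIDu}^+$ is an identity valid in every SCM inducing $\mathcal{G}$:
\begin{itemize}
\item removing redundant subscripts (Lem.~\ref{lem:exclusion}) and returning $0$ for inconsistent events;
\item the ancestral set transformation (Thm.~\ref{thm:ast}), which rewrites $P(\*Y_\star=\*y)$ as $\sum_{\*w\setminus\*y}P(\*W_\star=\*w)$;
\item the ctf-factorization (Thm.~\ref{thm:ctf_factorization}), which gives $P(\*W_\star=\*w)=\prod_j Q[\*C^j_\star](\*c^j)$ over the c-components of $\mathcal{G}[\*V(\*W_\star)]$.
\end{itemize}
On the input side, $\textsc{regime-regex}$ correctly reads off the counterfactual distribution $P(\*T_\star=\*t)$ sampled under each action set $\mathcal{A}$. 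Applying Thm.~\ref{thm:ctf_factorization} to it yields the factors $Q[\*T^i_\star](\*t^i)$ as functions of data. By the ``if'' direction of Lem.~\ref{lem:stepviii}, each expression returned by $\textsc{identify}^+$ is valid. The final sum-product is therefore a function of $\mathbb{A}$ that agrees with the query in every compatible model.

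\emph{Completeness.} I argue by contraposition: if the algorithm FAILs, I construct two SCMs compatible with $\mathcal{G}$ that agree on every input distribution indexed by $\mathbb{A}$ but disagree on $P(\*Y_\star=\*y)$. FAIL means some target factor $Q[\*C^j_\star](\*c^j)$ was not identified from any $Q[\*T^i_\star]$ of any $\mathcal{A}\in\mathbb{A}$.

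\begin{enumerate}
\item \emph{Reduce to a single factor.} I show that identifiability of the query implies identifiability of each ctf-factor $Q[\*C^j_\star](\*c^j)$. The route is the standard ctfID-style argument: positivity plus the factorization lets one build models that coincide on all factors except $Q[\*C^j_\star]$ and on all other c-components. Since the set $\*W_\star$ is minimal after Lem.~\ref{lem:exclusion} and the AST, a perturbation of $Q[\*C^j_\star]$ cannot be absorbed by the marginalization over $\*w\setminus\*y$. To ensure this, I choose perturbations that preserve the marginals over the summed variables, e.g.\ by making the extra structure depend only on bits of the $\*Y_\star$ values.
\item \emph{Classify the failing data factors.} For that failing $\*C^j_\star$, every input factor $Q[\*T^i_\star]$ falls into one of two cases. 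Either $\*c^j_\star\not\subseteq\*t^i_\star$, so the target variables or values do not appear jointly in that c-component; or $\textsc{identify}^+$ fails on it. In the second case, by the proof of Lem.~\ref{lem:stepviii}, it fails exactly when a ctf-hedge rooted in a superset of $\*C^j_\star$ is detected, and Lem.~\ref{lem:ctfhedge_nonid} gives a bit-encoding pair of models for that single input.
\item \emph{Glue the witnesses.} The pair must work simultaneously for all $\mathcal{A}\in\mathbb{A}$. I would use one pair of SCMs built from a single spanning tree on the c-component containing $\*V(\*C^j_\star)$. The exogenous bits are arranged as parity constraints along the tree edges, so that any regime whose joint event fails to contain the full value-pattern of $\*c^j_\star$ sees only marginals that are uniform, and hence identical across the two models. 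This covers both the ``not a subset'' case and the hedge case, and it is where the edge count of the minimum spanning tree from the proof of Lem.~\ref{lem:ctfhedge_nonid} gets reused.
\item \emph{Lift to $\mathcal{G}$.} I extend the pair from the hedge subgraph to all of $\mathcal{G}$: variables outside the component receive identical mechanisms, and variables inside that are not in the forest copy their parents in the standard way. This preserves agreement on the inputs while keeping disagreement on $Q[\*C^j_\star]$. Step~1 then propagates the disagreement to $P(\*Y_\star=\*y)$.
\end{enumerate}

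\emph{Main obstacle.} The hardest step is step~3: producing a \emph{single} pair of models that defeats \emph{all} input regimes at once. The different $\mathcal{A}$ yield different $\*T_\star$, with different subscripts on the same variables and possibly different hedges. My first attempt is to choose the witness for the most informative regime, meaning the one whose $\*T^i_\star$ contains $\*c^j_\star$ with the smallest hedge. I would then show that the parity construction is invariant for every other regime, because each of them either misses a required value in the value chain or lacks a needed term.
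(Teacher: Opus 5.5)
Your soundness argument and your completeness skeleton match the paper's: reduce to one failing ctf-factor $Q[\mathbf{C}^j_\star]$, then exhibit two models that agree on every regime in $\mathbb{A}$. The genuine gap is step~3, and the resolution you sketch there fails.

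Take $B \to C \leftarrow D$ with $B \leftrightarrow C$ and $D \leftrightarrow C$, the query $P(C_{bd}=c)$, and $\mathbb{A}=\{\{\textit{rand}(D)\},\{\textit{rand}(B)\}\}$. The data are then $P(B, C_{Bd})$ and $P(D, C_{bD})$. Each regime yields a ctf-hedge rooted in $C_{bd}=c$, namely $\{B=b, C_{bd}=c\}$ and $\{D=d, C_{bd}=c\}$, so $\textsc{ctfIDu}^+$ fails, and neither hedge is ``more informative.''

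The hedge-1 witness ($C=B\oplus U_{BC}$ versus $C=0$, both ignoring $D$ and $U_{DC}$) is exposed by the second regime. Since $C$ ignores $D$, the quantity $\sum_{d''}P(D=d'', C_{bd''}=c)$ computed from that regime is exactly the target, on which the two models were built to disagree. The same happens symmetrically for hedge 2.

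Your stated reason for invariance runs backwards. A regime that misses a value of the chain is one that breaks the chain. Breaking the chain is precisely what makes the parity of $\mathbf{C}$ non-degenerate in one model while it stays even in the other. Your uniform-marginal device only handles regimes with no component containing $\mathbf{C}^j_\star$; the hedge regimes do contain the full value pattern of $\mathbf{c}^j$.

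The missing idea is a single pair of models that defeats all hedges rooted in $\mathbf{C}^j_\star$ at once, i.e.\ a ctf-thicket. The paper's route is Lem.~\ref{lem:stepvii}. It uses one coordinate per regime that has a component containing $\mathbf{C}^j_\star$, each carrying that regime's hedge witness (Lem.~\ref{lem:ctfthicket_nonid}). It adds one last coordinate, built on a spanning tree over $\mathbf{C}^j_\star$, for all remaining regimes.

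If you take that route, do not make the coordinates independent, as Lem.~\ref{lem:ctfthicket_nonid} describes. In the example above, an independently built hedge-2 coordinate is exposed by the $\textit{rand}(D)$ regime: fixing $D$ makes that coordinate of $C$ uniform in one model and constant in the other. The per-hedge defects must be coupled at the root. For instance, set $B=U_{BC}$, $D=U_{DC}$, $\alpha=B\oplus U_{BC}$ and $\beta=D\oplus U_{DC}$. The models $C=\alpha\oplus\beta$ and $C=\alpha\vee\beta$ agree on $P(B,C_{Bd})$ and $P(D,C_{bD})$ for all $b,d$, because in each regime one of $\alpha,\beta$ is identically $0$. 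Yet they give $P(C_{bd}=0)=1/2$ versus $1/4$.

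A smaller point: step~1 is only gestured at. The mechanism you suggest (perturbations depending on bits of the $\mathbf{Y}_\star$ values) does not apply when the failing factor contains no variable of $\mathbf{Y}_\star$, e.g.\ when it consists only of ancestors that are summed out. The paper instead imports the ctfID lemma that marginalizing a counterfactual parent preserves non-identifiability, re-checked for realizable $\mathcal{L}_3$ inputs (Lem.~\ref{lem:necessary_ancestors_lem1}). It applies that lemma in topological order (Lem.~\ref{lem:stepiii}) and uses Lem.~\ref{lem:stepv} to pass from $P(\mathbf{W}_\star=\mathbf{w})$ to its factors.
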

\vspace{-0.15in}
\begin{proof}[Proof sketch]
    Any expression returned by $\textsc{ctfIDu}^+$ is valid. If $\textsc{ctfIDu}^+$ FAILS, this is because at least one of the necessary ctf-factors could not be identified from the input data. The failure of identification of this ctf-factor means the original query is non-identifiable from the input data. See App. \ref{app:proofs_for_id}.
\end{proof}
\vspace{-0.05in}
If the input query $P(\*Y_\star=\*y)$ involves a nested counterfactual, previous work shows how to first convert the query into an equivalent summation of un-nested terms (Step 0-i. in Fig. \ref{fig:identification_steps}) which can then individually be fed into Alg. \ref{alg:ctfidu_plus}.

We show in App. \ref{app:id_example} an example of how $\textsc{ctfIDu}^+$ correctly retrieves the classic frontdoor-adjustment formula. We also show a more involved running example, where previous methods return {FAIL}. However, $\textsc{ctfIDu}^+$ recognizes the possibility of counterfactual data-collection, and returns an expression in terms of input data. Importantly, the input data is from a different regime than the query, and identification involves a sequence of non-trivial steps (Fig. \ref{fig:example_ID_query}).  


\section{The Fundamental Limit of Identification} \label{sec:id_limits}

A natural follow-up question is how far up the PCH we can go using identification methods - are all $\mathcal{L}_3$ distributions now identifiable, in principle, when data is collected via \textit{ctf-rand()}? Unfortunately, the answer is no. Next, we proceed to characterize the fundamental limit of exact causal inference from experimental data in the non-parametric setting. 


Consider the graph in Fig. \ref{fig:id_limit}. Suppose we want estimate a counterfactual quantity like $P(y_x \mid x')$. As discussed in Sec. \ref{sec:intro}, there are two approaches one could take. Counterfactual \textit{identification} uses causal assumptions to reduce the query to a function of available data: for instance, $P(y_x \mid x') = P(y ; \doo{x})$. Counterfactual \textit{realizability} involves directly sampling from the query's distribution via physical actions: for instance, drawing samples under $\textit{ctf-rand($X \rightarrow A$)}$ to get the distribution table of $P(x', a_{x}, y_x, z_x)$, from which we can directly retrieve the query. 

\citet[~Sec. 3]{raghavan2025realizability} provided the first formal characterization of the family of $\mathcal{L}_3$ distributions which can be physically realized given the ability to perform actions like \textit{rand()} and \textit{ctf-rand()} on some or more variables. Notably, the authors showed that even if an environment permits maximal \textit{ctf-rand()} interventions (which may not always be the case), not all distributions are realizable.

\begin{figure}[t]
        \centering
        \begin{tikzpicture}

        \node (t) at (0-6,-1.7) {(a) $\mathcal{L}_{2}$};
        \node (X) at (0-6,0) {$X$};
        \node (Y) at (-0.8-6,-1) {$Y$};
        \node (Z) at (0.8-6,-1) {$Z$};
        \node[fill=black,draw,inner sep=0.2em, minimum width=0.2em] (intervention) at (-0.8-6,0) {\ };
        \path [->] (intervention) edge (X);
        \path [->] (X) edge (Y);
        \path [->] (X) edge (Z);

        \node (t) at (0-3,-1.7) {(b) $\mathcal{L}_{2.25}$};
        \node (X) at (0-3,0) {$X$};
        \node (Y) at (-0.8-3,-1) {$Y$};
        \node (Z) at (0.8-3,-1) {$Z$};
        \node[fill=black,draw,inner sep=0.2em, minimum width=0.2em] (intervention) at (0-3,-0.35) {\ };
        \path [->] (intervention) edge (Y);
        \path [->] (intervention) edge (Z);

        \node (t) at (0.,-1.7) {(c) $\mathcal{L}_{2.5}$};
        \node (X) at (0,0) {$X$};
        \node (Y) at (-0.8,-1) {$Y$};
        \node (Z) at (0.8,-1) {$Z$};
        \node[fill=black,draw,inner sep=0.2em, minimum width=0.2em] (intervention) at (-0.3,-0.3) {\ };
        \node[fill=black,draw,inner sep=0.2em, minimum width=0.2em] (intervention2) at (0.3,-0.3) {\ };
        \path [->] (intervention) edge (Y);
        \path [->] (intervention2) edge (Z);

        
    \end{tikzpicture}
    \caption{Difference in how an intervention on $X$ affects downstream variables in $\mathcal{L}_2$, $\mathcal{L}_{2.25}$, and $\mathcal{L}_{2.5}$.}
    \label{fig:l2.5_2.25}
\end{figure}
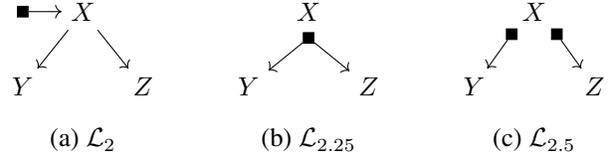
\vspace{-0.0in}

\begin{figure*}[t]
        \centering
        \begin{tikzpicture}
        
        \node[inner sep=0pt] (image_node_name) at (0,0) {
        \includegraphics[width=\textwidth]{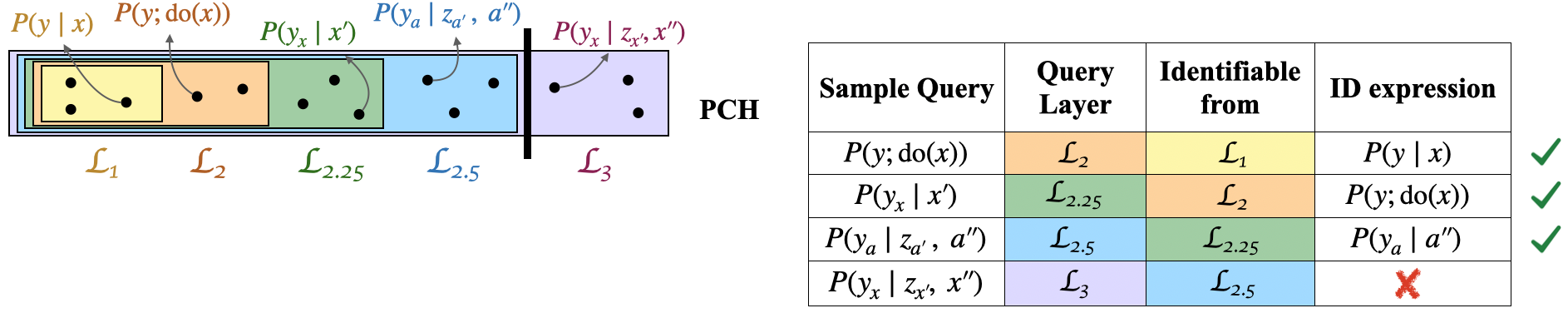}};

        \node[align=center] (t) at (-2.5,-1.2) {Causal diagram\\$\mathcal{G}$};
        \node (X) at (-6.5,-0.9-0.1) {$X$};
        \node (A) at (-5.5,-1.2-0.1) {$A$};
        \node (Z) at (-4.5,-0.8-0.1) {$Z$};
        \node (Y) at (-4.5,-1.6-0.1) {$Y$};
        \path [->] (X) edge (A);
        \path [->] (A) edge (Z);
        \path [->] (A) edge (Y);
        \path [<->] (A) edge[dashed, bend right=50] (Y);
        \path [<->] (X) edge[dashed, bend left=30] (Z);
        
        
    \end{tikzpicture}
    \caption{$\mathcal{L}_{2.5}$ marks the theoretical limit of exact causal inference in the non-parametric setting (Thm. \ref{thm:id_limits}). Every layer of the PCH contains queries that may be identifiable using data from lower layers, except $\mathcal{L}_3\setminus \mathcal{L}_{2.5}$.}
    \label{fig:id_limit}
\end{figure*}

\citet[~Defs. 11, 12]{yang2025hierarchy} subsequently introduced a fine-grained segmentation of the PCH based on the experimenter's data-collection capabilities. Specifically, in addition to the familiar $\mathcal{L}_1$, $\mathcal{L}_2$, $\mathcal{L}_3$, they define \textit{Layer 2.5} $ (\mathcal{L}_{2.5}) \subseteq \mathcal{L}_{3}$ to delineate those counterfactual distributions which are realizable, in principle, if one were able to perform every possible \textit{ctf-rand()} action. E.g., the distribution $P(y_x, z_{x'}, x'')$ w.r.t the graph in Fig. \ref{fig:l2.5_2.25}c can be realized by the two \textit{ctf-rand()} interventions shown, and thus lies in $\mathcal{L}_{2.5}$. \textit{Layer 2.25} $ (\mathcal{L}_{2.25}) \subseteq \mathcal{L}_{2.5}$ is a further refinement when \textit{ctf-rand()} capabilities are more restricted and cannot be performed in a path-specific way, such as $P(y_x, z_{x}, x')$ as in Fig. \ref{fig:l2.5_2.25}b. In contrast, $\mathcal{L}_2$ involves erasing and replacing the natural value of the intervened variable via a standard \textit{rand()} action, such as $P(y, z; \doo{x})$, shown in Fig. \ref{fig:l2.5_2.25}a. 

Interestingly, whether a quantity falls within $\mathcal{L}_{2.5}$, i.e., whether its distribution can be realized via \textit{ctf-rand()}, depends on the causal structure, and cannot always be determined from the form of the expression alone. For instance, given the graph in Fig. \ref{fig:id_limit}, the counterfactual $P(y_a \mid z_{a'}, a'')$ is realizable via \textit{ctf-rand()} and lies within $\mathcal{L}_{2.5}$. But $P(y_x \mid z_{x'}, x'')$ is not physically realizable, and so lies beyond $\mathcal{L}_{2.5}$, that is, it belongs in $\mathcal{L}_3 \setminus \mathcal{L}_{2.5}$.\footnote{\citet[~Cor. 3.7]{raghavan2025realizability} gives a simple graphical condition which detects whether a quantity lies in $\mathcal{L}_{2.5}$, which we reproduce in Thm. \ref{thm:ancestor_check} for ease of reference.}

We can now more formally rephrase the question with which we began this section: which $\mathcal{L}_3$ distributional quantities are identifiable in principle (for some graph $\mathcal{G}$), given access to some input data from $\mathcal{L}_{2.5}$? For instance, in Fig. \ref{fig:id_limit}, we can show that the $\mathcal{L}_2$ quantity $P(y; \doo{x})$ is identifiable from the $\mathcal{L}_1$ distribution $P(x, y)$. The $\mathcal{L}_{3}$ quantity $P(z_a \mid a')$ is identifiable from the $\mathcal{L}_2$ distribution $P(z, a ; \doo{x})$. What about the quantity $P(y_x | z_{x'}, x'')$, can it similarly be identified from some combination of counterfactual data? It turns out {there are no identifiable quantities in $\mathcal{L}_{3} \setminus \mathcal{L}_{2.5}$}. I.e., \textit{the limits of physical data-collection also impose a theoretical limit on which causal quantities can be point-identified in the non-parametric setting}.

\begin{theorem}[Limit of identification] \label{thm:id_limits}
    Given a query $Q$ belonging to $\mathcal{L}_i$ of the PCH and no lower layer, for every $j<i$ there exists a graph $\mathcal{G}$ s.t. $Q$ is identifiable from $\mathcal{G}$ and input data from $\mathcal{L}_j$, except for $i=3$. $\hfill$ $\blacksquare$ 
\end{theorem}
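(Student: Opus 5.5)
The proof has two parts: a constructive one for every pair $j<i$ with $i\in\{2,2.25,2.5\}$ (layers read from the refined hierarchy $\mathcal{L}_1\subseteq\mathcal{L}_2\subseteq\mathcal{L}_{2.25}\subseteq\mathcal{L}_{2.5}\subseteq\mathcal{L}_3$), and an impossibility one for $i=3$, meaning queries in $\mathcal{L}_3\setminus\mathcal{L}_{2.5}$.

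\emph{Constructive part.} Given any query $Q$ whose lowest layer is $\mathcal{L}_i$, $i<3$, we pick the simplest graph that makes it identifiable. We take $\mathcal{G}$ to have the same variables as $Q$, with no bidirected edges, and with directed edges only as required by the subscripts appearing in $Q$ (each intervened variable a parent of the variables whose subscript contains it). In a Markovian model every potential response $V_{i_{[\*{pa}_i]}}$ is a function of its own independent $U_i$. After exclusion (Lem. \ref{lem:exclusion}) and the ancestral set transformation (Thm. \ref{thm:ast}), $Q$ therefore factorizes into singleton ctf-factors $P(V_{i_{[\*{pa}_i]}}=v_i)$. Each such factor equals $P(v_i\mid \*{pa}_i)$, an $\mathcal{L}_1$ quantity, because $V_i$ has no confounding. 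Hence $Q$ is identifiable from $\mathcal{L}_1$ data, and a fortiori from any $\mathcal{L}_j$ with $j<i$, since higher layers subsume lower ones. Formally, $\textsc{ctfIDu}^+$ run with $\mathbb{A}=\{\emptyset\}$ returns an expression: each c-component is a singleton, so $\textsc{identify}^+$ reaches Line 5 and never FAILs, and Thm. \ref{thm:ctfidu_completeness} certifies the result. The examples in the text ($P(y;\doo{x})$ from $P(x,y)$ and $P(z_a\mid a')$ from $\mathcal{L}_2$) illustrate this.

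There is a wrinkle. If $Q$ contains a term like $y_x$ together with the natural $x'$ and $y$ terms, the singleton-factor argument needs the inconsistent-value terms to collapse after the AST. For un-nested queries in a Markovian graph, the standard result that every counterfactual is identified from $P(\*v)$ without confounding covers this. Nested queries are reduced to un-nested sums first.

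\emph{Impossibility part.} This is the main obstacle. Let $Q\in\mathcal{L}_3\setminus\mathcal{L}_{2.5}$ in some graph $\mathcal{G}$. By Thm. \ref{thm:ancestor_check}, non-realizability means that after the AST the ancestral set $\*W_\star=An(\*Y_\star)$ contains some variable $V$ appearing twice with incompatible parent subscripts, $V_{[\*{pa}]}$ and $V_{[\*{pa}']}$, with $\*{pa}\neq\*{pa}'$. Every realizable input distribution in $\mathcal{L}_{2.5}$ (indeed anything obtainable by \textsc{regime-regex}) contains, for each variable, at most one potential response per unit, so each input ctf-factor $Q[\*T^i_\star]$ has each $V$ at most once.

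The ctf-factor $Q[\*C^j_\star]$ of $\*W_\star$ that contains both copies of $V$ therefore satisfies $\*c^j_\star\not\subseteq\*t^i_\star$ for every input factor. The identification condition of Line 15 of Alg. \ref{alg:ctfidu_plus} can never hold for it, so $\textsc{ctfIDu}^+$ FAILs. By Thm. \ref{thm:ctfidu_completeness}, $Q$ is then not identifiable from any combination of $\mathcal{L}_{2.5}$ data in any graph.

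The delicate step is showing that the two conflicting copies really end up in one necessary ctf-factor that no input covers. One must check that exclusion and the AST cannot eliminate the conflict: if they did, $Q$ would be realizable, contradicting $Q\notin\mathcal{L}_{2.5}$ via the ancestor-check characterization. One must also check that completeness of $\textsc{ctfIDu}^+$ applies to arbitrary sets $\mathbb{A}$, including maximal ctf-rand regimes. Once that is done, the case $i=3$ is excluded, as the theorem states.
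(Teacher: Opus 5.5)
Your impossibility argument for $i=3$ is the paper's argument: the conflicting pair $W_{\mathbf t},W_{\mathbf s}$ falls into one c-component factor, Lem.~\ref{lem:no_input_with_repeats} excludes any $\mathcal{L}_{2.5}$ input factor containing both, so \textsc{ctfIDu}$^+$ fails, and completeness gives non-identifiability. The gap is in your constructive part, where you build the witness graph from the subscripts of $Q$ alone. Membership in $\mathcal{L}_{2.25}$ and $\mathcal{L}_{2.5}$ is a property of $Q$ together with its graph (Thm.~\ref{thm:ancestor_check}). Replacing the directed structure can therefore move $Q$ to a different layer, and the new graph then no longer witnesses the statement.

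For example, take $X\to Y$, $X\to Z$. The query $P(Y_x=y,Z=z)$ is in $\mathcal{L}_{2.5}$ but not in $\mathcal{L}_2$, since it needs the path-specific \textit{ctf-rand}$(X\to Y)$. In your graph $Z$ has no parents, exclusion gives $Z=Z_x$, and $Q$ collapses to the $\mathcal{L}_2$ quantity $P(y,z;\mathrm{do}(x))$.

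Worse, apply the same recipe to the chain $X\to A\to Y$ and $P(Y_x=y,A=a)$. This query is in $\mathcal{L}_3\setminus\mathcal{L}_{2.5}$, because its ancestor set contains both $A_x$ and $A$. Your recipe yields $X\to Y$ with $A$ isolated, where $Q=P(y\mid x)P(a)$ is identified from $\mathcal{L}_1$ data. So under the free choice of graph your constructive half allows, the $i=3$ exception is false, and your two halves rest on incompatible readings of the theorem. Your conclusion ``not identifiable \dots in any graph'' likewise has to be restricted to graphs in which $Q$ stays outside $\mathcal{L}_{2.5}$.

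The paper avoids this by keeping the directed edges of the given $\mathcal{G}$ and deleting only the bidirected ones. Counterfactual ancestors and the layer definitions depend only on directed edges, so $Q$ stays in exactly its layer. Your subscript-based graph is what the paper uses only for $\mathcal{L}_2$ queries, whose membership is syntactic.

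Your resolution of the ``wrinkle'' also rests on a false claim. It is not true that every un-nested counterfactual is identified from $P(\mathbf{v})$ in a Markovian model. In the confounder-free graph $X\to Y$, $P(y_x,y'_{x'})$ and $P(y_x\mid x',y')$ are only partially identified; these are the probabilities of causation of Sec.~\ref{sec:partial_id}. Your own example, a $y_x$ term together with natural $x'$ and $y'$, has both $Y_x$ and $Y$ in its ancestor set. It is therefore an $\mathcal{L}_3\setminus\mathcal{L}_{2.5}$ query, and calling it identifiable contradicts the exception you are proving.

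The case cannot arise in the constructive part once the directed structure is kept. By Thm.~\ref{thm:ancestor_check}, $An(\mathbf{Y}_\star)$ contains at most one potential response per variable. After the AST, and with no bidirected edges, every c-component factor is then a single $V_{\mathbf{pa}_v}$, and $P(V_{\mathbf{pa}_v}=v)=P(v\mid \mathbf{pa}_v)$ by Rule 2. This is exactly the paper's argument, and no collapse of inconsistent terms is needed.
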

\vspace{-0.05in}

Perhaps surprisingly, this result means that there are $\mathcal{L}_2$ queries identifiable from  $\mathcal{L}_1$ data, $\mathcal{L}_{2.25}$ queries identifiable from  $\mathcal{L}_2$ data, and $\mathcal{L}_{2.5}$ queries identifiable from  $\mathcal{L}_{2.25}$ data, but no purely-$\mathcal{L}_3$ queries  identifiable from  $\mathcal{L}_{2.5}$ data. E.g., in Fig. \ref{fig:id_limit}, $P(y_x | z_{x'}, x'')$ is fundamentally non-identifiable even from other realizable counterfactual data. 

This barrier has considerable practical implications. E.g., take the $\mathcal{L}_3$ quantity known as the \textit{natural total effect}, or NTE \citep[~Def. 2]{leekz:nte2025}. While the details are out of scope, NTE is an important  tool in the field of \textit{explainable AI} (XAI). The $\*e$-specific NTE is defined as $\text{NTE}(\*X, Y \mid \*e) =$
\begin{align}
    \mathbb{E}_{\*u\sim P (\*U\mid \*e),\*u' \sim P (\*U)} [Y_{\*X(\*u)}(\*u) - Y_{\*X(\*u')}(\*u)] \label{eq:nte}
\end{align}

The first term $\mathbb{E}[Y_{\*X(\*u)}(\*u)]$ works out to the expected observational outcome $\mathbb{E}[Y]$. For a sub-population observed to have $\*E = \*e$ ($\*E \subseteq \*V$), the second term captures how the outcome would be affected if $\*X$ were fixed by re-sampling from the observational distribution $P(\*X)$. This difference intuitively summarizes an explanation of how $\*X$ affected an outcome $Y=y$ \citep[see][~Sec. 6.2.2.1]{bareinboim:textbook}.

For the example in Fig. \ref{fig:ex1}, setting $\*X = X$ and $\*e = (x',y')$, the second term in Eq. \ref{eq:nte} works out to
\begin{align}
    &\mathbb{E}_{\*u\sim P (\*U\mid x',y'),\*u' \sim P (\*U)} [Y_{X(\*u')}(\*u)] \nonumber \\
    &= \sum_{ y,\*u'}y.P(Y_{X(\*u')} = y \mid x',y')P(\*u') \\
    &= \sum_{y, x}y.P(y_x \mid x',y')P(x) \label{eq:nte_pnps}
\end{align}
\vspace{-0.1in}

$P(y_x \mid x',y')$ is one of the famed \textit{probabilities of causation} \citep{pearl:99c}. It can be shown that this quantity belongs to $\mathcal{L}_{3} \setminus \mathcal{L}_{2.5}$. So, by Thm. \ref{thm:id_limits}, NTE cannot be identified, even with sophisticated counterfactual experimental capabilities - a relevant finding for the XAI community. 
Further, Thm. \ref{thm:id_limits} points to a foundational connection between the seemingly orthogonal notions of counterfactual realizability and counterfactual identification. 

\begin{corollary}[Id - realizability duality (informal)] \label{cor:id_connection_informal}
    A query $Q$ is identifiable from experimental and observational data and graph $\mathcal{G}$, if and only if it is realizable, in principle, using \textit{ctf-rand()} actions. $\hfill$ $\blacksquare$
\end{corollary}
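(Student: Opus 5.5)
Both directions of the duality follow from Thm. \ref{thm:id_limits}, read against the completeness of $\textsc{ctfIDu}^+$ (Thm. \ref{thm:ctfidu_completeness}). Throughout, ``experimental and observational data'' means the union of all distributions obtainable by observation, \textit{rand()} and \textit{ctf-rand()}. By the definition of \citet{yang2025hierarchy}, this union is exactly $\mathcal{L}_{2.5}$ (which contains $\mathcal{L}_1$, $\mathcal{L}_2$ and $\mathcal{L}_{2.25}$).

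\emph{(If.)} Suppose $Q$ is realizable in principle via \textit{ctf-rand()} actions. Then some action set $\mathcal{A}$ yields a distribution $P(\*T_\star)$ from which $Q$ is read off directly, by marginalizing or conditioning. Including $\mathcal{A}$ in $\mathbb{A}$ makes $Q$ trivially identifiable. The one point to check is that $\textsc{ctfIDu}^+$ indeed returns it. Each ctf-factor $Q[\*C^j_\star]$ of $An(\*Y_\star)$ is contained in some c-component factor $Q[\*T^i_\star]$ of the regime returned by $\textsc{regime-regex}(\mathcal{G},\mathcal{A})$. Moreover, $\*T_\star$ is closed under ancestors, because the physical regime generates all ancestors consistently. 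So Line 3 of $\textsc{identify}^+$ yields $\*H_\star=\*C^j_\star$, and we land in the marginalization branch. I would verify this using the ancestral check of Thm. \ref{thm:ancestor_check}.

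\emph{(Only if.)} Suppose $Q$ is identifiable from $\mathcal{L}_{2.5}$ data and $\mathcal{G}$, and suppose for contradiction that $Q\in\mathcal{L}_3\setminus\mathcal{L}_{2.5}$ with respect to $\mathcal{G}$. By Thm. \ref{thm:ctfidu_completeness}, $\textsc{ctfIDu}^+$ succeeds, so every ctf-factor $Q[\*C^j_\star](\*c^j)$ of the ancestral expansion is identified by $\textsc{identify}^+$ from some realizable factor $Q[\*T^i_\star]$. I then argue that each such factor is realizable. Each step of $\textsc{identify}^+$, namely marginalizing over a set with no outgoing subscript dependence (Line 3) and extracting a c-component factor via Thm. \ref{thm:ctf_factorization}, maps realizable factors to realizable factors. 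Both steps correspond to ignoring variables, or to a sub-regime, in the same physical experiment.

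It remains to show that the product $\prod_j Q[\*C^j_\star]$ and its marginal are realizable. Here I would combine the action sets of the different regimes into one regime. This is possible because the c-components are disjoint, so the \textit{ctf-rand()} actions on edges into different c-components do not interfere. A sample from the joint regime then factorizes as the same product by Thm. \ref{thm:ctf_factorization}. This contradicts $Q\notin\mathcal{L}_{2.5}$, which is exactly the content of the $i=3$ exception in Thm. \ref{thm:id_limits}; I would cite it directly if its proof already establishes this.

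\emph{Main obstacle.} The hardest step is the merging argument. The subscripts in different $\*C^j_\star$ may assign conflicting perceived values to the same parent, for instance $Y_x$ and $Z_{x'}$ with $X$ natural. I must show that the ancestor-consistency condition of Thm. \ref{thm:ancestor_check} holds for the union. If it fails, then two variables perceive conflicting values of a common ancestor that is itself not directly randomizable in a path-specific way. In that case I expect to extract a ctf-hedge (Def. \ref{def:ctf_hedge}) inside some factor, contradicting identifiability by Lem. \ref{lem:ctfhedge_nonid}. This reduction from a realizability failure to a hedge is where I expect most of the work.
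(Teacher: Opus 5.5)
Your \emph{if} direction matches the paper: the realizing regime is itself available data, so $Q$ is identifiable by definition. The extra Line-3 check is unnecessary, and it is not quite right. For example, take $P\to C$ and $P\leftrightarrow D\leftrightarrow C$, with query $P(p,c)$ and observational input. Then $P$ is forced into $\*H_\star$ because $p$ sits in the subscript of $C_p$, so $\*H_\star=\{P,C_p\}\neq\*C^j_\star$. The recursion still succeeds, as completeness guarantees.

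The genuine gap is in your \emph{only if} direction. The merging step you single out is left as an expectation, and it targets an obstruction that does not exist. Conflicting perceived values at \emph{different} children, like $Y_x$ and $Z_{x'}$ with $X$ natural, are realizable (Fig.~\ref{fig:l2.5_2.25}c). By Thm.~\ref{thm:ancestor_check}, non-realizability means precisely that a single variable $W$ occurs twice in $An(\*Y_\star)$, as $W_{\*t},W_{\*s}$ with $\*t\neq\*s$. Nor can a ctf-hedge be the witness you extract. A ctf-hedge contains each variable at most once (Def.~\ref{def:ctf_forest}(i)), and under your hypothesis that $\textsc{ctfIDu}^+$ succeeded, no $\textsc{identify}^+$ call failed anyway.

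The missing idea is a one-line containment argument. It is the entire proof of the $i=3$ case of Thm.~\ref{thm:id_limits}, which the paper's proof of this corollary simply invokes. Since $W_{\*t}$ and $W_{\*s}$ are the same variable, they lie in the same factor $\*C^j_\star$ after Line 9. But every input factor $\*T^i_\star$ from an $\mathcal{L}_{2.5}$ regime has at most one potential response per variable (Lem.~\ref{lem:no_input_with_repeats}). Hence $\*C^j_\star\not\subseteq\*T^i_\star$ for every regime and every $i$, so the containment test guarding the $\textsc{identify}^+$ call (Lines 15--16) never passes and $\textsc{ctfIDu}^+$ returns FAIL. Thm.~\ref{thm:ctfidu_completeness} then gives non-identifiability, via case (a) of its proof, i.e.\ Lem.~\ref{lem:stepvii}, not via Lem.~\ref{lem:ctfhedge_nonid}.

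Your own sentence that each $Q[\*C^j_\star]$ ``is identified \dots\ from some realizable factor $Q[\*T^i_\star]$'' already presupposes this impossible containment. So neither realizability-preservation of $\textsc{identify}^+$ steps nor merging of regimes is needed. Your fallback of citing Thm.~\ref{thm:id_limits} directly is exactly the paper's route.
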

The key insight of this duality is that \textit{non-parametric identification of any causal quantity is essentially trying to mimic realizability}: any identifiable query should be answerable by sampling from a regime where we can, in principle, jointly observe each variable under randomized interventions of its parents (i.e., a \textit{ctf-rand()} for every graph edge). This marks the limit of our data-collection capabilities, so if a query is still not realizable even in principle, such as $P(y_x \mid z_x, x'')$ in Fig. \ref{fig:id_limit}, it means this query involves some confounding that cannot be disambiguated with any experimental data.

For the interested reader, we present in App. \ref{app:causal_lattice} an intuition for this interplay using a \textit{causal lattice} consisting of all \textit{combinations of the ctf-factors} generated from a realizable input distribution, and show how the "level of inconsistency" at bottleneck nodes limits the PCH level of higher-order lattice combinations. This perspective could inform future research into algorithm- and experiment-design for computing higher-order counterfactuals like NTE, such as by incorporating stronger assumptions to overcome bottlenecks along identification pathways.




\section{Partial Identification using Ctf-Data} \label{sec:partial_id}

In this section, we show that even when a quantity is non-identifiable, the possibility of accessing counterfactual data through \textit{ctf-rand()} can be used to derive provably tighter bounds for the \textit{range} this quantity can take.

Sec. \ref{sec:id_limits} discussed the task of \textit{point identification}, and concluded that causal quantities beyond $\mathcal{L}_{2.5}$, such as the NTE, are non-identifiable from physically realizable data, in the non-parametric setting. Next, we discuss the \textit{partial identification} of such causal quantities: given a causal diagram, we seek to use the available observational/experimental data to bound the {range} of possible values of this non-identifiable quantity. The bounds of this range are called \textit{tight} if it is the smallest interval s.t. there exist SCMs where the causal quantity takes the boundary values (among the space of SCMs which satisfy the causal graph and the input data constraints). A range is \textit{uninformative} if the bounds are $[0,1]$. If a quantity is exactly identifiable, the tight range is simply a point value, computable using the $\textsc{ctfIDu}^+$ algorithm.

\citet{tian:pearl00probcausation} first provided tight analytic bounds for non-identifiable counterfactuals known as the \textit{probabilities of causation}, or PCs, which include quantities like $P(y_x \mid x',y')$ and $P(y_x, y'_{x'})$, assuming binary treatment. \citet{shu2025_probcausation_complete} generalized this to bounds for PCs under non-binary treatments. These prior works all assume the input data is limited to observational or interventional distributions. It stands to reason that adding more input data using \textit{ctf-rand()} can only tighten bounds further - if the constraint set is larger, the space of SCMs that satisfy it (and thus, the range of possible values of the indentification query) is smaller.

\begin{proposition}
    Given causal diagram $\mathcal{G}$ and query $Q=P(\*y_\star)$, let $[l,r]^{\mathbb{A}} \subseteq [0,1]$ be the tight partial identification bounds for $Q$ given input data regimes $\mathbb{A}$. Then, for any $\mathbb{A}' \supset \mathbb{A}$, the bounds $[l,r]^{\mathbb{A}'} \subseteq [l,r]^{\mathbb{A}}$. $\hfill$ $\blacksquare$
\end{proposition}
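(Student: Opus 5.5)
The plan is to write the tight bounds as the infimum and supremum of the query over a feasible set of SCMs, and then show that this set can only shrink when more data regimes are added. Let $\Omega(\mathcal{G})$ be the class of SCMs that induce $\mathcal{G}$ (finite-domain, recursive, positive distributions). For a fixed collection of observed input distributions $\{P^{*}_{\mathcal{A}}\}_{\mathcal{A}\in\mathbb{A}}$, define the feasible set
\[
\mathcal{S}(\mathbb{A}) = \{\mathcal{M}\in\Omega(\mathcal{G}) : P^{\mathcal{M}}_{\mathcal{A}} = P^{*}_{\mathcal{A}} \ \text{for all } \mathcal{A}\in\mathbb{A}\}.
\]
Here $P^{\mathcal{M}}_{\mathcal{A}}$ is the distribution that $\mathcal{M}$ induces under the action set $\mathcal{A}$, i.e., the $\mathcal{L}_3$ valuation of the counterfactual set returned by $\textsc{regime-regex}(\mathcal{G},\mathcal{A})$. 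By the definition of tightness given above,
\[
l^{\mathbb{A}}=\inf_{\mathcal{M}\in\mathcal{S}(\mathbb{A})}P^{\mathcal{M}}(\*y_\star), \qquad r^{\mathbb{A}}=\sup_{\mathcal{M}\in\mathcal{S}(\mathbb{A})}P^{\mathcal{M}}(\*y_\star).
\]
If the boundary values are actually attained, these are a minimum and a maximum; the argument below does not depend on attainment.

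The key step is monotonicity of the feasible set. Suppose $\mathbb{A}'\supset\mathbb{A}$, and assume the data for $\mathbb{A}'$ are generated by the same true environment, so that $P^{*}_{\mathcal{A}}$ agrees on the common regimes. Then every constraint that defines $\mathcal{S}(\mathbb{A})$ also appears among those defining $\mathcal{S}(\mathbb{A}')$, and hence $\mathcal{S}(\mathbb{A}')\subseteq\mathcal{S}(\mathbb{A})$. Moreover, $\mathcal{S}(\mathbb{A}')$ is nonempty because it contains the true SCM.

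The conclusion then follows from elementary properties of infima and suprema. Taking the infimum of a function over a smaller set cannot decrease it, and taking the supremum over a smaller set cannot increase it. Therefore $l^{\mathbb{A}'}\ge l^{\mathbb{A}}$ and $r^{\mathbb{A}'}\le r^{\mathbb{A}}$. Since the feasible set is nonempty, $l^{\mathbb{A}'}\le r^{\mathbb{A}'}$, so the interval is well defined, and $[l,r]^{\mathbb{A}'}\subseteq[l,r]^{\mathbb{A}}$.

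The only delicate point is a modeling one: making explicit that the data under $\mathbb{A}$ and under $\mathbb{A}'$ come from a common underlying SCM, so that the shared constraints coincide. Without this assumption the two feasible sets would not be nested. Once it is stated, the remainder is routine. As a consistency check, if $Q$ is identifiable from $\mathbb{A}$ (for instance via $\textsc{ctfIDu}^+$), the interval is already a single point and can only stay that point.
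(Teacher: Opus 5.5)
Your proposal is correct and follows the paper's argument: the paper also notes that the SCMs consistent with $\mathbb{A}'$ form a subset of those consistent with $\mathbb{A}$, so the range of $Q$ cannot grow. The only differences are that the paper parametrizes SCMs via canonical models rather than working with the SCM class directly, and that your remark that both data collections must come from a common underlying SCM (which also guarantees a nonempty feasible set) makes explicit an assumption the paper leaves implicit.
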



To make this concrete, consider the bow graph (Fig. \ref{fig:nte_bounds}.a) - a causal structure broadly representative of any real-world bivariate system where causation and unobserved confounding cannot be ruled out. For such environments, we next derive novel analytic bounds for NTE that are provably tighter than prior work, using realizable counterfactual data.

Specifically, suppose we want tight identification bounds for the $(x',y')$-specific NTE (Eq. \ref{eq:nte}). From Eq. \ref{eq:nte_pnps}, assuming that observational data $P(x)$ is already available, the bounds for NTE are determined by $P(y_x \mid x',y')$. Hence, we focus on deriving analytic bounds for this term. 

\begin{figure}[t]
        \centering
        \begin{tikzpicture}

        \node (t) at (0-6,-1.6) {\small (a) $\mathcal{L}_{1}$};
        \node (X) at (-0.8-6,-1) {$X$};
        \node (Y) at (0.8-6,-1) {$Y$};
        \path [->] (X) edge (Y);
        \path [<->,dashed] (X) edge[bend left=40] (Y);

        \node (t) at (0-3,-1.6) {\small (b) $\mathcal{L}_{2}$};
        \node (X) at (-0.8-3,-1) {$X$};
        \node (Y) at (0.8-3,-1) {$Y$};
        \path [->] (X) edge (Y);
        \node[fill=black,draw,inner sep=0.2em, minimum width=0.2em] (intervention) at (-0.8-3,-0.35) {\ };
        \path [->] (intervention) edge (X);
        
        \node (t) at (0,-1.6) {\small (c) $\mathcal{L}_{2.5}$};
        \node (X) at (-0.8,-1) {$X$};
        \node (Y) at (0.8,-1) {$Y$};
        \path [<->,dashed] (X) edge[bend left=40] (Y);
        \node[fill=black,draw,inner sep=0.2em, minimum width=0.2em] (intervention) at (-0.3,-1) {\ };
        \path [->] (intervention) edge (Y);

        \node[align=center] (t) at (-5.5,-2.7) {\small Bounds for NTE\\ \small using data from};    
        \node[inner sep=0pt] (image_node_name) at (-1.3,-3) {
        \includegraphics[width=0.25\textwidth]{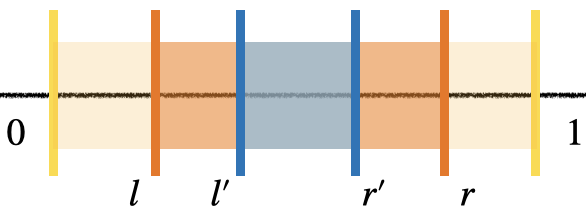}};
        \node[fill=goldenyellow,draw,inner sep=0.3em, minimum width=0.3em] (box) at (-6.75,-3.55) {\ };
        \node (t) at (-6.25,-3.55) {\small $\mathcal{L}_{1}$};

        \node[fill=orange,draw,inner sep=0.3em, minimum width=0.3em] (box) at (-5.75,-3.55) {\ };
        \node (t) at (-5.25,-3.55) {\small $\mathcal{L}_{2}$};

        \node[fill=hanblue,draw,inner sep=0.3em, minimum width=0.3em] (box) at (-4.75,-3.55) {\ };
        \node (t) at (-4.25,-3.55) {\small $\mathcal{L}_{2.5}$};
                 
    \end{tikzpicture}
    \caption{Increasingly tighter partial identification bounds for NTE using data from (a) $\mathcal{L}_1$, (b) $\mathcal{L}_2$, and (c) $\mathcal{L}_{2.5}$ regimes.}
    \label{fig:nte_bounds}
\end{figure}

If only observational data $P(\*V)$ is available, the bounds for $P(y_x \mid x',y')$ are uninformative, i.e., the range is the whole unit interval, as shown in yellow in Fig. \ref{fig:nte_bounds}.
\begin{lemma}[NTE - \textcolor{MetallicGold}{$\mathcal{L}_1$} bounds] \label{lem:nte_l1_bounds}
    Given a bow graph causal structure (Fig. \ref{fig:nte_bounds}.a) and \textcolor{MetallicGold}{observational data $P(X,Y)$}, the identification query $P(y_x \mid x',y'), x \neq x',$ is tightly bounded in the range $[0,1]$. $\hfill$ $\blacksquare$
\end{lemma}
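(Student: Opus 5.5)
To prove Lemma \ref{lem:nte_l1_bounds}, for any fixed positive observational distribution $P(X,Y)$ compatible with the bow graph, I will construct two SCMs inducing that $P(X,Y)$. In the first, $P(y_x \mid x',y') = 0$; in the second, $P(y_x \mid x',y') = 1$. Both values are then attained, and since any SCM inducing the bow graph gives a probability in $[0,1]$, the bounds are tight. The key observation is that, conditioned on $X=x'$, the potential response $Y_x$ for $x \neq x'$ is never observed. In the bow graph, the unobserved confounder can make the function $x \mapsto Y_x(\*u)$ arbitrary on units with $X(\*u)=x'$, except at the argument $x'$.

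The construction is as follows. Let $\*U = (U_X, U_Y)$ with arbitrary dependence, which the bidirected edge allows. Take $X = U_X$ with $P(U_X = x) = P(x)$. Let $U_Y$ index "response types": functions $r : \mathcal{X} \to \mathcal{Y}$, with $Y = U_Y(X)$. Given $U_X = x'$, the joint law of $U_Y$ must satisfy $P(r(x') = y \mid U_X = x') = P(y \mid x')$, which reproduces $P(X,Y)$ by consistency. The values $r(x)$ for $x \neq x'$ are otherwise unconstrained, and for other values of $U_X$ the conditional laws can be chosen independently.

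For the upper bound, for units with $U_X = x'$ and $r(x') = y'$, I choose $r(x) = y$. This gives $P(y_x \mid x', y') = 1$. For the lower bound, I instead choose $r(x) = \tilde y$ for some $\tilde y \neq y$, which gives $0$. This requires $|\mathcal{Y}| \ge 2$, which holds in any nontrivial case. Positivity of $P(x',y')$ makes the conditioning well-defined. Both models induce the bow graph: $X \to Y$ is a genuine edge, and $U_X, U_Y$ are dependent, which can be ensured by adding a small correlation if necessary. Adding that correlation does not affect the conditional computation above.

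The main obstacle is only bookkeeping: confirming that both models match $P(X,Y)$ exactly and that the query is genuinely unconstrained. This follows because $Y_x$ and the event $\{X=x', Y=y'\}$ involve the response-type function at distinct arguments, and the data restricts only the diagonal values $r(X)$.
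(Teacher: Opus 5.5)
Your proposal is correct and follows essentially the same route as the paper. The paper works with the joint table over $(X,\{Y_{x''}\}_{x''\in\mathcal{X}})$, which is exactly your response-type parametrization $(U_X,U_Y)$. It notes that $P(X,Y)$ constrains only the mass of $X=x'$ and of $Y_{x'}=y'$ within it, and then places all conditional mass of $Y_x$ given $(X=x',Y_{x'}=y')$ either on $y$ or on some $y^*\neq y$. Your check that the constructed models genuinely induce the bow graph is a detail the paper skips. In turn, the paper remarks that if positivity is imposed on all distributions, counterfactual ones included, the endpoints are only approached, giving $(0,1)$.
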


\begin{figure}[t]
        \centering
        \begin{tikzpicture}
        
        \node[inner sep=0pt] (image_node_name) at (0,1.25) {
        \includegraphics[width=0.35\textwidth]{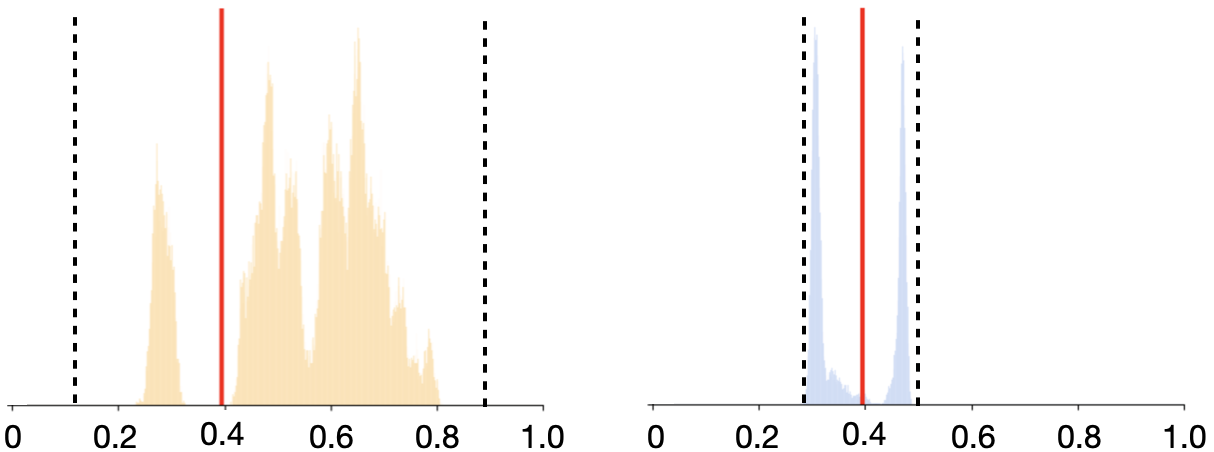}};

        \node[inner sep=0pt] (image_node_name) at (-.,-1.2) {
        \includegraphics[width=0.35\textwidth]{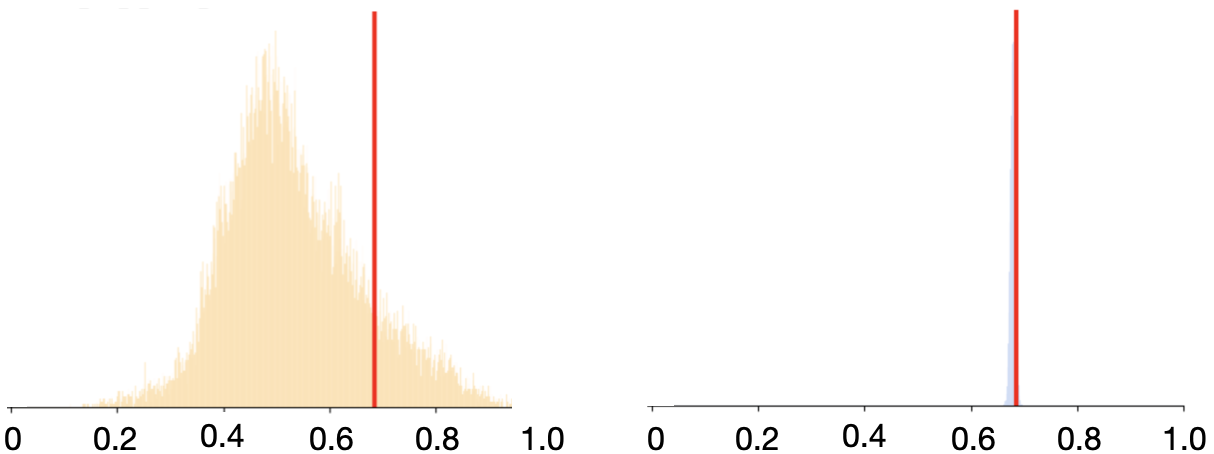}};
        
        \node[align=center] (t) at (-4,1.5) {\small Bounds for\\ \\ \small NTE:\\\small $P(y_x \mid x',y')$ };

        \node[align=center] (t) at (-4,-1.2) {\small NDE:\\\small $P(y_{xZ_{x'}})$ };
        
        \node[fill=BrickRed,draw,inner sep=0.2em, minimum width=0.2em] (box) at (0.5-0.5,0.0-0.25) {\ };
        \node (t) at (0.9-0.5,0.0-0.25) {\tiny truth};

        \node[fill=orange,draw,inner sep=0.2em, minimum width=0.2em] (box) at (0.5-0.5,-0.4-0.25) {\ };
        \node (t) at (1.1-0.5,-0.4-0.25) {\tiny $\mathcal{L}_2$ range};

        \node[fill=hanblue,draw,inner sep=0.2em, minimum width=0.2em] (box) at (0.5-0.5,-0.8-0.25) {\ };
        \node (t) at (1.15-0.5,-0.8-0.25) {\tiny $\mathcal{L}_{2.5}$ range};

        \draw[very thick, dotted] (0.3-0.5,-1.2-0.25) -- (0.6-0.5,-1.2-0.25); 
        \node (t) at (1.1-0.5,-1.2-0.25) {\tiny analytic};
        
    \end{tikzpicture}
    \caption{Example 2: partial identification bounds for NTE and NDE quantities. A density plot of values is generated by sampling from a Bayesian posterior over SCMs, given synthetic input data. The end-points of the range of values along the X-axis mark the empirically estimated range each quantity can take. Bounds are tighter when estimated using counterfactual data (blue) than interventional data (orange). Since NDE is exactly identifiable from counterfactual data, blue bounds collapse to the true value (red).}
    \label{fig:exp1}
\end{figure}

If interventional data from a standard RCT is also available, this begets more informative and tighter bounds in terms of $P(y_x)$, depicted in orange in Fig. \ref{fig:nte_bounds}. 
\begin{lemma}[NTE - \textcolor{orange}{$\mathcal{L}_2$} bounds] \label{lem:nte_l2_bounds}
    Given a bow graph causal structure (Fig. \ref{fig:nte_bounds}.a), \textcolor{MetallicGold}{observational data $P(X,Y)$}, and \textcolor{orange}{interventional data $P(Y_x)$}, $\forall x$, the query $P(y_x \mid x',y'), x \neq x'$, is tightly bounded in the range $[l, r]$ defined as 
    \begin{align}
        l &= \max\bigg \{ 0, \frac{\alpha_{\min}-(1 -\textcolor{MetallicGold}{P(y' \mid x'))} }{\textcolor{MetallicGold}{P(y' \mid x')}} \bigg\}\\
        r &= \min\bigg \{ 1, \frac{\alpha_{\max} }{\textcolor{MetallicGold}{P(y' \mid x')}} \bigg\}, \text{ where }\\
        \alpha_{min} &:= \max \bigg\{0,\frac{\textcolor{orange}{P(y_x)}-(1-\textcolor{MetallicGold}{P(x'))}}{\textcolor{MetallicGold}{P(x')}} \bigg\}\\
        \alpha_{max} &:= \min \bigg\{1,\frac{\textcolor{orange}{P(y_x)}}{\textcolor{MetallicGold}{P(x')}} \bigg\} 
    \end{align}
    Further, $[l, r] \subseteq [0,1]$ $\hfill$ $\blacksquare$
\end{lemma}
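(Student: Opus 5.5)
We prove the bound in two stages, mirroring its nested structure: first bound the intermediate quantity $\alpha := P(y_x \mid x')$ using the interventional data, then bound the query $P(y_x \mid x', y')$ using $\alpha$ and the observational data. Binary $Y$ is assumed throughout; $y$ and $y'$ denote the two values of $Y$.

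\textbf{Stage 1: bounds on $\alpha$.} By the law of total probability over the natural value of $X$ (with $X$ binary as in the bow graph),
\[
P(y_x) = P(y_x \mid x')P(x') + P(y_x \mid x)P(x).
\]
By consistency, $P(y_x \mid x) = P(y \mid x)$. The pure Fréchet-style step, however, treats $P(y_x \mid x)$ only as an unknown number in $[0,1]$. That gives $\alpha \in [\alpha_{\min}, \alpha_{\max}]$ with exactly the stated formulas $\max\{0,(P(y_x)-(1-P(x')))/P(x')\}$ and $\min\{1, P(y_x)/P(x')\}$.

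\textbf{Stage 2: bounds on the query.} Decompose
\[
\alpha = P(y_x \mid x', y')P(y' \mid x') + P(y_x \mid x', y)\bigl(1-P(y' \mid x')\bigr).
\]
Letting the second conditional range over $[0,1]$ and clipping to $[0,1]$ gives, for each $\alpha$,
\[
\max\Bigl\{0, \tfrac{\alpha - (1-P(y'\mid x'))}{P(y'\mid x')}\Bigr\} \le P(y_x \mid x', y') \le \min\Bigl\{1, \tfrac{\alpha}{P(y'\mid x')}\Bigr\}.
\]
Both sides are monotone in $\alpha$, so taking the worst case over $\alpha\in[\alpha_{\min},\alpha_{\max}]$ yields exactly $l$ and $r$. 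This settles validity. The final claim $[l,r]\subseteq[0,1]$ is immediate from the outer max/min with $0$ and $1$, provided $l \le r$. That inequality follows because $\alpha_{\min}\le\alpha_{\max}$ and the per-$\alpha$ interval is nonempty.

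\textbf{Tightness (the main obstacle).} I must exhibit, for the endpoints, SCMs compatible with the bow graph that reproduce $P(X,Y)$ and $P(Y_x)$ and attain $l$ (respectively $r$). Because the bow graph places no restriction on the joint distribution of $(X, Y_x, Y_{x'})$ beyond consistency, I would use the canonical response-function parametrization. In it, $U$ indexes the joint $(X, Y_x, Y_{x'})$, and any joint distribution over these counterfactual variables is realizable.

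The constraints pin down the following:
\begin{itemize}
\item $P(X=x, Y_x=\cdot) = P(x,\cdot)$, by consistency;
\item $P(X=x', Y_{x'}=\cdot) = P(x',\cdot)$, by consistency;
\item $P(Y_x=y) = P(y_x)$.
\end{itemize}
The free cells are how $Y_x$ distributes within the stratum $X=x'$, further split by $Y_{x'}=Y$.

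For the upper endpoint, I would first set $P(Y_x = y, X = x') = \alpha_{\max}P(x')$, which is feasible by the interventional margin. I would then place as much of this mass as possible into the cell with $Y = y'$. For the lower endpoint I would set $\alpha = \alpha_{\min}$ and push the mass into $Y = y$.

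The delicate point is Stage 1. There $P(y_x\mid x)$ is in fact pinned to $P(y\mid x)$ by consistency, so $P(y_x\mid x')$ is actually point-identified as $(P(y_x) - P(x,y))/P(x')$. If so, the stated $\alpha$-range would be loose rather than tight. I would therefore check carefully whether the intended setting treats $X$ as non-binary, or treats the interventional margin as the only constraint. Tightness holds under whichever reading makes $\alpha$ free within $[\alpha_{\min},\alpha_{\max}]$. Stage 2 tightness then follows from the explicit mass-placement construction above.
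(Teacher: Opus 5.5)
Your validity argument is the paper's own. The paper applies Fr\'echet--Hoeffding to $\{Y_x=y\}\cap\{X=x'\}$ using only $P(y_x)$ and $P(x')$ to get $[\alpha_{\min},\alpha_{\max}]$. It then applies conditional Fr\'echet--Hoeffding given $X=x'$ with $\{Y=y'\}$ to get the per-$\alpha$ interval, and pushes $\alpha$ to its extremes, which is your monotonicity step. (Your Stage~2 decomposition should condition on $Y\neq y'$ rather than $Y=y$, so that $y=y'$ is covered; this is cosmetic.) Your doubt about tightness is also correct, and it locates exactly the step where the paper's proof is wrong. The paper asserts that the first Fr\'echet--Hoeffding bound is attainable ``in terms of $P(A),P(B)$''. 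It also asserts that $P(y_x\mid x')$ can be varied freely because it is ``unconstrained by $P(y'\mid x')$''. But it never uses the consistency constraint $P(Y_x=y,X=x)=P(x,y)$, which the given observational data imposes.

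So you should not hedge: the tightness claim is false as stated, and no reading compatible with the hypotheses rescues it. For binary $X$, $P(y_x\mid x')=(P(y_x)-P(x,y))/P(x')$ is point-identified; this is the classical effect-of-treatment-on-the-treated identification.

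Take the paper's own Example~3 numbers: $P(X=1)=0.85$, $P(Y=1\mid X=1)=0.15$, $P(Y=1\mid X=0)=0.35$, $P(Y_{X=1}=1)=0.225$, with $x=1,x'=0,y=1,y'=0$.
\begin{itemize}
\item The identified value is $P(y_x\mid x')=(0.225-0.1275)/0.15=0.65$. This is exactly the number the paper later presents as new $\mathcal{L}_{2.5}$ data.
\item The lemma instead gives $\alpha_{\min}=0$, $\alpha_{\max}=1$, hence $[l,r]=[0,1]$.
\item Every model compatible with the data has $P(y_x\mid x',y')\geq (0.65-0.35)/0.65=6/13$, so $l=0$ is not attained.
\end{itemize}

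Non-binary $X$ does not help either. $P(Y_x=y,X=x)=P(x,y)$ is still pinned, so the sharp range of $P(Y_x=y,X=x')$ is $[\max\{0,P(y_x)-P(x,y)-(1-P(x)-P(x'))\},\ \min\{P(x'),P(y_x)-P(x,y)\}]$. Under positivity this is strictly inside the paper's range unless the clipping binds at both ends. Treating the interventional margin as the only constraint is not available, because it contradicts the hypothesis that $P(X,Y)$ is given. For the same reason, your endpoint construction ``set $P(Y_x=y,X=x')=\alpha_{\max}P(x')$'' is infeasible in general.

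What can actually be proved is validity together with $[l,r]\subseteq[0,1]$. The sharp $\mathcal{L}_2$ bound comes from substituting the corrected $\alpha$-range into your Stage~2 formula. For binary $X$ it coincides with $[l',r']$ of Prop.~\ref{prop:bounds}; for $y\neq y'$ these are the Tian--Pearl bounds on the probability of necessity. Tightness of that corrected bound then follows from your within-stratum coupling construction, since the joint of $(Y_x,Y_{x'})$ given $X=x'$ is unconstrained beyond its margins.
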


However, if the environment permits counterfactual data collection using \textit{ctf-rand()}, this can be used to derive tighter bounds than the state of the art approach in Lem. \ref{lem:nte_l2_bounds}, as shown in blue in Fig. \ref{fig:nte_bounds}. The novel bounds are as follows.
\begin{proposition}[NTE - \textcolor{blue}{$\mathcal{L}_{2.5}$} bounds] \label{prop:bounds}
    Given a bow graph causal structure (Fig. \ref{fig:nte_bounds}.a), \textcolor{MetallicGold}{observational data $P(X,Y)$}, \textcolor{orange}{interventional data $P(Y_x)$}, and \textcolor{blue}{counterfactual data $P(Y_x \mid X)$}, $\forall x$, the identification query $P(y_x \mid x',y'), x \neq x',$ is tightly bounded in the range $[l', r']$ defined as 
    \begin{align}
        l' &= \max\bigg \{ 0, \frac{\textcolor{blue}{P(y_x\mid x')}-(1 -\textcolor{MetallicGold}{P(y' \mid x'))} }{\textcolor{MetallicGold}{P(y' \mid x')}} \bigg\}\\
        r' &= \min\bigg \{ 1, \frac{\textcolor{blue}{P(y_x \mid x')}}{\textcolor{MetallicGold}{P(y' \mid x')}} \bigg\}
    \end{align}
    Further, $[l', r'] \subseteq [l,r]$ as defined in Lem. \ref{lem:nte_l2_bounds}. $\hfill$ $\blacksquare$
\end{proposition}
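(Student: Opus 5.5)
The plan is to reduce the query to a Fréchet-type problem on the sub-population $X=x'$. By definition,
\[
P(y_x \mid x',y') = \frac{P(y_x, y' \mid x')}{P(y'\mid x')}.
\]
Within $X=x'$, consistency gives $Y=Y_{x'}$, so the numerator is the joint probability of the two events $\{Y_x=y\}$ and $\{Y=y'\}$ conditional on $x'$. The marginal $P(y'\mid x')$ is known from $P(X,Y)$. The other marginal, $P(y_x\mid x')$, is exactly the counterfactual data $P(Y_x\mid X)$ obtained by \textit{ctf-rand($X\rightarrow Y$)}.

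Bonferroni/Fréchet inequalities then give
\[
\max\{0,\, P(y_x\mid x') + P(y'\mid x') - 1\} \le P(y_x,y'\mid x') \le \min\{P(y_x\mid x'),\, P(y'\mid x')\}.
\]
Dividing by $P(y'\mid x')$ yields exactly $l'$ and $r'$, which establishes validity.

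For tightness, I will construct, for each endpoint (and by convexity any interior point), an SCM compatible with the bow graph that reproduces all given data. Let $U$ range over response-type pairs $(x_u, f_u)$, where $x_u$ is the natural value of $X$ and $f_u$ is the response function $x\mapsto y$. Tightness then amounts to a coupling problem within each stratum $X=\tilde x$.

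The constraints are of three kinds:
\begin{itemize}
\item $P(X)$ fixes the stratum masses.
\item $P(Y\mid \tilde x)$ fixes the law of $f_u(\tilde x)$ within stratum $\tilde x$.
\item $P(Y_{x''}\mid \tilde x)$ for every $x''$ fixes the law of $f_u(x'')$ within stratum $\tilde x$. This subsumes the previous item when $x''=\tilde x$, and also determines $P(Y_x)$ by averaging over strata.
\end{itemize}

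So the data pin down only the one-dimensional marginals of the vector $(f_u(x''))_{x''}$ in each stratum, and any coupling of these marginals is admissible. In stratum $x'$, I choose the coupling of $(f_u(x), f_u(x'))$ that attains the Fréchet lower or upper bound, e.g.\ the comonotone or countermonotone coupling on the indicators of $y$ and $y'$, extended arbitrarily to the other coordinates. Because the unobserved confounder lets strata be specified independently, this is consistent with the bow graph. Positivity may need a small perturbation, with tightness then holding as a supremum/infimum; I will note this.

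For the containment $[l',r']\subseteq[l,r]$, it suffices to show $\alpha_{\min}\le P(y_x\mid x')\le \alpha_{\max}$, since $l$ and $r$ are monotone in $\alpha$ in the same way $l'$ and $r'$ are in $P(y_x\mid x')$. This follows from
\[
P(y_x) = P(y_x\mid x')P(x') + P(y_x, X\neq x'),
\]
where the last term lies in $[0,\,1-P(x')]$. This gives exactly the $\alpha$ bounds, and clipping at $0$ and $1$ preserves the ordering.

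The main obstacle I expect is the tightness construction. I must check that matching $P(Y_x\mid X)$ for all $x$ simultaneously (including the nonbinary case) leaves the required coupling freedom, and handle the positivity assumption at the extremes.
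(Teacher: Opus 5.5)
Your proposal is correct and follows essentially the paper's route: the conditional Fr\'echet bound on $\{Y_x=y\}\cap\{Y=y'\}$ within $X=x'$, divided by $P(y'\mid x')$, and containment from $\alpha_{\min}\le P(y_x\mid x')\le\alpha_{\max}$ together with monotonicity of the bounds in $P(y_x\mid x')$. Your explicit response-type coupling for tightness adds something the paper lacks: it only asserts that Fr\'echet bounds are attainable, without checking that the extremal coupling is realized by an SCM compatible with the bow graph and all the given data, which your construction does.
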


The new bounds are contained within the previous bounds when using only $\mathcal{L}_2$ data. The upshot of these results is that the standard approach in causal data science of using only observational and interventional data to bound important counterfactual quantities gives us loose bounds, which can be significantly improved if we could design experiments that permit counterfactual randomization. Future work could develop a more general framework to derive tighter bounds for arbitrary counterfactual queries. As mentioned, the bounds for such counterfactuals are relevant in applications like algorithmic fairness and explainability.

Finally, we provide two examples illustrating how counterfactual data can, in practice, be used to empirically tighten bounds for identification queries. We use a Bayesian sampling methodology to estimate a $(1-\beta)$ credible interval for the range of an identification query, and show that the range is tighter in practice when applying \textit{ctf-rand()}.

\begin{figure*}[t]
        \centering
        \begin{tikzpicture}
        
        \node[inner sep=0pt] (image_node_name) at (-2,0) {
        \includegraphics[width=0.3\textwidth]{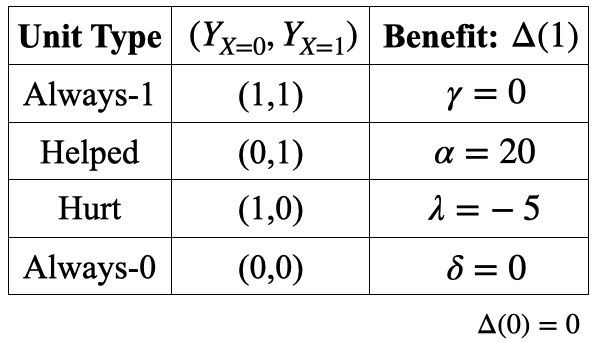}};

        \node[inner sep=0pt] (image_node_name) at (6,.8) {
        \includegraphics[width=0.35\textwidth]{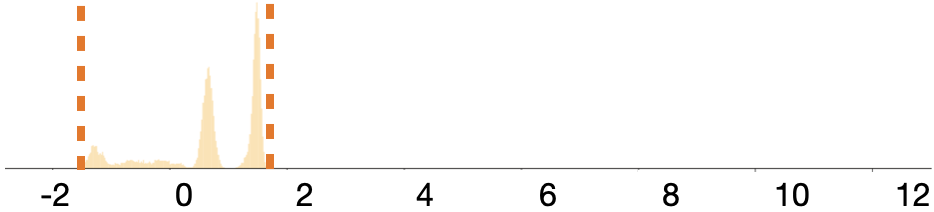}};

        \node[inner sep=0pt] (image_node_name) at (6,-0.75) {
        \includegraphics[width=0.35\textwidth]{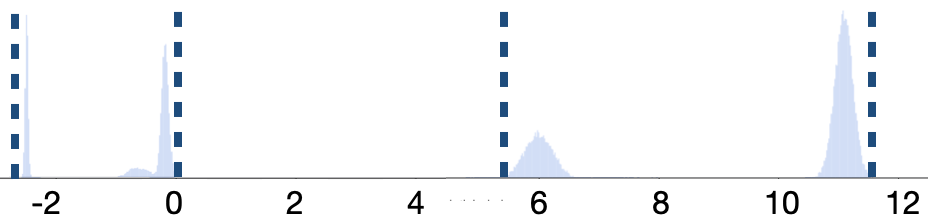}};

        \node[inner sep=0pt] (image_node_name) at (10.7,-0.) {
        \includegraphics[width=0.125\textwidth]{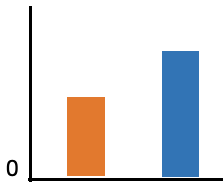}};

        \node[align=center] (t) at (11,1.15) {\small benefit per strategy};

        \node[align=center] (t) at (10.5,-1.1) {\small $\mathcal{L}_2$ };

        \node[align=center] (t) at (11.4,-1.1) {\small $\mathcal{L}_{2.5}$ };
        


        \node[align=center] (t) at (1.9,-0.6) {\tiny sub-population\\ \tiny $\Delta(1 \mid X)$ bounds};

        \node[align=center] (t) at (1.9,1.) {\tiny population-level\\ \tiny $\Delta(1)$ bounds};

        \node[align=center] (t) at (3.6,-0.6) {\tiny $X=1$};

        \node[align=center] (t) at (7.5,-0.6) {\tiny $X=0$};


    \end{tikzpicture}
    \caption{Example 3: (Left) benefit function by unit type; (Centre) estimated bounds of population-level benefit using $\mathcal{L}_2$ data (orange) and sub-population level benefit using $\mathcal{L}_{2.5}$ data (blue); (Right) counterfactual strategy dominates standard interventional approach.}
    \label{fig:exp2}
\end{figure*}

\hypertarget{ex2}{\textbf{Example 2 (Traffic Camera - version 2).}}  Consider an expanded version of Example \hyperlink{ex1}{1}. Let $Y, Z \in \{0,1\}$, $X \in \{0,1,2\}$. We now allow confounding between both $(X, Y)$ and $(Z, Y)$, to account for unlabeled driver tendencies affecting car color choice, or road obstructions affecting speeding, both of which may appear as video artifacts. This, of course, is a relaxation of the earlier non-confounding (or ignorability) assumption w.r.t $X$.

The goal is to bound two identification queries: the NDE quantity $P(y_{xZ_{x'}})$ (see Eq. \ref{eq:nde}), and the NTE quantity $P(y_x \mid x',y')$ (see Eq. \ref{eq:nte_pnps}), which would help an auditor in fairness analysis and explanation-generation for the AI model used in this application. We evaluate bounds under two settings: (i) input contains $\mathcal{L}_1$ data (from observational studies) and $\mathcal{L}_2$ data (from $\doo{x}$ distributions); and (ii) input contains $\mathcal{L}_{2.5}$ data obtained through \textit{ctf-rand()} (Fig. \ref{fig:ex1}b). We collect $N=10^4$ synthetic samples per input distribution from a randomly generated (unobserved) SCM, and use this input data to empirically bound the queries by the methodology outlined earlier.

The results are shown in Fig. \ref{fig:exp1}. We show the 95\% credible interval (\textit{ci}) for each query under both data settings: orange plots give the bounds when using only $\mathcal{L}_{2}$ data, and blue plots give the bounds when using $\mathcal{L}_{2.5}$ data. The range of these plots along the X-axis gives us the empirically estimated range that the query can take. The true value of the query is indicated by a red line. For NTE, the analytic bounds computed directly using Lem. \ref{lem:nte_l2_bounds} and Prop. \ref{prop:bounds} are additionally indicated with dotted lines. 

For both queries, the range of sampled values is significantly narrower along the X-axis for the blue plot than the orange plot, indicating that $\mathcal{L}_{2.5}$ data gives us tighter empirical bounds. For NDE, using $\mathcal{L}_{2.5}$ data causes the bounds (blue) to collapse to the true value (red line). This validates the results in Sec. \ref{sec:ctf_id} and the $\textsc{ctfIDu}^+$ algorithm since, as derived in Eq. \ref{eq:nde}, NDE is indeed identifiable using counterfactual randomization. These findings are consistent across five random SCM specifications. Details of the sampling process, and the random SCMs are provided in App. \ref{app:ex1}.

\hypertarget{ex3}{\textbf{Example 3 (Unit Selection, \citet{li:pearl19}).}} Consider a drug de-addiction program with the causal diagram in Fig. \ref{fig:nte_bounds}a. $X$ indicates whether a participant is assigned counseling sessions. $Y$ indicates whether de-addiction succeeds within 6 months. Each participant belongs to one of four \textit{canonical types} \citep{angrist:etal96,balke:pea97} defined in Fig. \ref{fig:exp2}(left). E.g., the \textit{Helped} type of participant $(Y_{X=0} = 0, Y_{X=1} = 1)$ would overcome addiction iff offered counseling, while the \textit{Always-0} type of participant $(Y_{X=0} = 0, Y_{X=1} = 0)$ does not succeed within 6 months whether they received counseling or not. Any given participant's unit type and the probability of each type in the population are unknown. A \textit{unit selection} problem assigns a benefit $\Delta(1 \mid \text{type})$ for each unit type receiving treatment, as $\gamma, \alpha, \lambda, \delta$, respectively. The baseline benefit of non-treatment, $\Delta(0\mid \text{type})$, is zero for all. The goal is to maximize avg. treatment benefit, given input data from observations/experiments.\footnote{As a special case, if $\gamma = \delta = 0$ and $\lambda = -\alpha$, this works out to maximizing the \textit{avg. treatment effect} (ATE) of $X$ on $Y$.}

We evaluate two strategies: (1) the standard approach introduced in \citet{Li_Pearl_2022} of using $\mathcal{L}_1$ and $\mathcal{L}_2$ data to empirically bound the quantity $P(y'_{X=0}, y_{X=1}), \forall y,y'$, then combining these bounds to bound the avg. $\Delta(1)$ over the population, and thus decide whether to apply $\doo{X=1}$ for the whole population; (2) a counterfactual decision strategy \citep{bareinboim:etal15, raghavan2025realizability} using \textit{ctf-rand()} to collect $\mathcal{L}_{2.5}$ data, then using this to bound $P(y'_{X=0}, y_{X=1} | x')$, and thus estimate the {conditional} avg. benefit $\Delta(1 | X)$ for units who \textit{would have} naturally been assigned $X=x'$. Since the environment permits \textit{ctf-rand()}, each unit's natural decision $X$ can be measured prior to assigning a treatment (Fig. \ref{fig:nte_bounds}c), so these conditional estimates can be used to decide \textit{separately} whether to apply $\doo{X=1}$ for each subpopulation with natural $X=x'$.

The results are shown in Fig. \ref{fig:exp2}(center, right). The  95\% \textit{ci} for population bounds estimated by the standard interventional approach (1) are [$ -1.3,1.6$], shown in orange. The subpopulation bounds computed using the counterfactual approach (2) are [$5.7,11.6$] and $[-2.5, -0.1]$ for units whose natural $X=0,1$ respectively, shown in blue. Therefore, strategy (2) counterintuitively assigns treatment $\doo{X=1}$ only to units who would have naturally received $X=0$, as their benefit range is entirely positive. Strategy (1) is strictly suboptimal as it either assigns 0 to everyone, or forces 1 even on units with natural $X=1$, for whom the benefit range is entirely negative. Further details of input data and the counterfactual strategy are provided in App. \ref{app:ex2}.


\section{Conclusion}

In this paper, we developed the $\textsc{ctfIDu}^+$ algorithm (Alg. \ref{alg:ctfidu_plus}), a complete method for identifying counterfactuals given an arbitrary collection of physically {realizable} input data (Thm. \ref{thm:ctfidu_completeness}). Previous completeness results for counterfactual identification were derived under the assumption that available data is restricted to $\mathcal{L}_1$ and $\mathcal{L}_2$ of the PCH, not recognizing the possibility of $\mathcal{L}_3$ data collection through the procedure of counterfactual randomization. We then showed that the theoretical limit to exact counterfactual identification in nonparametric settings coincides with the limits of counterfactual data-collection, demonstrating a foundational duality between counterfactual identifiability and realizability (Thm. \ref{thm:id_limits}, Cor. \ref{cor:id_connection_informal}). Finally, we demonstrate that counterfactual data remains valuable even when exact identification is impossible. By incorporating such data, we derive novel analytic bounds for the counterfactual NTE quantity which are tighter than prior approaches that use only $\mathcal{L}_2$ data (Prop. \ref{prop:bounds}). Our simulations confirm that this additional data can yield substantially sharper partial identification intervals in practice.

Future work could explore systematic ways to select counterfactual interventions for experiment design that provide the tightest identification bounds. The duality result in Cor. \ref{cor:id_connection_informal} could also inform more principled strategies for adopting stronger assumptions (structural causal, parametric etc.) to overcome non-identification hurdles.


\section*{Acknowledgements}

This research is supported in part by the NSF, ONR, AFOSR, DoE, Amazon, JP Morgan, and
The Alfred P. Sloan Foundation.


\section*{Impact Statement}


This paper presents work whose goal is to advance the field of Machine
Learning, and the limits of what can be inferred from data. There are many potential societal consequences of our work, none
which we feel must be specifically highlighted here.



\bibliographystyle{icml2026}
\bibliography{references}



\clearpage
\appendix
\makeatletter
\let\addcontentsline\latexaddcontentsline
\makeatother
\onecolumn

\startcontents[app] 
\section*{Appendix Contents}
\printcontents[app]{l}{1}[2]{}
\clearpage

\section{Graphical terminology}
\label{app:scm}

Structural Causal Models (SCM) and causal diagrams are described in the preliminaries in Sec. \ref{sec:intro}. See \citet{Bareinboim2022OnPH} for full treatment. We use the following graphical kinship nomenclature w.r.t causal diagram $\mathcal{G}$:

\begin{itemize}
    \item Parents of $V_i$, denoted $\*{Pa}_i$: the set $\{V_j\}$ s.t. there is a direct edge $V_j \rightarrow V_i$ in $\mathcal{G}$. $\*{Pa}_i$ does not include $V_i$.
    \item Children of $V_i$, denoted $\textbf{Ch}(V_i)$: the set $\{V_j\}$ s.t. there is a direct edge $V_i \rightarrow V_j$ in $\mathcal{G}$. $\textbf{Ch}(V_i)$ does not include $V_i$.
    \item Ancestors of $V_i$, denoted $\textbf{An}(V_i)$: the set $\{V_j\}$ s.t. there is a path (possibly length 0) from $V_j$ to $V_i$ consisting only of edges pointing toward $V_i$, $V_j \rightarrow ... \rightarrow V_i$. {$\textbf{An}(V_i)$ is defined to include $V_i$}.
    \item Descendants of $V_i$, denoted $\textbf{Desc}(V_i)$: the set  $\{V_j\}$ s.t. there is a path (possibly length 0) from $V_i$ to $V_j$ consisting only of edges pointing toward $V_j$, $V_i \rightarrow ... \rightarrow V_j$. {$\textbf{Desc}(V_i)$ is defined to include $V_i$}.
    \item Non-descendants of $V_i$, denoted $\textbf{NDesc}(V_i)$: the set $\*V \setminus \textbf{Desc}(V_i)$. $\textbf{NDesc}(V_i)$ does not include $V_i$.
\end{itemize}

Given a graph $\mathcal{G}$, $\mathcal{G}_{\overline{\*X}\underline{\*W}}$ is the result of removing edges coming into variables in $\*X$, and edges coming out of $\*W$. 

$\mathcal{G}[\*W]$ denotes a vertex-induced subgraph, which includes only $\*W$ and the edges among its elements. $\mathcal{G}[\*V(\*W_\star)]$ denotes the subgraph which includes only the vertices $\{V \mid V_{\*x} \in \*W_\star\}$ and the edges among its elements.



\section{Tools for Counterfactual Identification}
\label{app:toolsforid}

In this appendix, we summarize all the components used in the task of counterfactual identification, including a review of prior work. Sec. \ref{app:prior_work} can be skipped or skimmed if readers are already familiar with these results.

\subsection{Previous Results} \label{app:prior_work}

Below are some relevant conceptual components developed in prior work \citep{correaetal:21,correa2024ctfcalc}. 

Given an arbitrarily nested counterfactual expression, the Counterfactual Un-nesting Theorem, or CUT, provides a way to compute it in terms of un-nested probability terms.

\begin{theorem}[Counterfactual Un-nesting Theorem (CUT)] \label{thm:cut}
    Let $Y,X \in \*V, \*T,\*Z \subseteq \*V$, and let $\*z$ be a set of values for $\*Z$. Then, the nested counterfactual $P(Y_{T_\star X_z} = y)$ can be written as an un-nested counterfactual, as follows:
    \begin{align}
        P(Y_{T_\star X_z} = y) = \sum_x P(Y_{T_\star x} = y,X_z = x),
    \end{align}
    where the subscript $\star$ is a wildcard for an arbitrarily nested counterfactual clause. $\hfill$ $\blacksquare$
\end{theorem}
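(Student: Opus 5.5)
The statement to prove is the Counterfactual Un-nesting Theorem (Thm.~\ref{thm:cut}). The plan is to work directly at the level of the $\mathcal{L}_3$ valuation: I will show pointwise in $\*u$ that the nested potential response equals a sum of indicator products, and then integrate against $P(\*u)$.

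\textbf{Step 1: semantics of the nested response.} Fix a unit $\*u$. By the recursive definition of nested counterfactuals \citep[Sec.~2.1.1]{correa2024ctfcalc}, $Y_{T_\star X_{\*z}}(\*u)$ is the value of $Y$ in the submodel where:
\begin{itemize}
\item $\*T$ is set according to its (possibly nested) clause $T_\star$, evaluated at $\*u$;
\item $X$ is fixed to the constant $X_{\*z}(\*u)$, i.e., the value $X$ takes at $\*u$ in $\mathcal{M}_{\*z}$.
\end{itemize}
Since the model is recursive and each potential response is a deterministic function of $\*u$, the quantity $x^\ast(\*u) := X_{\*z}(\*u)$ is a single well-defined element of the finite domain of $X$. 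Hence
\begin{align}
Y_{T_\star X_{\*z}}(\*u) = Y_{T_\star x^\ast(\*u)}(\*u).
\end{align}
Here $Y_{T_\star x}$ denotes the response with $X$ fixed to the constant $x$ and $\*T$ still following $T_\star$.

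\textbf{Step 2: the indicator identity.} Because exactly one value $x$ satisfies $X_{\*z}(\*u)=x$, we have
\begin{align}
\mathbbm{1}[Y_{T_\star X_{\*z}}(\*u)=y] = \sum_x \mathbbm{1}[Y_{T_\star x}(\*u)=y]\,\mathbbm{1}[X_{\*z}(\*u)=x].
\end{align}
On the right-hand side, only the term with $x=x^\ast(\*u)$ survives, and by Step 1 it equals the left-hand side.

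\textbf{Step 3: integrate.} Multiply by $P(\*u)$, sum over $\*u$, and exchange the finite sums. Using the $\mathcal{L}_3$ valuation formula for a joint event, this gives
\begin{align}
P(Y_{T_\star X_{\*z}}=y) = \sum_x P(Y_{T_\star x}=y,\,X_{\*z}=x),
\end{align}
which is the claim.

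\textbf{Main obstacle: Step 1.} The delicate point is justifying that substituting the unit-level realized value $x^\ast(\*u)$ as a constant intervention reproduces the nested regime, even though $\*T$ may itself involve nested clauses that depend on $X$ or $\*Z$. I would handle this by induction on nesting depth. The key property is that every clause in $T_\star$ is evaluated in its own submodel at the same $\*u$, independently of how $X$ is set in the outer regime. Replacing the nested clause $X_{\*z}$ by its realized constant therefore changes nothing else. If $T_\star$ contains further nested terms, the same argument applies recursively.
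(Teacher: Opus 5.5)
Your proof is correct and takes essentially the same route as the standard argument. The paper imports CUT from prior work without re-proving it, and the reasoning behind it there is the same unit-level step: the consistency property $X_{\mathbf{z}}(\mathbf{u}) = x \implies Y_{T_\star X_{\mathbf{z}}}(\mathbf{u}) = Y_{T_\star x}(\mathbf{u})$, combined with the law of total probability over the values $x$ of $X_{\mathbf{z}}$. The induction on nesting depth that you flag as the main obstacle is not needed. By the definition of nested counterfactuals, every subscript clause is evaluated at the same $\mathbf{u}$ and then plugged in as a constant, so your Step 1 holds directly by definition.
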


This can be recursively applied to get fully un-nested terms. For instance, for the diagram in Fig. \ref{fig:ex1}, we can write $P(y_{xZ_{x'}}) = \sum_z P(y_{zw}, z_{x'})$.

\begin{definition}[Ancestors of a counterfactual] \label{def:ctf_ancestors}
    Given a causal diagram $\mathcal{G}$ and a potential response $Y_\*x$, the set of \textit{(counterfactual) ancestors} of $Y_\*x$, denoted $An(Y_\*x)$, consists of each $W_\*z$ s.t. $W \in An(Y)_{\mathcal{G}_{\underline{\*X}}}$, and $\*z = \*x \cap An(W)_{\mathcal{G}_{\overline{\*X}}}$. For a set $\*W_\star$, $An(\*W_\star)$ is defined to be the union of the ancestors of each potential response in the set. $\hfill \blacksquare$
\end{definition}

This generalizes the notion of ancestors of a causal variable to the ancestors of potential responses under different regimes. For instance, for the diagram in Fig. \ref{fig:ex1}, $An(Y_{x}) = \{Y_x, Z_x\}$ and $An(Y_{z}) = \{Y_{z},X\}$.

\begin{lemma}[Exclusion operator]\label{lem:exclusion}
    The exclusion operator $||.||$ when applied to a potential response returns the minimal counterfactual subscript set, removing redundant interventions (e.g. non-ancestors) from the subscript. 
    
    Consider a causal diagram $\mathcal{G}$ and a potential response $Y_{\*x}$. Let $||Y_{\*x}|| := Y_{\*z}$, where $\*Z = \*X \cap An(Y)_{\mathcal{G}_{\overline{\*X}}}$ and $\*z = \*x \cap \*Z$.

    Then, $||Y_{\*x}||= Y_{\*x}$ holds for any model compatible with $\mathcal{G}$. $\hfill \blacksquare$
\end{lemma}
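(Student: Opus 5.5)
The plan is a pointwise argument on the exogenous variables: we show that for every unit $\*u$, $Y_{\*x}(\*u) = Y_{\*z}(\*u)$. Equality of every counterfactual event involving $Y_{\*x}$, and hence of every $\mathcal{L}_3$ valuation, then follows directly from the definition of $P(\*W_\star=\*w)$ as a sum of indicators over $\*u$. Throughout, fix an SCM $\mathcal{M}$ compatible with $\mathcal{G}$ and a unit $\*u$. As is implicit in the notation $Y_{\*x}$, assume $Y \notin \*X$. Let $\*A := An(Y)_{\mathcal{G}_{\overline{\*X}}}$, so that $\*Z = \*X \cap \*A$.

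The key claim is that for every $W \in \*A$ we have $W_{\*x}(\*u) = W_{\*z}(\*u)$, where $W_{\*x}(\*u)$ denotes the value of $W$ in the submodel $\mathcal{M}_{\*x}$. We prove it by induction along a topological order of $\mathcal{G}$ restricted to $\*A$, which exists by recursiveness. There are two cases.

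\emph{Case $W \in \*X$.} Then $W \in \*X \cap \*A = \*Z$, so both submodels set $W$ to the same constant, namely its entry in $\*x$, which coincides with its entry in $\*z$.

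\emph{Case $W \notin \*X$.} Then $W \notin \*Z$ either, so both submodels evaluate $W$ by the original mechanism $f_W(\*U_W, \*{Pa}_W)$. Since $W \notin \*X$, no edge into $W$ is removed in $\mathcal{G}_{\overline{\*X}}$. Hence every parent $P \in \*{Pa}_W$ is an ancestor of $W$, and therefore of $Y$, in $\mathcal{G}_{\overline{\*X}}$, so $P \in \*A$ and $P$ precedes $W$ in the order. By the induction hypothesis $P_{\*x}(\*u) = P_{\*z}(\*u)$ for all such $P$. The mechanism receives identical arguments in both submodels, so $W_{\*x}(\*u) = W_{\*z}(\*u)$.

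Applying the claim to $W = Y \in \*A$ gives the result.

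The main obstacle is the subtle interaction with the variables $\*X \setminus \*Z$. These are intervened on in $\mathcal{M}_{\*x}$ but left to their natural mechanisms in $\mathcal{M}_{\*z}$, so their values may genuinely differ between the two regimes. We must make sure no such variable ever feeds into the induction. The argument above handles this: any parent of a non-intervened $W \in \*A$ lies in $\*A$, and any element of $\*X$ that lies in $\*A$ belongs to $\*Z$. So the variables where the regimes disagree never appear as inputs to the mechanisms being compared.

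The choice of graph is essential. We use $\mathcal{G}_{\overline{\*X}}$ and not $\mathcal{G}$: a member of $\*X$ that is an ancestor of $Y$ in $\mathcal{G}$ only through another member of $\*X$ is cut off in $\mathcal{G}_{\overline{\*X}}$ and correctly excluded from $\*Z$. Finally, because the equality holds for every $\*u$ and every model inducing $\mathcal{G}$, it also holds jointly with any other potential responses in a counterfactual set. This is what licenses Line 3 of Alg.~\ref{alg:ctfidu_plus}.
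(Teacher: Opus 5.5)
Your proof is correct. It takes a different route from the paper only in the sense that the paper gives no proof of its own. Lemma~\ref{lem:exclusion} is reproduced in App.~\ref{app:prior_work} as a prior result of \citet{correaetal:21} and \citet{correa2024ctfcalc}. It is then used as a black box, in Lem.~\ref{lem:stepii} and in Line~3 of Alg.~\ref{alg:ctfidu_plus}.

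Your unit-level induction along a topological order of $An(Y)_{\mathcal{G}_{\overline{\mathbf{X}}}}$ replaces that citation with a self-contained argument. Both of its cases are sound:
\begin{itemize}
\item A non-intervened member of $An(Y)_{\mathcal{G}_{\overline{\mathbf{X}}}}$ keeps all its parents in that set, because its incoming edges survive in $\mathcal{G}_{\overline{\mathbf{X}}}$.
\item Every intervened member of that set lies in $\mathbf{Z}$.
\end{itemize}
Together these keep $\mathbf{X}\setminus\mathbf{Z}$ out of every mechanism being compared, so the disagreement between the two submodels on those variables never propagates to $Y$.

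What the direct proof buys over the citation is that the equality holds pointwise in $\mathbf{u}$. That is exactly what licenses applying $\|\cdot\|$ term by term inside a joint counterfactual event, as Line~3 of Alg.~\ref{alg:ctfidu_plus} does. It also makes transparent why $\mathcal{G}_{\overline{\mathbf{X}}}$ rather than $\mathcal{G}$ is the right graph.

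One small point: the lemma's informal preamble also describes $\|Y_{\mathbf{x}}\|$ as the \emph{minimal} subscript. Your argument, like the formal ``Then'' clause, establishes only the equality. Minimality would need a separate witness model. For example, let $Z\in\mathbf{Z}$ depend only on fresh noise, and let the nodes on a directed path from $Z$ to $Y$ in $\mathcal{G}_{\overline{\mathbf{X}}}$ copy their predecessor. This is not part of the formal claim, so it does not affect the correctness of your proof.
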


If a counterfactual expression is ancestral (i.e. it contains its own ancestors), the following results shows how to convert it into a ctf-factor expression, and further decompose it based on c-components.

\begin{theorem}[Ancestral Set Transformation (AST)] \label{thm:ast} Let $\*W_\star$ be an ancestral set, that is, $An(\*W_\star) = \*W_\star$, and let $\*w$ be a vector with the values of each variable in $\*W_\star$. Then, $P(\*W_\star = \*w)$ can be rewritten in ctf-factor format as follows,
\begin{align}
    P(\*W_\star = \*w) = P(\bigwedge_{W_{\*t} \in \*W_\star} W_{\*{pa}_W} = w),
\end{align}
where each $w$ is $w_\*t$ and $\*{pa}_w$ is determined for each $W_\*t \in \*W_\star$ as follows:

(i) the values for variables in $\*{Pa}_w \cap \*T$ are the same as in $\*t$, and

(ii) the values for variables in $\*{Pa}_w \setminus \*T$ are taken from $\*w$ corresponding to the parents of $\*W$. $\hfill$ $\blacksquare$
\end{theorem}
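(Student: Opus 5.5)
The plan is to prove the identity pointwise in the exogenous variables: for every $\*u$, the event $\bigwedge_{W_{\*t}\in\*W_\star} W_{\*t}(\*u)=w$ holds iff $\bigwedge_{W_{\*t}\in\*W_\star} W_{\*{pa}_w}(\*u)=w$ holds. Summing the indicator against $P(\*u)$, as in the $\mathcal{L}_3$ valuation in Sec. \ref{sec:preliminaries}, then gives the equality of probabilities. Reducing the probabilistic claim to an equality of events in each unit $\*u$ is legitimate because both sides are valuations over the same $P(\*U)$.

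Fix $W_{\*t}\in\*W_\star$. In the submodel $\mathcal{M}_{\*t}$ we have $W_{\*t}(\*u)=f_W(\*{Pa}_W{}_{\*t}(\*u),\*u_W)$. Split the parents into $\*{Pa}_W\cap\*T$ and $\*{Pa}_W\setminus\*T$.
\begin{itemize}
\item Parents in $\*{Pa}_W\cap\*T$ are held constant at their values in $\*t$. This matches clause (i).
\item A parent $P\in\*{Pa}_W\setminus\*T$ is an ancestor of $W$ in $\mathcal{G}_{\underline{\*T}}$. By Def. \ref{def:ctf_ancestors}, the potential response $P_{\*t'}$ with $\*t'=\*t\cap An(P)_{\mathcal{G}_{\overline{\*T}}}$ belongs to $An(W_{\*t})\subseteq\*W_\star$, since the set is ancestral.
\end{itemize}
By the exclusion operator (Lem. \ref{lem:exclusion}), $P_{\*t'}(\*u)=P_{\*t}(\*u)$. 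On the event in question this value is the entry of $\*w$ assigned to $P_{\*t'}$. This is exactly the value prescribed by clause (ii).

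So on the event, the parents of $W$ in regime $\*t$ take exactly the values $\*{pa}_w$. Hence $W_{\*t}(\*u)=f_W(\*{pa}_w,\*u_W)=W_{\*{pa}_w}(\*u)$, where the last step holds because intervening on all parents makes $W$ depend only on $\*u_W$. This gives the forward implication.

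For the converse, I would argue by induction along a topological order of $\mathcal{G}$. Assume every ancestor term $P_{\*t'}\in\*W_\star$ has already been shown to equal its value in $\*w$. Then the parents of $W$ under $\*t$ again take values $\*{pa}_w$, so $W_{\*t}(\*u)=W_{\*{pa}_w}(\*u)=w$. Recursiveness of the SCM guarantees the induction is well-founded.

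The main obstacle is making clause (ii) well defined, i.e.\ identifying which entry of $\*w$ supplies a parent's value when $P$ appears several times in $\*W_\star$ under different subscripts. I would resolve this by always selecting the counterfactual ancestor $P_{\*t\cap An(P)_{\mathcal{G}_{\overline{\*T}}}}$ singled out by Def. \ref{def:ctf_ancestors}. I would then check, using Lem. \ref{lem:exclusion}, that this choice coincides with $P_{\*t}$, so the pointwise argument goes through regardless of the other copies.
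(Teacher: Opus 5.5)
Your proof is correct. The paper gives no proof of the AST and only imports it from \citet{correaetal:21}. Your argument is essentially the standard one: show that both sides are the same event of the exogenous variables, so their $\mathcal{L}_3$ valuations agree. Ancestrality places the parent copy $P_{\*t\cap An(P)_{\mathcal{G}_{\overline{\*T}}}}$ in $\*W_\star$, Lem.~\ref{lem:exclusion} identifies it with $P_{\*t}$, and this pins down clause (ii). Consistency (composition) then gives $W_{\*t}=W_{\*{pa}_w}$ on the event. The only difference is cosmetic: you unfold that consistency step through the structural equations, and you prove the two inclusions separately, using a topological induction for the converse.
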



\begin{theorem}[Counterfactual factorization] \label{thm:ctf_factorization} Let $Q[\*H_\star](\*h)= P(\*H_\star = \*h)$ be a ctf-factor. Let $\*H^1, . . . , \*H^m$ be the c-components in $\mathcal{G}[\*V(\*H_\star)]$. Define $\*H^i_\star = \{H_{\*{pa}_h}
\in \*H_\star \mid H \in \*H^i\}$ and $\*h^i$ as the values in $\*h$ corresponding to $\*H^i_\star$. Note that $\*H^1_\star,...,\*H^m_\star$ form a partition of $\*H_\star$. Then, we have that $Q[\*H_\star](\*h)$ decomposes as
\begin{align}
    Q[\*H_\star](\*h) = P(\*H_\star = \*h) = \prod_i P(\*H^i_\star = \*h^i) \label{eq:ctf_factor_compose}
\end{align}
Furthermore, let $H_1 < H_2 < ...$ be a topological order over the variables in $\mathcal{G}[\*V(\*H_\star)]$. Then, each factor can be computed from $P(\*H_\star = \*h)$ as
\begin{align}
     Q[\*H^i_\star](\*h^i) = P(\*H^i_\star = \*h^i) = \prod_{H_j \in \*H^i}\frac{\sum_{\{h\mid H_{\*{pa}_h} \in \*H_\star, H_j < H\}} P(\*H_\star = \*h)}{\sum_{\{h\mid H_{\*{pa}_h} \in \*H_\star, H_{j-1} < H\}} P(\*H_\star = \*h)} \label{eq:ctf_factor_decompose}
\end{align}$\hfill$ $\blacksquare$
\end{theorem}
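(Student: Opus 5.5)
The plan is to reduce both claims to statements about events on the exogenous variables, using the fact that each term in a ctf-factor has all of its parents fixed by its subscript. For every $H_{\*{pa}_h} \in \*H_\star$, the potential response is $H_{\*{pa}_h}(\*u) = f_H(\*{pa}_h, \*u_H)$, with $\*{pa}_h$ a vector of constants. So the event $E_H := \{H_{\*{pa}_h} = h\}$ is measurable with respect to $\*U_H$ alone. If $H$ appears several times with different subscripts, every copy is still a function of the same $\*U_H$, and the argument below goes through unchanged.

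\textbf{Product form, Eq.~\eqref{eq:ctf_factor_compose}.} I will use the standard semi-Markovian convention behind the causal diagram: the collections $\*U^{(i)} := \bigcup_{H \in \*H^i} \*U_H$ are mutually independent across distinct c-components $\*H^i$ of $\mathcal{G}[\*V(\*H_\star)]$. This is where absence of bidirected edges, together with the grouping of exogenous variables into c-components, is invoked. Then $\bigcap_{H\in\*H^i}E_H$ is $\sigma(\*U^{(i)})$-measurable, and the $L_3$ valuation splits as
\[
P(\*H_\star=\*h)=\sum_{\*u}\prod_i \mathbbm{1}\Big[\textstyle\bigcap_{H\in\*H^i}E_H\Big]P(\*u)=\prod_i P(\*H^i_\star=\*h^i).
\]
I expect this independence step to be the main obstacle. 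It is the only place the graphical assumptions enter, and it must hold jointly across blocks, not merely pairwise. I would state it explicitly as the modelling assumption underlying $\mathcal{G}$.

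\textbf{Marginalization.} Next I observe that summing $P(\*H_\star=\*h)$ over the values of all terms whose variable comes after $H_j$ in the topological order deletes exactly those terms. The subscripts of the remaining terms are constants, not the summed values, so each summed indicator adds up to $1$ pointwise in $\*u$. Hence the numerator in Eq.~\eqref{eq:ctf_factor_decompose} equals $P(\bigcap_{k\le j}E_{H_k})$, and the denominator equals $P(\bigcap_{k\le j-1}E_{H_k})$. Both are positive by the positivity assumption, so each ratio is the conditional probability $P(E_{H_j}\mid \bigcap_{k<j}E_{H_k})$.

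\textbf{Conditioning and chain rule.} Fix the c-component $\*H^i$ that contains $H_j$, and split the conditioning events into those in $\*H^i$ and those outside it. The event $E_{H_j}\cap\bigcap_{k<j,\,H_k\in\*H^i}E_{H_k}$ lies in $\sigma(\*U^{(i)})$, while the remaining conditioning events lie in $\sigma(\*U^{(i')}: i'\neq i)$. By the independence above, the outside events can be dropped, so the ratio equals $P(E_{H_j}\mid\bigcap_{k<j,\,H_k\in\*H^i}E_{H_k})$. Taking the product over $H_j\in\*H^i$ in topological order and applying the chain rule gives $P(\bigcap_{H\in\*H^i}E_H)=P(\*H^i_\star=\*h^i)$, which is Eq.~\eqref{eq:ctf_factor_decompose}.
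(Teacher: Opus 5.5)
Your proof is correct. The paper never proves this theorem: it appears without proof in the appendix on previous results, imported from \citet{correaetal:21}, so there is no in-paper argument to compare against. Your route has three parts. Each $H_{\*{pa}_h}$ is a function of $\*U_H$ alone. The c-components of $\mathcal{G}[\*V(\*H_\star)]$ split the relevant exogenous variables into independent blocks. The second formula is then the chain rule, after discarding out-of-component conditioning events by that independence. This is the natural lift of the Tian--Pearl c-factor decomposition (Thm.~\ref{thm:cfactor_decomposition}) to ctf-factors. Working with exogenous events rather than interventional c-factors is what lets it handle repeated variables with different subscripts at no extra cost.

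Two of your side remarks are worth confirming.

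First, you are right that joint rather than pairwise independence is required. The paper's definition of the diagram only puts a bidirected edge between $V_i$ and $V_j$ when $\*U_i,\*U_j$ are dependent. Take three pairwise-independent but jointly dependent exogenous bits (e.g.\ $U_3=U_1\oplus U_2$) driving three unconnected variables: this already violates Eq.~\eqref{eq:ctf_factor_compose}. So the semi-Markovian convention you invoke genuinely has to be assumed.

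Second, because the subscripts are fixed constants, your argument never uses the topological order; the identity holds for any ordering of $\*V(\*H_\star)$. The order matters only if subscript values are read as tied to other terms' values, as in the output of the AST. In that case it guarantees that retained terms never carry summed-out values in their subscripts. It also lets the later terms be summed out one at a time in reverse order, so that each indicator sums to $1$.
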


Next, we discuss how to classify a ctf-factor as "consistent", based on conflicts in the counterfactual terms. And how to convert a consistent ctf-factor into a Layer 2 c-factor.

\begin{definition}[Consistent ctf-factor] \label{def:consistent_ctffactor}
    A ctf-factor $Q[\*C_\star](\*c) = P(\*C_\star = \*c)$ is called \textit{consistent} if it does
    not contain two counterfactuals $X_{\*{pa}_x}, Y_{\*{pa}_y} \in \*C_\star$ with values $x, y$ such that any pair of values in ${x \cup y}\cup \*{pa}_x \cup \*{pa}_y$ conflict. Otherwise, the ctf-factor is called \textit{inconsistent}. $\hfill$ $\blacksquare$
\end{definition}

\begin{lemma}[Collapsing operation] \label{lem:collapsing_operation}
    If a ctf-factor $Q[\*C_\star](\*c)$ is consistent, then it is equivalent to the Layer 2 confounded (c-) factor, as follows,
    \begin{align}
        Q[\*C_\star](\*c) = Q[\*C](\*v), \text{ with $\*v$ consistent with $\*c$ and the subscripts in $\*C_\star$},
    \end{align}
    where the c-factor is defined in Eq. \ref{eq:c_factor}. $\hfill$ $\blacksquare$
\end{lemma}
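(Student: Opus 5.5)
The plan is to prove the identity pointwise in the exogenous variables $\*u$. Both sides are sums over $P(\*u)$ of indicator products, so it suffices to show that, for every $\*u$, the counterfactual event $\{\*C_\star = \*c\}$ holds iff the interventional event $\{\*C_{\*v \setminus \*c} = \*c\}$ holds. The right-hand event is $P(\*c; \doo{\*v\setminus\*c})$ written through potential responses.

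\textbf{Step 1: build one global assignment.} I would first use consistency (Def.~\ref{def:consistent_ctffactor}) to construct a single assignment $\*v$ over $\*C \cup \*{Pa}(\*C)$ that agrees with every value and every subscript value occurring in $Q[\*C_\star](\*c)$.

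\begin{itemize}
\item Because no pair of values in $x \cup y \cup \*{pa}_x \cup \*{pa}_y$ conflicts for any two terms, all values assigned to the same variable anywhere in the factor coincide.
\item In particular, a variable $C$ cannot appear twice with different subscripts or different values, since two differing parent values would be a conflict. So $\*V(\*C_\star) = \*C$ is in bijection with $\*C_\star$.
\item For every $C \in \*C$, the subscript $\*{pa}_c$ agrees with $\*c$ on $\*{Pa}_C \cap \*C$, and with $\*v$ on $\*{Pa}_C \setminus \*C$.
\end{itemize}

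Values of $\*v$ on variables outside $\*C \cup \*{Pa}(\*C)$ are arbitrary. By Lem.~\ref{lem:exclusion}, they do not affect $C_{\*v\setminus\*c}$, because only the parents of $\*C$ lying outside $\*C$ matter.

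\textbf{Step 2: induction in topological order.} Fix $\*u$ and take a topological order $C_1 < C_2 < \dots$ of $\*C$. I would show that $C_{i,\*v\setminus\*c}(\*u) = c_i$ for all $i$ iff $C_{i,\*{pa}_{c_i}}(\*u) = c_i$ for all $i$.

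In the submodel $\mathcal{M}_{\*v \setminus \*c}$, we have $C_i = f_{C_i}(\*{pa}^{(1)}, \*{pa}^{(2)}, \*u_{C_i})$, where:
\begin{itemize}
\item $\*{pa}^{(1)}$ are the parents outside $\*C$, fixed by the intervention to the corresponding values of $\*v$, which equal the subscript values in $\*{pa}_{c_i}$ by Step 1;
\item $\*{pa}^{(2)}$ are the parents in $\*C$, which come earlier in the order and by the inductive hypothesis take the values $\*c$, again equal to $\*{pa}_{c_i}$ by consistency.
\end{itemize}
Hence $C_{i,\*v\setminus\*c}(\*u) = f_{C_i}(\*{pa}_{c_i}, \*u_{C_i}) = C_{i,\*{pa}_{c_i}}(\*u)$ whenever the earlier coordinates equal $\*c$. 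The same recursion gives the converse direction. Whenever the events first fail, they fail at the same index $i$, so the two indicator products are equal.

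\textbf{Step 3: conclude.} Summing the equal indicators against $P(\*u)$ yields $Q[\*C_\star](\*c) = P(\*C_{\*v\setminus\*c} = \*c) = Q[\*C](\*v)$ by Eq.~\ref{eq:c_factor}.

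\textbf{Main obstacle.} The delicate part is Step 1: making precise that the consistency condition (Def.~\ref{def:consistent_ctffactor}) really yields a single coherent $\*v$. That is, every parent value appearing in some subscript must match both the realized values in $\*c$ and the subscripts of the other terms. Everything after this is a routine composition argument on the structural equations.
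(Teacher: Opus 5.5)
Your proof is correct. The paper itself does not prove this lemma: it lists it in App.~\ref{app:prior_work} among results reproduced from prior work \citep{correaetal:21}. So there is no in-paper argument to match, and your write-up works as a self-contained verification.

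The natural alternative is a derivation in the counterfactual-axiom style of that literature. It would first use exclusion restrictions to rewrite every $C_{[\*{pa}_c]}$ as $C$ under one common intervention. That intervention fixes $\*{Pa}(\*C)\setminus\*C$ to $\*v$ and $\*C\setminus\{C\}$ to $\*c$, and it is well defined by consistency. Composition would then drop the interventions on $\*C$. You instead work directly with the structural equations, pointwise in $\*u$.

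The two routes share the same core. The inclusion $\{\*C_{\*v\setminus\*c}=\*c\}\subseteq\{\*C_\star=\*c\}$ is immediate: a single use of composition after exclusion, or in your setting, direct substitution of $\*c$ for the in-$\*C$ parents. The reverse inclusion needs an induction along a topological order of $\*C$ in either presentation, because that is where recursiveness is used.

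Your version makes explicit where each hypothesis enters:
\begin{itemize}
\item consistency, to produce one coherent $\*v$;
\item the full-parent form of the subscripts, to get $C_{i,\*{pa}_{c_i}}(\*u)=f_{C_i}(\*{pa}_{c_i},\*u_{C_i})$;
\item acyclicity, for the induction.
\end{itemize}
The axiomatic version is more modular and slots directly into calculus-based derivations.

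Finally, the step you flag as the main obstacle is immediate. A single term $C_{[\*{pa}_c]}=c$ cannot conflict with itself, because $C\notin\*{Pa}_C$ and each parent receives exactly one value. So the pairwise no-conflict condition of Def.~\ref{def:consistent_ctffactor} already forces each variable to carry a single value throughout the factor.
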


Finally, we reproduce for ease of reference the classic \textsc{identify} algorithm that provides a method to identify a Layer 2 c-factor from an input c-factor.

\begin{algorithm}[H]
        \caption{$\textsc{identify}$ \citep[~Sec. 4.4]{tian:pea03-r290-L}} \label{alg:identify}
        \begin{algorithmic}[1]
        
        \STATE {\bfseries Input:} Causal diagram $\mathcal{G}$; set $\*C \subseteq \*T \subseteq \*V$ s.t. $\mathcal{G}[{\*T}$] has one single c-component; c-factor $Q[\*T](\*v)$

        \smallskip
        \STATE {\bfseries Output:} Expression for $Q[\*C](\*v)$ in terms of $Q[\*T](\*v)$; or \textbf{FAIL}


        \medskip
        \STATE Let $\*H := An(\*C)$ in $\mathcal{G}_{\*T}$

        \smallskip
        \IF{$\*H = \*C$}
            \STATE Return $Q[\*C](\*v) = \sum_{\*t \setminus \*c} Q[\*T](\*v)$
        \smallskip
        \ELSIF{$\*H = \*T$}
            \STATE Return \textbf{FAIL}
        \smallskip
        \ELSIF{$\*C \subset \*H \subset \*T$}
            \STATE $Q[\*H](\*v) = \sum_{\*t \setminus \*h} Q[\*T](\*v)$
            \STATE Let $\*H^i$ be the ctf c-component in $\*H$ according to $\mathcal{G}_{\*H}$ s.t. $\*C \subseteq \*H^i$ 
            \STATE Compute $Q[\*H^i](\*v)$ from $Q[\*H](\*v)$ using Theorem \ref{thm:cfactor_decomposition}
            \STATE Return $\textsc{identify}$($\mathcal{G}, \*C,Q[\*H^i](\*v)$)
        \smallskip
        \ENDIF
        \end{algorithmic}
      \end{algorithm}

\begin{theorem}[C-factor decomposition (Lem. 4, ibid.)] \label{thm:cfactor_decomposition}
    Given $\*H \subseteq \*V$, let $\*H$ be partitioned into c-components $\*H^1,...,\*H^m$ in the subgraph $\mathcal{G}_{\*H}$. Then, $Q[\*H](\*v)$ decomposes as 
    \begin{align}
        Q[\*H](\*v) = \prod_i Q[\*H^i](\*v)
    \end{align}
    Furthermore, let $H_1 < H_2 < ...$ be a topological ordering of the variables in $\mathcal{G}[{\*H}]$. Let $\*H^{(\leq j)} := \{H_1, ..., H_j\}$ be the set of variables in $\*H$ ordered up to and including $H_j$, with $\*H^{(\leq 0)} := \emptyset$. Then, each $Q[\*H^i]$ is computable from $Q[\*H](\*v)$ and given by
    \begin{align}
        Q[\*H^i] = \prod_{H_j \in \*H^i} \frac{Q[\*H^{(\leq j)}]}{Q[\*H^{(\leq j-1)}]}, \label{eq:cfactor_compute_1}
    \end{align}
    where each $Q[\*H^{(\leq j)}](\*v)$ can be computed simply as
    \begin{align}
        Q[\*H^{(\leq j)}] = \sum_{\*h \setminus \*h^{(\leq j)}} Q[\*H] \label{eq:cfactor_compute_2}
    \end{align}
    The vector $(\*v)$ is omitted from Eqs. \ref{eq:cfactor_compute_1},\ref{eq:cfactor_compute_2} for legibility. $\hfill$ $\blacksquare$
\end{theorem}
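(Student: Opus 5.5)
The plan is to work directly from the SCM semantics of the c-factor, $Q[\*H](\*v) = P(\*h; \doo{\*v \setminus \*h})$. In the submodel $\mathcal{M}_{\*v\setminus\*h}$, each $H \in \*H$ is generated by $f_H(\*u_H, \*{pa}_H)$, where parents outside $\*H$ are fixed to their values in $\*v$. This gives
\begin{align}
Q[\*H](\*v) = \sum_{\*u} \prod_{H \in \*H} \mathbbm{1}\big[f_H(\*u_H, \*{pa}_H) = h\big] P(\*u).
\end{align}

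\textbf{First claim (product form).} Let $\*U^i = \bigcup_{H \in \*H^i} \*U_H$. Because $\*H^1,\dots,\*H^m$ are distinct c-components of $\mathcal{G}[\*H]$, there is no bidirected edge between them. Under the usual semi-Markovian convention, this means the sets $\*U^i$ can be taken disjoint and mutually independent, so $P(\*u) = \prod_i P(\*u^i)$.

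The integrand then splits into factors, one per $i$. The $i$-th factor depends only on $\*u^i$ and on $\*v$; parents of $\*H^i$ lying in other components are held at their $\*v$-values through the indicator constraints. The sum over $\*u$ therefore becomes a product of sums, and the $i$-th sum is exactly $P(\*h^i; \doo{\*v\setminus \*h^i}) = Q[\*H^i](\*v)$. The only point needing care is the bookkeeping showing that fixing other-component parents to $\*v$ is the same as intervening on them.

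\textbf{Second claim, Eq.~\ref{eq:cfactor_compute_2} (marginalization).} Take the variables of $\*H$ in decreasing topological order. The last variable has no children inside $\*H$, so summing its indicator over its value gives $1$ for every $\*u$. Iterating this shows that $\sum_{\*h\setminus\*h^{(\le j)}} Q[\*H]$ is the same $\*u$-sum with only the first $j$ indicators, i.e.\ $Q[\*H^{(\le j)}]$.

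\textbf{Second claim, Eq.~\ref{eq:cfactor_compute_1} (ratio formula).} The same marginalization can be applied factorwise to the product $\prod_k Q[\*H^k]$. The topological order restricted to each $\*H^k$ is still topological, so summing out later variables only marginalizes within each factor. This yields
\begin{align}
Q[\*H^{(\le j)}] = \prod_k \widetilde Q_k^{(\le j)},
\end{align}
where $\widetilde Q_k^{(\le j)}$ denotes the $\*u^k$-sum over the indicators of $\*H^k \cap \*H^{(\le j)}$ only.

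Now let $H_j \in \*H^{i}$. Passing from $j-1$ to $j$ changes only the $i$-th factor. Hence
\begin{align}
\frac{Q[\*H^{(\le j)}]}{Q[\*H^{(\le j-1)}]} = \frac{\widetilde Q_i^{(\le j)}}{\widetilde Q_i^{(\le j-1)}}.
\end{align}
The denominators are nonzero by the paper's positivity assumption. Taking the product of these ratios over all $H_j \in \*H^i$ telescopes: the empty-set term is $1$, and the final term is $\widetilde Q_i$ evaluated on all of $\*H^i$, which is $Q[\*H^i](\*v)$.

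The main obstacle is justifying this telescoping step carefully. One must check that the intermediate objects $\widetilde Q_k^{(\le j)}$ are well defined and that the factorization is preserved under marginalization, rather than recomputing the c-components of each $\mathcal{G}[\*H^{(\le j)}]$. Working with the fixed partition $\*H^1,\dots,\*H^m$ and the explicit $\*u$-sums is what makes this go through.
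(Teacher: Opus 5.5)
Your proof is correct and is essentially the standard argument for this result. The paper gives no proof of its own and only cites Tian and Pearl, who apply the c-component factorization of an observational distribution inside the submodel $\mathcal{M}_{\*v\setminus\*h}$; that is exactly your $\*u$-sum factorization by c-components, marginalization in reverse topological order, and telescoping of the partial factors $\widetilde Q_i^{(\le j)}$, here written out directly rather than by citing the lemma. The hypothesis you make explicit---that the exogenous sets $\*U^i$ of distinct c-components are mutually, not merely pairwise, independent, as in the semi-Markovian convention---is genuinely needed, since the paper's pairwise definition of bidirected edges would not by itself support the product formula.
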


\subsection{Steps to identification} \label{app:steps_to_id}

We summarize in Fig. \ref{fig:identification_steps} the steps involved in algorithmic identification from counterfactual (Layer 3) data. 

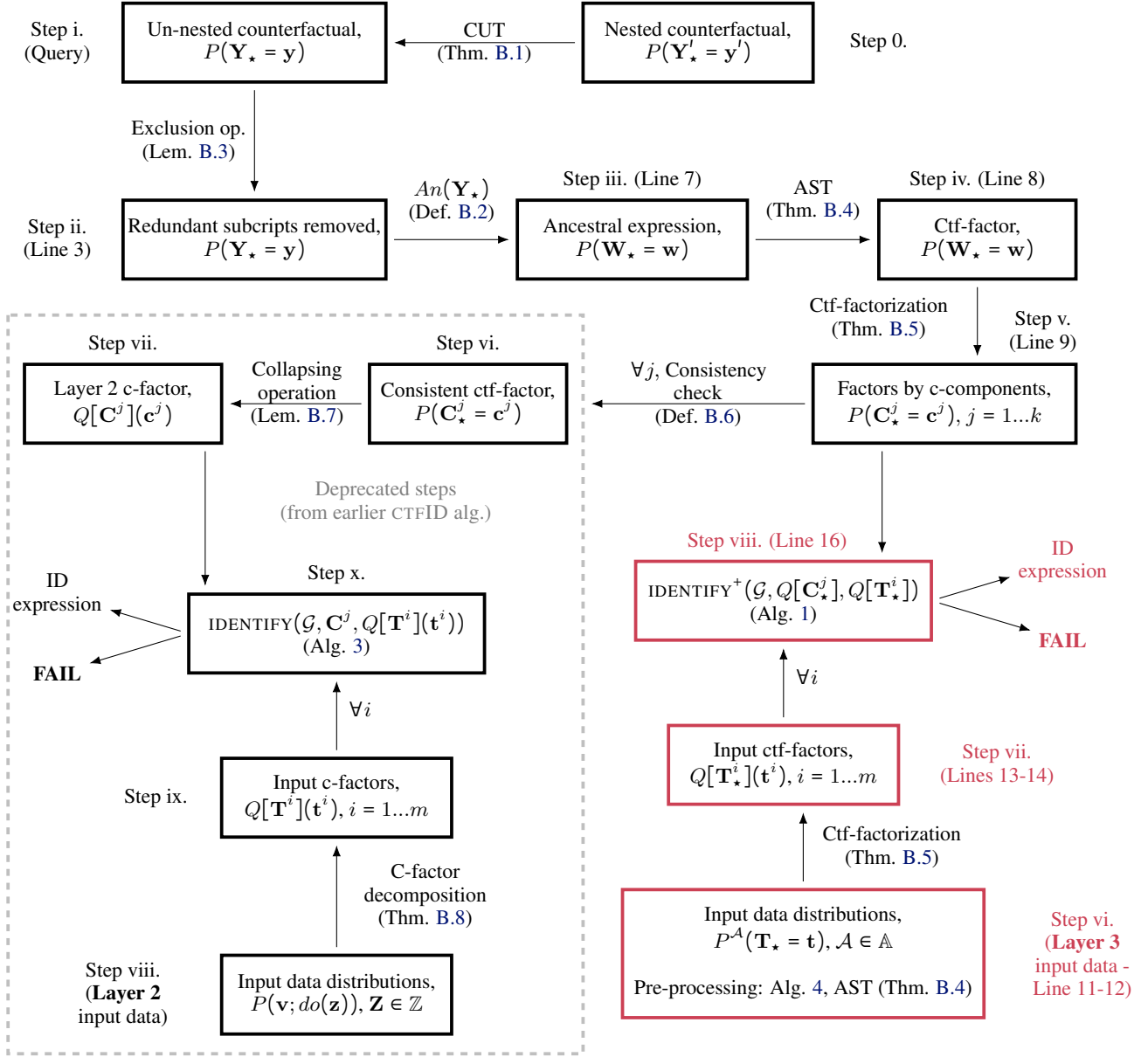
\begin{figure}[h!]
        \centering
\begin{tikzpicture}
  
  \begin{scope}[shift={(7.cm,0.cm)}]
  \filldraw [fill=white,line width=0.5mm] (0, 0) rectangle (3.5,-1.2);
  \node[align=center,font=\small] (t1) at (1.75, -0.6) {Nested counterfactual,\\$P(\*Y'_\star = \*y')$};
  \node[align=center,font=\small] (t1) at (4.5, -.6) {Step 0.};
  \end{scope}
  
  \filldraw [fill=white,line width=0.5mm] (0, 0) rectangle (4,-1.2);
  \node[align=center,font=\small] (t1) at (2, -0.6) {Un-nested counterfactual,\\$P(\*Y_\star = \*y)$};
  \node[align=center,font=\small] (t1) at (-1, -.6) {Step i.\\(Query)};
  \node (dot1) at (7, -0.6) {\ };
  \node (dot2) at (4, -0.6) {\ };
  \path[-Latex] (dot1) edge (dot2);
  \node[align=center,font=\small] (t1) at (5.5, -.6) {CUT\\(Thm. \ref{thm:cut})};

  \filldraw [fill=white,line width=0.5mm] (0, -3) rectangle (4,-4.2); 
  \node[align=center,font=\small] (t1) at (2, -3.6) {Redundant subcripts removed,\\$P(\*Y_\star = \*y)$};
  \node[align=center,font=\small] (t1) at (-1, -3.6) {Step ii.\\(Line 3)};
  \node (dot1) at (2, -1.2) {\ };
  \node (dot2) at (2, -3) {\ };
  \path[-Latex] (dot1) edge (dot2);
  \node[align=center,font=\small] (t1) at (1., -2.1) {Exclusion op.\\(Lem. \ref{lem:exclusion})};

  \filldraw [fill=white,line width=0.5mm] (6, -3) rectangle (9.5,-4.2);
  \node (dot1) at (4, -3.6) {\ };
  \node (dot2) at (6, -3.6) {\ };
  \path[-Latex] (dot1) edge (dot2);
  \node[align=center,font=\small] (t1) at (5, -3) {$An(\*Y_\star)$\\(Def. \ref{def:ctf_ancestors})};
  \node[align=center,font=\small] (t1) at (7.75, -3.6) {Ancestral expression,\\$P(\*W_\star = \*w)$};
  \node[align=center,font=\small] (t1) at (7.75, -2.7) {Step iii. (Line 7)};

  \filldraw [fill=white,line width=0.5mm] (11.5, -3) rectangle (14.5,-4.2);
  \node (dot1) at (9.5, -3.6) {\ };
  \node (dot2) at (11.5, -3.6) {\ };
  \path[-Latex] (dot1) edge (dot2);
  \node[align=center,font=\small] (t1) at (13, -2.7) {Step iv. (Line 8)};
  \node[align=center,font=\small] (t1) at (10.5, -3) {AST\\(Thm. \ref{thm:ast})};
  \node[align=center,font=\small] (t1) at (13, -3.6) {Ctf-factor,\\$P(\*W_\star = \*w)$};

  \filldraw [fill=white,line width=0.5mm] (10.5, -5.5) rectangle (14.5,-7.2+0.5);
  \node (dot1) at (13, -4.2) {\ };
  \node (dot2) at (13, -5.5) {\ };
  \path[-Latex] (dot1) edge (dot2);
  \node[align=center,font=\small] (t1) at (11.5, -4.8) {Ctf-factorization\\(Thm. \ref{thm:ctf_factorization})};
  \node[align=center,font=\small] (t1) at (14, -5) {Step v.\\(Line 9)};
  \node[align=center,font=\small] (t1) at (12.5, -6.1) {Factors by c-components,\\$P(\*C^j_\star = \*c^j),$  $j=1...k$};

  \filldraw [fill=none,line width=0.5mm, dashed, color=lightgray] (-1.75, -4.75) rectangle (7,-16);  
  \node[align=center, color=gray, font=\small] (t1) at (4., -7.6) {Deprecated steps\\(from earlier \textsc{ctfID} alg.)};

  \filldraw [fill=white,line width=0.5mm] (3.75, -5.5) rectangle (6.75,-6.7);
  \node (dot1) at (10.5, -6.1) {\ };
  \node (dot2) at (7, -6.1) {\ };
  \path[-Latex] (dot1) edge (dot2);
  \node[align=center,font=\small] (t1) at (8.75, -5.95) {$\forall j,$ Consistency\\check \\(Def. \ref{def:consistent_ctffactor})};
  \node[align=center,font=\small] (t1) at (5.25, -6.1) {Consistent ctf-factor,\\$P(\*C^j_\star = \*c^j)$};
  \node[align=center,font=\small] (t1) at (5.25, -5.2) {Step vi.};

  \filldraw [fill=white,line width=0.5mm] (-1.5, -5.5) rectangle (1.5,-6.7);
  \node (dot1) at (3.75, -6.1) {\ };
  \node (dot2) at (1.5, -6.1) {\ };
  \path[-Latex] (dot1) edge (dot2);
  \node[align=center,font=\small] (t1) at (., -5.2) {Step vii.};
  \node[align=center,font=\small] (t1) at (., -6.1) {Layer 2 c-factor,\\$Q[\*C^j](\*c^j)$};
  \node[align=center,font=\small] (t1) at (2.65, -5.95) {Collapsing\\operation \\(Lem. \ref{lem:collapsing_operation})};

  \begin{scope}[shift={(0.cm,-1cm)}]
  \filldraw [fill=white,line width=0.5mm] (1.5, -13.5) rectangle (5,-14.7);
  \node (dot1) at (3.25, -13.5) {\ };
  \node (dot2) at (3.25, -11.7) {\ };
  \path[-Latex] (dot1) edge (dot2);
  \node[align=center,font=\small] (t1) at (4.6, -12.6) {C-factor\\decomposition\\(Thm. \ref{thm:cfactor_decomposition})};
  \node[align=center,font=\small] (t1) at (0, -14.1) {Step viii.\\(\textbf{Layer 2}\\{input data})};
  \node[align=center,font=\small] (t1) at (3.25, -14.1) {Input data distributions,\\$P(\*v; \doo{\*z})$, $\*Z \in \mathbb{Z}$};
  \end{scope}

  \begin{scope}[shift={(0.cm,-0.5cm)}]
  \filldraw [fill=white,line width=0.5mm] (1.5, -11) rectangle (5,-12.2);
  \node[align=center,font=\small] (t1) at (0.5, -11.6) {Step ix.};
  \node (dot1) at (3.25, -11) {\ };
  \node (dot2) at (3.25, -9.7) {\ };
  \path[-Latex] (dot1) edge (dot2);
  \node[align=center,font=\small] (t1) at (3.6, -10.25) {$\forall i$};
  \node[align=center,font=\small] (t1) at (3.25, -11.6) {Input c-factors,\\$Q[\*T^i](\*t^i)$, $i=1...m$};
  \end{scope}
  
  \begin{scope}[shift={(0.cm,-0.5cm)}]
  \filldraw [fill=white,line width=0.5mm] (1, -8.5) rectangle (5.5,-9.7);
  \node[align=center,font=\small] (t1) at (3.25, -8.2) {Step x.};
  \node (dot1) at (1.25, -6.2) {\ };
  \node (dot2) at (1.25, -8.5) {\ };
  \path[-Latex] (dot1) edge (dot2);
  \node[align=center,font=\small] (t1) at (3.25, -9.1) {\textsc{identify}$(\mathcal{G},\*C^j,Q[\*T^i](\*t^i))$\\(Alg. \ref{alg:identify})};
  \node (dot3) at (1, -9.1) {\ };
  \node[align=center,font=\small] (t2) at (-1, -8.5) {ID\\expression};
  \node[align=center,font=\small] (t3) at (-1, -9.7) {\textbf{FAIL}};
  \path[-Latex] (dot3) edge (t2);
  \path[-Latex] (dot3) edge (t3);
  \end{scope}

  \begin{scope}[shift={(0.8cm,0cm)}]  

  \begin{scope}[shift={(0.cm,0cm)}]
  \filldraw [fill=white,line width=0.5mm,draw=BrickRed] (7, -8.5) rectangle (11.5,-9.7);
  \node[align=center,font=\small,color=BrickRed] (t1) at (9, -8.2) {Step viii. (Line 16)};
  \node (dot1) at (1.75+9, -6.7) {\ };
  \node (dot2) at (1.75+9, -8.5) {\ };
  \path[-Latex] (dot1) edge (dot2);
  \node[align=center,font=\small] (t1) at (3.25+6, -9.1) {$\textsc{identify}^+(\mathcal{G},Q[\*C^j_\star],Q[\*T^i_\star])$\\(Alg. \ref{alg:identify_plus})};
  \node (dot3) at (11.5, -9.1) {\ };
  \node[align=center,font=\small,color=BrickRed] (t2) at (13.5, -8.5) {ID\\expression};
  \node[align=center,font=\small,color=BrickRed] (t3) at (13.5, -9.7) {\textbf{FAIL}};
  \path[-Latex] (dot3) edge (t2);
  \path[-Latex] (dot3) edge (t3);
  \end{scope}

  \filldraw [fill=white,line width=0.5mm,draw=BrickRed] (1.5+6, -11) rectangle (5+6,-12.2);
  \node[align=center,font=\small,color=BrickRed] (t1) at (12.5, -11.6) {Step vii.\\(Lines 13-14)};
  \node (dot1) at (3.25+6, -11) {\ };
  \node (dot2) at (3.25+6, -9.7) {\ };
  \path[-Latex] (dot1) edge (dot2);
  \node[align=center,font=\small] (t1) at (3.6+6, -10.25) {$\forall i$};
  \node[align=center,font=\small] (t1) at (3.25+6, -11.6) {Input ctf-factors,\\$Q[\*T^i_\star](\*t^i)$, $i=1...m$};

  \begin{scope}[shift={(0.3cm,0cm)}]
  \filldraw [fill=white,line width=0.5mm,draw=BrickRed] (1.5+5, -13.5) rectangle (5+7,-14.7-0.75);
  \node (dot1) at (3.25+6, -13.5) {\ };
  \node (dot2) at (3.25+6, -12.2) {\ };
  \path[-Latex] (dot1) edge (dot2);
  \node[align=center,font=\small] (t1) at (4.6+6, -12.85) {Ctf-factorization\\(Thm. \ref{thm:ctf_factorization})};
  \node[align=center,font=\small,color=BrickRed] (t1) at (13.5, -14.5) {Step vi.\\(\textbf{Layer 3}\\{input data} -\\Line 11-12)};
  \node[align=center,font=\small] (t1) at (3.25+6, -14.6) {Input data distributions,\\$P^\mathcal{A}(\*T_\star = \*t)$, $\mathcal{A} \in \mathbb{A}$\\ \\Pre-processing: Alg. \ref{alg:regime_regex}, AST (Thm. \ref{thm:ast})\\};
  \end{scope}  
  
  \end{scope}

\end{tikzpicture}
\vspace{0.2in}
    \caption{Algorithmic steps for counterfactual identification. Steps in the {\textbf{Gray-dotted}} box illustrate the deprecated steps of the old \textsc{ctfID} algorithm from prior work \citep{correaetal:21}, which only allows identification from Layer 2 input data. \textcolor{BrickRed}{\textbf{Red}} boxes illustrate the new \textcolor{BrickRed}{$\textsc{ctfIDu}^+$} algorithm (Alg. \ref{alg:ctfidu_plus}), which allows identification from Layer 3 input data. Steps 0-v are shared by both.}
    \label{fig:identification_steps}
\end{figure}

As a pre-processing step, if the query is a nested counterfactual $P(\*Y'_\star=\*y')$,
\vspace{-0.05in}
\begin{itemize}[leftmargin=1.75cm]
    \item[Steps 0:] Map it to un-nested terms, $P(\*Y_\star=\*y)$ using the Un-Nesting Theorem (Thm. \ref{thm:cut}) which is now the input to the $\textsc{ctfIDu}^+$ algorithm.
\end{itemize}
Steps i-v of $\textsc{ctfIDu}^+$ (Alg. \ref{alg:ctfidu_plus}) involve
\begin{itemize}[leftmargin=1.75cm]
    \item[Steps ii:] Remove any redundant subscripts (such as interventions on non-ancestors) or trivial counterfactuals (Lines 3-6)
    \item[Step iii-iv:] Expand the query into the set of its counterfactual ancestors (Def. \ref{def:ctf_ancestors}) (Line 7); identifying this expression is both necessary and sufficient to identify the query; re-write this expression in ctf-factor format (Line 8);
    \item[Steps v:] Factorize this ctf-factor into smaller ctf-factors based on the confounding structure in the graph, using the ctf-factorization formulas (Thm. \ref{thm:ctf_factorization}); identifying each of these smaller ctf-factors is both necessary and sufficient to identify the query (Line 9).  
\end{itemize}
These steps are the same as the prior work which designed the \textsc{ctfID} algorithm \citep[~Alg. 1]{correaetal:21} for counterfactual identification from Layer 2 data, we merely extend the proof of necessity of Step iii. for Layer 3 input data.

After this stage, the prior \textsc{ctfID} maps each of these ctf-factor terms to Layer 2 c-factors \textit{only if} it is "consistent", and then applies the celebrated \textsc{identify} algorithm to identify each of these c-factors from input Layer 2 data (\textbf{gray-dotted} box, Steps vi-x in Fig. \ref{fig:identification_steps}). If all ctf-factors in Step v. are thus identified, these terms can be chained to compute the query. Otherwise, identification \textbf{FAILS}.

By contrast, Steps vi-viii  of the new $\textsc{ctfIDu}^+$ algorithm (\textcolor{BrickRed}{\textbf{red}} boxes in Fig. \ref{fig:identification_steps}) involve:
\begin{itemize}[leftmargin=1.75cm]
    \item[Steps vi:] For each input data regime $\mathcal{A}$, pre-process using the helper function and AST Thm. to generate the input data expression in ctf-factor format (Line 11-12); 
    \item[Steps vii:] Map this to the smaller ctf-factors we can compute from this data, based on confounding structure (Line 13-14). These ctf-factors are sufficient to identify the query, if it is identifiable from the input data;
    \item[Steps viii:] For each query ctf-factor, find an input ctf-factor that contains it, and run the novel $\textsc{identify}^+$ algorithm to identify the query term from the input term (Line 16);
\end{itemize}
If all query ctf-factors from Step v. are identified in Step viii, these terms can be chained to identify the query (Line 22). Otherwise, identification \textbf{FAILS}. The necessity and sufficiency of each step (in particular, the $\textsc{identify}^+$ subroutine) provide the proof for the soundness and completeness of the $\textsc{ctfIDu}^+$ algorithm (Thm. \ref{thm:ctfidu_completeness}).

\subsection{Complexity of $\textsc{ctfIDu}^+$} \label{app:complexity}

It was shown by \citet{correa2024ctfcalc} that constructing an ancestor set for $\*Y_\star$ is $O(z(n+m))$, where $n, m, z,$ and $d$ refer to the number of nodes, edges, (different) interventions in $\*Y_\star$, and maximum cardinality of any observable variable in $\mathcal{G}$, respectively. Since a realizable input distribution has at most $n$ terms, $\textsc{identify}^+$ can be invoked up to $O(zn(n+m))$ times in the main loop. And each inner-loop can be invoked up to $n$ times. The time complexity is $O(zn^2(n+m))$.

\subsection{Example using $\textsc{identify}^+$} \label{app:identify_example}

Below, we show an example of using the $\textsc{identify}^+$ sub-routine to compute a ctf-factor if and only if it is identifiable from another ctf-factor. Each step of the computation is non-trivial, and cannot be skipped by just marginalizing out extra terms.

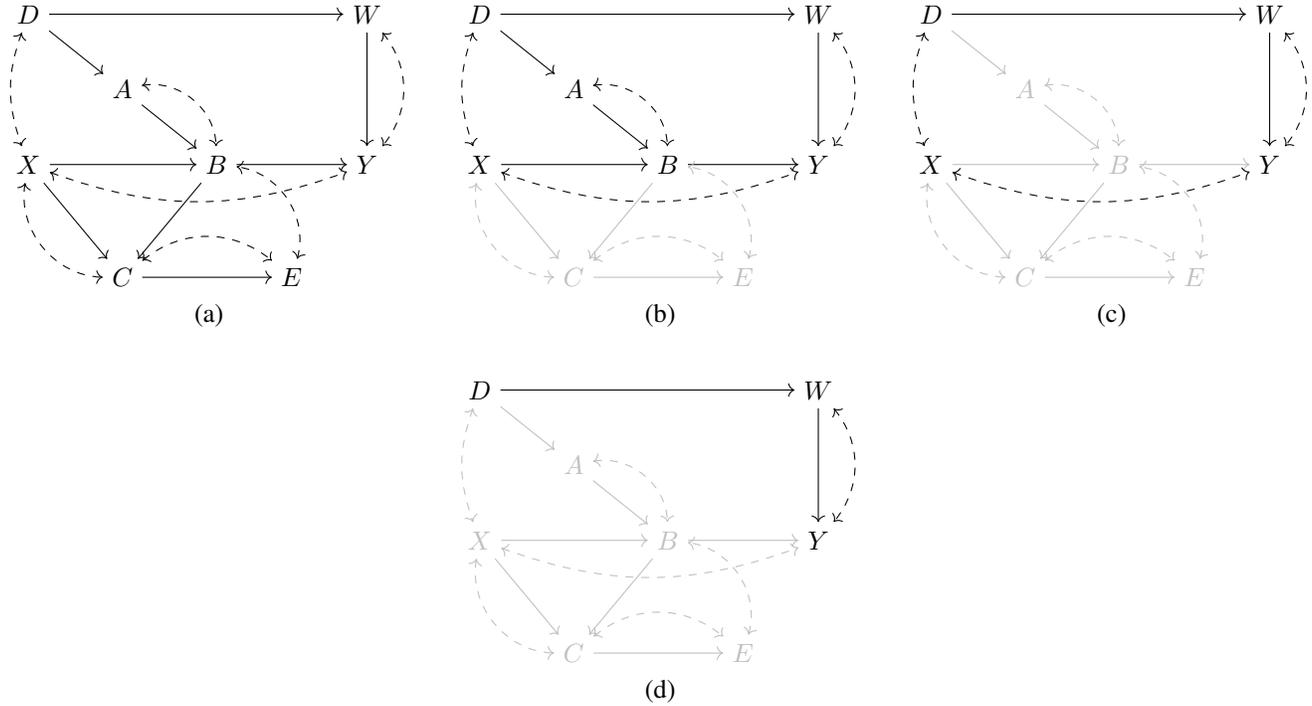
\begin{figure}[H]
        \centering
        \begin{tikzpicture}

        \node (X) at (0,0) {$X$};
        \node (B) at (2.5,0) {$B$};
        \node (Y) at (4.5,0) {$Y$};
        \node (D) at (0,2) {$D$};
        \node (A) at (1.25,1) {$A$};
        \node (W) at (4.5,2) {$W$};
        \node (C) at (1.25,-1.5) {$C$};
        \node (E) at (3.5,-1.5) {$E$};
        \path [->] (X) edge (B);
        \path [->] (B) edge (Y);
        \path [->] (D) edge (A);
        \path [->] (D) edge (W);
        \path [->] (A) edge (B);
        \path [->] (W) edge (Y);
        \path [->] (X) edge (C);
        \path [->] (B) edge (C);
        \path [->] (C) edge (E);
        \path [<->,dashed] (X) edge [bend right=20] (Y);
        \path [<->,dashed] (D) edge [bend right=20] (X);
        \path [<->,dashed] (W) edge [bend left=40] (Y);
        \path [<->,dashed] (A) edge [bend left=50] (B);
        \path [<->,dashed] (X) edge [bend right=50] (C);
        \path [<->,dashed] (C) edge [bend left=40] (E);
        \path [<->,dashed] (E) edge [bend right=50] (B);
        \node[align=center] (t) at (2.4,-2) {(a)};

        \node (X) at (0+6,0) {$X$};
        \node (B) at (2.5+6,0) {$B$};
        \node (Y) at (4.5+6,0) {$Y$};
        \node (D) at (0+6,2) {$D$};
        \node (A) at (1.25+6,1) {$A$};
        \node (W) at (4.5+6,2) {$W$};
        \node[color=lightgray] (C) at (1.25+6,-1.5) {$C$};
        \node[color=lightgray] (E) at (3.5+6,-1.5) {$E$};
        \path [->] (X) edge (B);
        \path [->] (B) edge (Y);
        \path [->] (D) edge (A);
        \path [->] (D) edge (W);
        \path [->] (A) edge (B);
        \path [->] (W) edge (Y);
        \path [->,color=lightgray] (X) edge (C);
        \path [->,color=lightgray] (B) edge (C);
        \path [->,color=lightgray] (C) edge (E);
        \path [<->,dashed] (X) edge [bend right=20] (Y);
        \path [<->,dashed] (D) edge [bend right=20] (X);
        \path [<->,dashed] (W) edge [bend left=40] (Y);
        \path [<->,dashed] (A) edge [bend left=50] (B);
        \path [<->,dashed,color=lightgray] (X) edge [bend right=50] (C);
        \path [<->,dashed,color=lightgray] (C) edge [bend left=40] (E);
        \path [<->,dashed,color=lightgray] (E) edge [bend right=50] (B);
        \node[align=center] (t) at (2.4+6,-2) {(b)};

        \node (X) at (0+12,0) {$X$};
        \node[color=lightgray] (B) at (2.5+12,0) {$B$};
        \node (Y) at (4.5+12,0) {$Y$};
        \node (D) at (0+12,2) {$D$};
        \node[color=lightgray] (A) at (1.25+12,1) {$A$};
        \node (W) at (4.5+12,2) {$W$};
        \node[color=lightgray] (C) at (1.25+12,-1.5) {$C$};
        \node[color=lightgray] (E) at (3.5+12,-1.5) {$E$};
        \path [->,color=lightgray] (X) edge (B);
        \path [->,color=lightgray] (B) edge (Y);
        \path [->,color=lightgray] (D) edge (A);
        \path [->] (D) edge (W);
        \path [->,color=lightgray] (A) edge (B);
        \path [->] (W) edge (Y);
        \path [->,color=lightgray] (X) edge (C);
        \path [->,color=lightgray] (B) edge (C);
        \path [->,color=lightgray] (C) edge (E);
        \path [<->,dashed] (X) edge [bend right=20] (Y);
        \path [<->,dashed] (D) edge [bend right=20] (X);
        \path [<->,dashed] (W) edge [bend left=40] (Y);
        \path [<->,dashed,color=lightgray] (A) edge [bend left=50] (B);
        \path [<->,dashed,color=lightgray] (X) edge [bend right=50] (C);
        \path [<->,dashed,color=lightgray] (C) edge [bend left=40] (E);
        \path [<->,dashed,color=lightgray] (E) edge [bend right=50] (B);
        \node[align=center] (t) at (2.4+12,-2) {(c)};

        \node[color=lightgray] (X) at (0+6,0-5) {$X$};
        \node[color=lightgray] (B) at (2.5+6,0-5) {$B$};
        \node (Y) at (4.5+6,0-5) {$Y$};
        \node (D) at (0+6,2-5) {$D$};
        \node[color=lightgray] (A) at (1.25+6,1-5) {$A$};
        \node (W) at (4.5+6,2-5) {$W$};
        \node[color=lightgray] (C) at (1.25+6,-1.5-5) {$C$};
        \node[color=lightgray] (E) at (3.5+6,-1.5-5) {$E$};
        \path [->,color=lightgray] (X) edge (B);
        \path [->,color=lightgray] (B) edge (Y);
        \path [->,color=lightgray] (D) edge (A);
        \path [->] (D) edge (W);
        \path [->,color=lightgray] (A) edge (B);
        \path [->] (W) edge (Y);
        \path [->,color=lightgray] (X) edge (C);
        \path [->,color=lightgray] (B) edge (C);
        \path [->,color=lightgray] (C) edge (E);
        \path [<->,dashed,color=lightgray] (X) edge [bend right=20] (Y);
        \path [<->,dashed,color=lightgray] (D) edge [bend right=20] (X);
        \path [<->,dashed] (W) edge [bend left=40] (Y);
        \path [<->,dashed,color=lightgray] (A) edge [bend left=50] (B);
        \path [<->,dashed,color=lightgray] (X) edge [bend right=50] (C);
        \path [<->,dashed,color=lightgray] (C) edge [bend left=40] (E);
        \path [<->,dashed,color=lightgray] (E) edge [bend right=50] (B);
        \node[align=center] (t) at (2.4+6,-2-5) {(d)};  
    \end{tikzpicture}
    \vspace{0.in}
    \caption{Example of using $\textsc{identify}^+$ to identify a ctf-factor from an input ctf-factor. Each step is a recursive call to $\textsc{identify}^+$.}
    \label{fig:example_identify}
\end{figure}

\begin{example}\label{ex:identify}
Given the graph in Fig. \ref{fig:example_identify}(a), consider the sub-routine call of $\textsc{identify}^+$ where we want

\textbf{Target ctf-factor}, $Q[\*C_\star](\*c) = P(w'_d, y_{b'w})$ 

\textbf{Input ctf-factor}, $Q[\*T_\star](\*t) = P(d, a_d, x, b'_{ax}, c_{bx'}, e_c, w'_d, y_{b'w})$ computable from the input data.
\vspace{-0.1in}

\textbf{Function call}: $\textsc{identify}^+\bigg(\mathcal{G}, P(w'_d, y_{b'w}), P(d, a_d, x, b'_{ax}, c_{bx'}, e_c, w'_d, y_{b'w}) \bigg)$ goes through the following steps:

1. Line 3 : $\*H_\star \hateq P(d, a_d, x, b'_{ax}, w'_d, y_{b'w})$
\vspace{-0.1in}
\begin{itemize}
    \item [-] This is the minimal subset $\*h_\star \supseteq \*c_\star$ without any subscripts appearing in the values-vector of $\*t_\star \setminus \*h_\star$. Because $d$ appears in the subscript of $a_d$, and $a$ appears in the subscript of $b'_{ax}$ etc., we can't shift any more terms from $\*h_\star$ to $\*t_\star \setminus \*h_\star$. Note: subscripts are value sensitive and $b \neq b'$, for instance. 
\end{itemize}

2. Line 9 : $Q[\*H_\star](\*h) = P(d, a_d, x, b'_{ax}, w'_d, y_{b'w}) = \sum_{c,e} P(\*T_\star = \*t)$
\vspace{-0.1in}
\begin{itemize}
    \item [-] Marginalize out $\{C, E\}$, giving us the graph in Fig. \ref{fig:example_identify}b.
    \item [-] Probability axioms don't permit marginalizing out any more terms.
\end{itemize}

3. Line 10 : the c-components in the subgraph in Fig. \ref{fig:example_identify}b are $\{D, X, W, Y\}$ and $\{A, B\}$

4. Line 11 : $\*H^i_\star \hateq P(d, x, w'_d, y_{b'w})$ 
\begin{itemize}
\vspace{-0.1in}
    \item [-] This corresponds to the smallest c-component containing $\*C_\star$, inducing the subgraph in Fig. \ref{fig:example_identify}c.
\end{itemize}

5. Line 12 : Compute $Q[\*H^i_\star](\*h^i) = P(d, x, w'_d, y_{b'w})$ from $Q[\*H_\star](\*h)$ using the ctf-factorization theorem (Thm. \ref{thm:ctf_factorization}) 

6. Line 13 : \textbf{Function call} $\textsc{identify}^+\bigg(\mathcal{G}, P(w'_d, y_{b'w}), P(d, x, w'_d, y_{b'w}) \bigg)$

7. Line 3 : $\*H_\star \hateq P(d, w'_d, y_{b'w})$
\vspace{-0.1in}
\begin{itemize}
    \item [-] This is the minimal subset $\*h_\star \supseteq \*c_\star$ without any subscripts appearing in the values-vector of $\*h'_\star$. Because $d$ appears in the subscript of $w'_d$, we can't shift any more terms from $\*h_\star$ to $\*h'_\star$
\end{itemize}

8. Line 9 : $Q[\*H_\star](\*h) = P(d, w'_d, y_{b'w}) = \sum_{x} P(\*T_\star = \*t)$
\vspace{-0.1in}
\begin{itemize}
    \item [-] Marginalize out $\{X\}$, giving us the graph in Fig. \ref{fig:example_identify}d.
    \item [-] Probability axioms don't permit marginalizing out any more terms.
\end{itemize}

9. Line 10 : the c-components in the subgraph in Fig. \ref{fig:example_identify}d are $\{D\}$ and $\{W, Y\}$

10. Line 11 : $\*H^i_\star \hateq P(w'_d, y_{b'w})$ 
\begin{itemize}
\vspace{-0.1in}
    \item [-] This corresponds to the smallest c-component containing $\*C_\star$.
\end{itemize}

5. Line 12 : Compute $Q[\*H^i_\star](\*h^i) = P(w'_d, y_{b'w})$ from $Q[\*H_\star](\*h)$ using the ctf-factorization theorem (Thm. \ref{thm:ctf_factorization}) 

6. Line 13 : \textbf{Function call} $\textsc{identify}^+\bigg(\mathcal{G}, P(w'_d, y_{b'w}), P(w'_d, y_{b'w}) \bigg)$ immediately returns $P(w'_d, y_{b'w})$ as needed.




\end{example}

\subsection{Examples using $\textsc{ctfIDu}^+$} \label{app:id_example}

Below, we show two examples of a counterfactual query given a causal graph, and how the $\textsc{ctfIDu}^+$ algorithm identifies this query from counterfactual data. For a breakdown of the steps involved, refer to Sec. \ref{app:steps_to_id}.

\begin{example}[Front-Door] \label{ex:front_door}
Consider the front-door graph shown in Fig. \ref{fig:example_front_door}. We show how the $\textsc{ctfIDu}^+$ algorithm correctly retrieves the front-door adjustment formula. Our query is $P(y ; \doo{x}) = P(y_x)$ and input is the observational distribution $P(x',z,y)$. The query chain in Steps iii-v decomposes the query in the ctf-factors that we need to identify: $P(z_x), P(y_z)$.

The input distribution is then rewritten in ctf-factor format (Step vi) and decomposed into constituent ctf-factors by c-components which can be computed from the input data using Thm. \ref{thm:ctf_factorization} to get $P(x', y_z) = P(y \mid z, x').P(x')$ and $P(z_x) = P(z \mid x)$. $\textsc{identify}^+(\mathcal{G}, P(y_z), P(x', y_z))$ immediately returns $P(y_z) = \sum_{x'} P(x, y_z)$. Composing these and marginalizing as the final step in Line 22, we get $P(y_x) = \sum_{z}P(z_x, y_x) = \sum_{z}P(z \mid x)\sum_{x'}P(y \mid z, x')P(x')$.

\end{example}

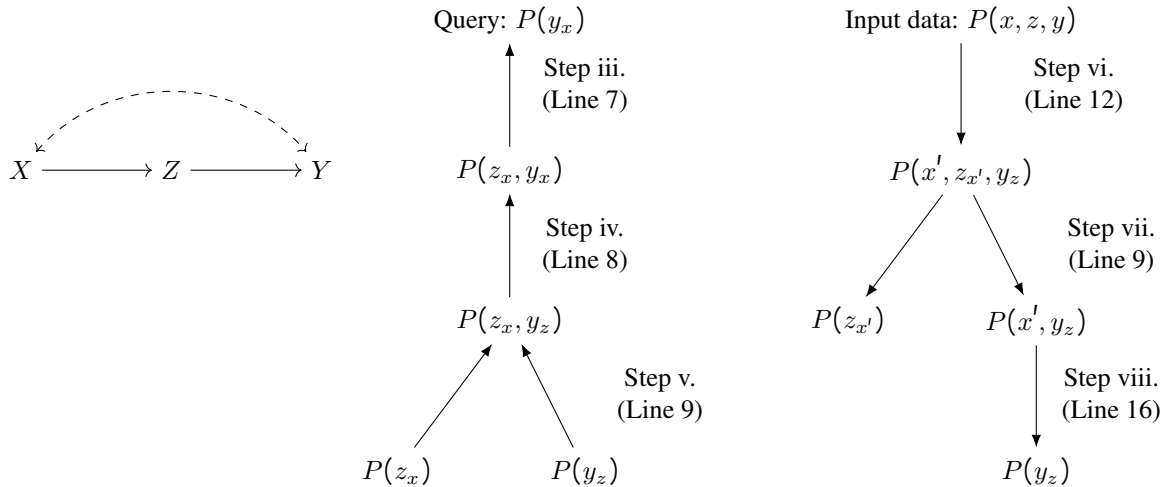
\begin{figure}[H]
        \centering
        \begin{tikzpicture}

        \begin{scope}[shift={(1.5cm,0cm)}]
        \node (X) at (0,0) {$X$};
        \node (Z) at (2,0) {$Z$};
        \node (Y) at (4,0) {$Y$};
        \path [->] (X) edge (Z);
        \path [->] (Z) edge (Y);
        \path [<->,dashed] (X) edge [bend left=50] (Y);
        \end{scope}

        \begin{scope}[shift={(-1cm,0cm)}]
            
        \node[align=center] (t1) at (9,2) {Query: $P(y_x)$};     

        \node[align=center] (t2) at (9,0.) {$P(z_x, y_x)$};
        \node[align=center] (t) at (10,1.125) {Step iii.\\(Line 7)};
        \path [-Latex] (t2) edge (t1);

        \node[align=center] (t3) at (9,-2) {$P(z_x, y_z)$};
        \node[align=center] (t) at (10,-1) {Step iv.\\(Line 8)};
        \path [-Latex] (t3) edge (t2);

        \node[align=center] (t4) at (10,-4) {{$P(y_z)$}};
        \node[align=center] (t) at (11,-3) {Step v.\\(Line 9)};
        \path [-Latex] (t4) edge (t3);

        \node[align=center] (t6) at (7.5,-4) {{$P(z_x)$}};
        \path [-Latex] (t6) edge (t3);
        \end{scope}

        \begin{scope}[shift={(5cm,0cm)}]
            
        \node[align=center] (t1) at (9,2) {Input data: $P(x, z, y)$};     

        \node[align=center] (t2) at (9,0.) {$P(x', z_{x'}, y_z)$};
        \node[align=center] (t) at (10.5,1.125) {Step vi.\\(Line 12)};
        \path [-Latex] (t1) edge (t2);

        \node[align=center] (t4) at (10,-2) {{$P(x',y_z)$}};
        \node[align=center] (t) at (11,-1) {Step vii.\\(Line 9)};
        \path [-Latex] (t2) edge (t4);
        \node[align=center] (t5) at (10,-4) {{$P(y_z)$}};
        \path [-Latex] (t4) edge (t5);
        \node[align=center] (t) at (11,-3) {Step viii.\\(Line 16)};

        \node[align=center] (t6) at (7.5,-2) {{$P(z_{x'})$}};
        \path [-Latex] (t2) edge (t6);
        \end{scope}

        \end{tikzpicture}

    \vspace{0.in}
    \caption{Example \ref{ex:front_door} showing how the $\textsc{ctfIDu}^+$ algorithm (Alg. \ref{alg:ctfidu_plus}) correctly retrieves the front-door adjustment formula.}
    \label{fig:example_front_door}
\end{figure}

    
\begin{figure}[H]
        \centering
        \begin{tikzpicture}

        \node (X) at (0,0) {$X$};
        \node (B) at (2.5,0) {$B$};
        \node (Y) at (4.5,0) {$Y$};
        \node (D) at (0,2) {$D$};
        \node (A) at (1.25,1) {$A$};
        \node (W) at (4.5,2) {$W$};
        \node (C) at (1.25,-1.5) {$C$};
        \node (E) at (3.5,-1.5) {$E$};
        \path [->] (X) edge (B);
        \path [->] (B) edge (Y);
        \path [->] (D) edge (A);
        \path [->] (D) edge (W);
        \path [->] (A) edge (B);
        \path [->] (W) edge (Y);
        \path [->] (X) edge (C);
        \path [->] (B) edge (C);
        \path [->] (C) edge (E);
        \path [<->,dashed] (X) edge [bend right=20] (Y);
        \path [<->,dashed] (D) edge [bend right=20] (X);
        \path [<->,dashed] (W) edge [bend left=40] (Y);
        \path [<->,dashed] (A) edge [bend left=50] (B);
        \path [<->,dashed] (X) edge [bend right=50] (C);
        \path [<->,dashed] (C) edge [bend left=40] (E);
        \path [<->,dashed] (E) edge [bend right=50] (B);

        \node[align=center] (t1) at (9,2) {Query: $P(a, b'_x, c_{bx'}, w', y_{xw})$};     

        \node[align=center] (t2) at (9,0.) {$P(a, b'_x, c_{bx'}, w', y_{xw},d)$};
        \node[align=center] (t) at (10,1.125) {Step iii.\\(Line 7)};
        \path [-Latex] (t2) edge (t1);

        \node[align=center] (t3) at (9,-2) {$P(a_d, b'_{ax}, c_{bx'}, w'_d, y_{b'w},d)$};
        \node[align=center] (t) at (10,-1) {Step iv.\\(Line 8)};
        \path [-Latex] (t3) edge (t2);

        \node[align=center] (t4) at (10,-4) {\textcolor{blue}{$P(c_{bx'})$}};
        \node[align=center] (t) at (11,-3) {Step v.\\(Line 9)};
        \path [-Latex] (t4) edge (t3);

        \node[align=center] (t5) at (5.5,-4) {\textcolor{blue}{$P(a_d, b'_{ax})$}};
        \node[align=center] (t6) at (7.5,-4) {\textcolor{blue}{$P(d)$}};
        \path [-Latex] (t5) edge (t3);
        \path [-Latex] (t6) edge (t3);
        \draw [decorate, decoration = {calligraphic brace, raise=0pt, amplitude=5pt, aspect=0.5}]   (10.85,-4.75) -- (4.5,-4.75);
        \node[align=center] (t) at (8,-5.75) {"Consistent" ctf-factors (Def. \ref{def:consistent_ctffactor}) -\\identifiable (in principle) using\\ prior work \citep[~Alg. 1]{correaetal:21}};

        \node[align=center] (t7) at (2,-4) {\textcolor{blue}{$P(y_{b'w},w'_d)$}};
        \path [-Latex] (t7) edge (t3);
        \draw [decorate, decoration = {calligraphic brace, raise=0pt, amplitude=5pt, aspect=0.5}]   (3,-4.75) -- (1,-4.75);
        \node[align=center] (t) at (1.5,-5.75) {"Inconsistent" ctf-factor -\\ \textcolor{BrickRed}{\textbf{non-identifiable using}}\\ \textcolor{BrickRed}{\textbf{previous methods}}};

        \node[align=center] (t) at (1.5,-7) {};
        \draw [dashed] (-2,-6.75) -- (12,-6.75);
        
        \end{tikzpicture}

        \begin{tikzpicture}
        \node (X) at (0-3,0) {$X$};
        \node (B) at (2.5-3,0) {$B$};
        \node (Y) at (4.5-3,0) {$Y$};
        \node (D) at (0-3,2) {$D$};
        \node (A) at (1.25-3,1) {$A$};
        \node (W) at (4.5-3,2) {$W$};
        \node (C) at (1.25-3,-1.5) {$C$};
        \node (E) at (3.5-3,-1.5) {$E$};
        \node[fill=black,draw,inner sep=0.3em, minimum width=0.3em] (intervention1) at (0.25-3,-0.4) {\ };
        \node[fill=black,draw,inner sep=0.3em, minimum width=0.3em] (intervention2) at (2.25-3,-0.4) {\ };
        \node[fill=black,draw,inner sep=0.3em, minimum width=0.3em] (intervention3) at (4.5-3,1.6) {\ };
        \path [->] (X) edge (B);
        \path [->] (B) edge (Y);
        \path [->] (D) edge (A);
        \path [->] (D) edge (W);
        \path [->] (A) edge (B);
        \path [->] (C) edge (E);
        \path [->] (intervention3) edge (Y);
        \path [->] (intervention1) edge (C);
        \path [->] (intervention2) edge (C);
        \path [<->,dashed] (X) edge [bend right=20] (Y);
        \path [<->,dashed] (D) edge [bend right=20] (X);
        \path [<->,dashed] (W) edge [bend left=40] (Y);
        \path [<->,dashed] (A) edge [bend left=50] (B);
        \path [<->,dashed] (C) edge [bend left=40] (E);
        \path [<->,dashed] (E) edge [bend right=50] (B);
        \path [<->,dashed] (X) edge [bend right=50] (C);
        \node[align=center] (t1) at (6,1) {Data-collection regime indexed by actions:\\ $\mathcal{A}=\{\textit{ctf-rand}(W \rightarrow Y),\textit{ctf-rand}(X \rightarrow C),$\\ $\textit{ctf-rand}(B \rightarrow C)\}$};
        \node[align=center] (t1) at (6,-0.5) {Helper function (Alg. \ref{alg:regime_regex}) maps this to\\ input expression: $P(x, b', c_{bx'}, e_c, w', y_{w},a,d)$};

        \node[align=center] (t1) at (2.5,3) {Input Data: $P(x, b', c_{bx'}, e_c, w', y_{w},a,d)$};

        \node[align=center] (t2) at (2.5,4.35) {$P(x, b'_{ax}, c_{bx'}, e_c, w'_d, y_{b'w},a_d,d)$};
        \node[align=center] (t) at (4.5,3.6) {Step vi. (Line 12)};
        \path [-Latex] (t1) edge (t2);

        \node[align=center] (t3) at (2.5,5.7) {$P(x, b'_{ax}, w'_d,c_{bx'}, e_c, y_{b'w},a_d,d)$};
        \path [-Latex] (t2) edge (t3);
        \node[align=center] (t) at (4.5,4.95) {Step vii. (Line 13-14)};

        \node[align=center] (t35) at (-3,5.7) {$P(x, b'_{ax}, w'_d, y_{b'w},a_d,d)$};
        \path [-Latex] (t3) edge (t35);
        
        \node[align=center] (t8) at (-3,7.1) {$P(x, w'_d, y_{b'w},d)$};
        \node[align=center] (t9) at (-3,8.5) {$P(w'_d, y_{b'w},d)$};
        \node[align=center] (t10) at (-3,9.9) {\textcolor{blue}{$P(y_{b'w},w'_d)$}};
        
        \path [-Latex] (t35) edge (t8);
        \path [-Latex] (t8) edge (t9);
        \path [-Latex] (t9) edge (t10);

        \node[align=center] (t) at (-0.2,7.3) {$\textsc{identify}^+$\\inner loops\\(App. \ref{app:identify_example})};

        \node[align=center] (t4) at (2.5,7.1) {$P(x, b'_{ax}, a_d,d)$};
        \node[align=center] (t7) at (2.5,8.5) {\textcolor{blue}{$P(a_d,b'_{ax})$}};
        \path [-Latex] (t3) edge (t4);
        \path [-Latex] (t4) edge (t7);

        \node[align=center] (t5) at (5,7.1) {\textcolor{blue}{$P(d)$}};
        \node[align=center] (gap1) at (4,5.85) {\ };
        \path [-Latex] (gap1) edge (t5);

        \node[align=center] (t6) at (8,7.1) {\textcolor{blue}{$P(c_{bx'})$}};
        \node[align=center] (gap2) at (5,5.7) {\ };
        \path [-Latex] (gap2) edge (t6);

    \end{tikzpicture}
    \vspace{0.in}
    \caption{Example \ref{ex:ctf_id} involving a causal diagram (top left) and Layer 3 query (top right). The new $\textsc{ctfIDu}^+$ algorithm (Alg. \ref{alg:ctfidu_plus}) follows the steps shown, reaching the four ctf-factors that are both necessary and sufficient to identify, in order to identify the root query. Next, counterfactual data from an experimental regime (bottom) is used to systematically identify each of these ctf-factors.}
    \label{fig:example_ID_query}
\end{figure}
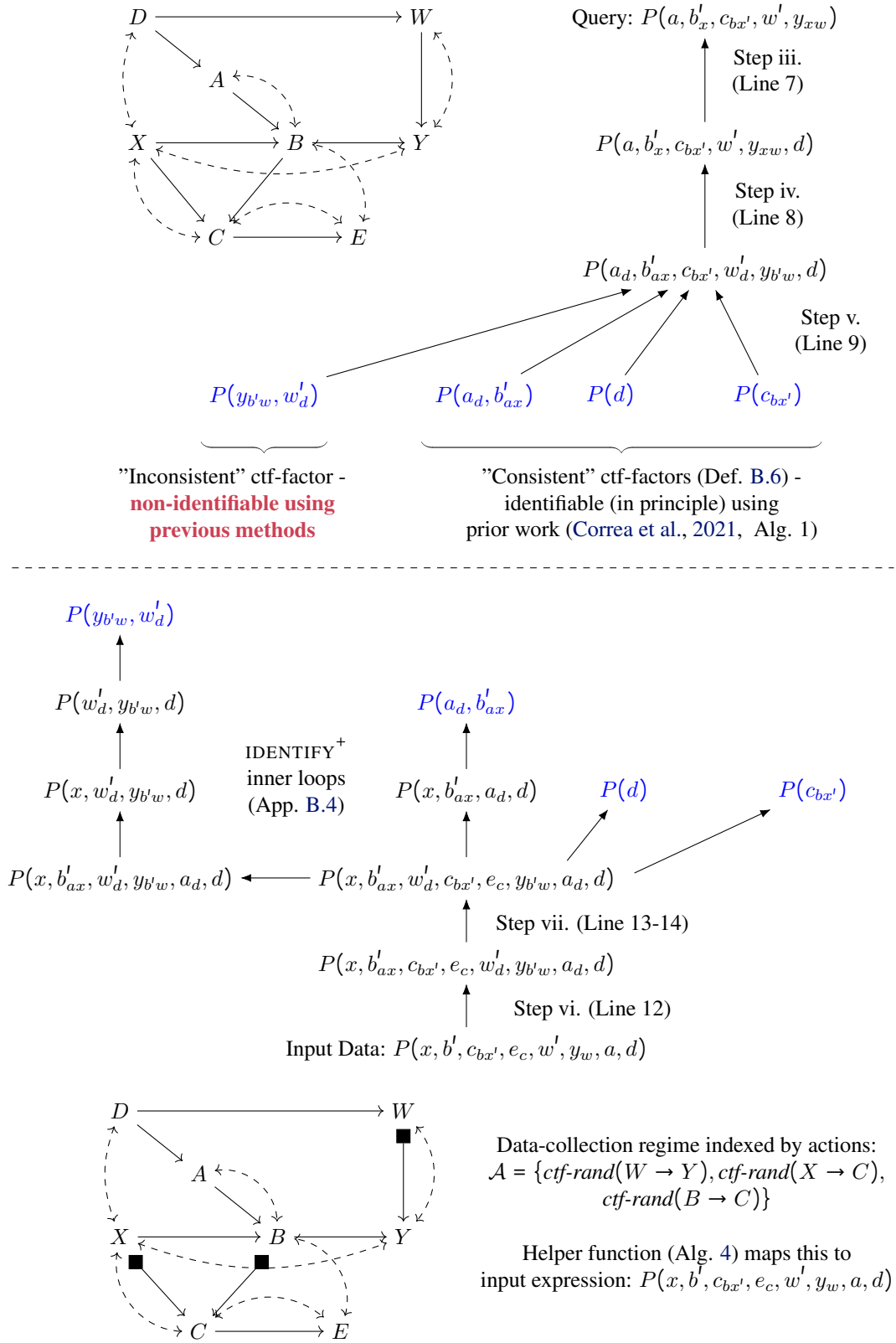

\begin{example}\label{ex:ctf_id}
    Consider the graph $\mathcal{G}$ and the query $Q = P(a, b'_x, c_{bx'}, w', y_{xw})$ shown in Fig. \ref{fig:example_ID_query} (top left, top right). 
    
    Calling Alg. \ref{alg:ctfidu_plus} as $\textsc{ctfIDu}^+(\mathcal{G}, Q, \mathcal{A})$ follows the steps shown in the tree sequence in Fig. \ref{fig:example_ID_query} (upper half). First, it expands the query into an ancestral expression, and then rewrites it in ctf-factor format. 
    
    Next, it decomposes this expression into the four smaller ctf-factors that are both necessary and sufficient to identify, in order to identify the root query. These are the four blue "leaf" terms in Fig. \ref{fig:example_ID_query} (upper half). One of these terms, $P(y_{b'w},w'_d)$, is "inconsistent" per Def. \ref{def:consistent_ctffactor}, and therefore non-identifiable using the previous \textsc{ctfIDu} algorithm \citep[~Alg. 1]{correaetal:21}. At this point, previous methods will return \textbf{FAIL}, because they assume the input data is only from Layer 2.

    However, it is possible to gather Layer 3 data through counterfactual randomization, as discussed in Sec. \ref{sec:intro}. Fig. \ref{fig:example_ID_query} (bottom half) illustrates data collection under the actions $\mathcal{A}=\{\textit{ctf-rand}(W \rightarrow Y),\textit{ctf-rand}(X \rightarrow C),\textit{ctf-rand}(B \rightarrow C)\}$. Mapping this to an input expression, we see that the input distribution is not trivially identical to the query expression we began with. 
    
    $\textsc{ctfIDu}^+$ proceeds to rewrite this input distribution in ctf-factor format $Q[\*T_{\star}]$. There is no further decomposition at this stage, since all the variables belong to one c-component. This ctf-factor is then used to identify each of the leaf nodes in Fig. \ref{fig:example_ID_query} (upper half) by calling $\textsc{identify}^+ (\mathcal{G},Q[\*C_\star],Q[\*T_{\star}])$ for each leaf node $\*C_\star$ in turn. Two of these leaf nodes are immediately computable by a simple marginalization step. The remaining two are identified by following non-trivial inner loops. $\hfill$ $\square$
\end{example}
\section{Limits of Identification and Realizability}
\label{app:id_limits}

In this appendix, we review the definitions of Layers 2.25 and 2.5. We also  provide a helpful intuition for framing the results in Sec. \ref{sec:id_limits}. We follow the terminology in \citet{raghavan2025realizability, yang2025hierarchy}.

\subsection{Layer 2.5 ($\mathcal{L}_{2.5}$)} \label{app:layer_2_5}

Given a causal diagram $\mathcal{G}$, \citet{yang2025hierarchy} define Layer 2.5 ($\mathcal{L}_{2.5}$) of the PCH to contain precisely those counterfactual distributions from which it is hypothetically possible to draw samples, if the environment permitted this \textit{ctf-rand()} procedure for all variables. Below is the formal definition, followed by an intuitive explanation.

\begin{definition}[Layer 2.5] \label{def:l2.5}
    As SCM $\mathcal{M} = \langle \*U, \*V, \mathcal{F}, P(\*u) \rangle$ induces a family of joint distributions over $\*V$, indexed by each interventional variable set $\*X$. Layer 2.5, or $\mathcal{L}_{2.5}$ of the PCH is defined to contain all distributions satisfying the following expression. For $\*X, \*Y \subseteq \*V$:
    \begin{align}
        &P^{\mathcal{M}}\bigg( \bigwedge_{V_i \in \*Y \setminus \*X} V_{i_{[\*x_i]}} = v_i, \bigwedge_{V_i \in \*Y \cap \*X, v_i = V_i \cap \*x} V_{i_{[\*x_i \setminus v_i]}} = v_i \bigg) \nonumber \\
        = &\sum_{\*u} \mathbbm{1}\bigg[ \bigwedge_{V_i \in \*Y \setminus \*X} V_{i_{[\*x_i]}}(\*u) = v_i, \bigwedge_{V_i \in \*Y \cap \*X, v_i = V_i \cap \*x} V_{i_{[\*x_i \setminus v_i]}} (\*u) = v_i  \bigg]P(\*u), \text{ where}
    \end{align}
     \begin{itemize}
         \item [i.] the variables in the subscript for each in each term, $\*X_i \subseteq \*X$, $\*x_i \in Domain(\*X_i)$, and $\bigcup_i \*X_i = \*X$; and
         \item [ii.] for any $V_i$ and any $B \in \*X \cap \*{Pa}_i$, and for all $V_j \in \*Y$: if $V_i \not \in \*X_j$ and $V_i \in \*{An}(V_j)$ in $\mathcal{M}_{\*x}$, then $\*x_i \cap B = \*x_j \cap B$. $\hfill$ $\blacksquare$
     \end{itemize} 
\end{definition}

For example, in Fig. \ref{fig:app_realizability_example}a, we see how one can draw samples directly from the distribution $P(y_x, z_{x'})$ by performing \textit{ctf-rand($X \rightarrow Y$)} and \textit{ctf-rand($X \rightarrow Z$)} separately. However, this distribution is not physically \textit{realizable} if the graph were per Fig. \ref{fig:app_realizability_example}b - the mediator $A$ is a bottleneck and can only receive one value, either $x$ or $x'$. The causal structure matters. In Fig. \ref{fig:app_realizability_example}, given graph $\mathcal{G}_1$, $P(y_x, z_{x'})$ is an $\mathcal{L}_{2.5}$ distribution. But given $\mathcal{G}_2$, $P(y_x, z_{x'})$ lies outside $\mathcal{L}_{2.5}$.

\begin{figure}[h!]
    \centering
    \includegraphics[width=\textwidth]{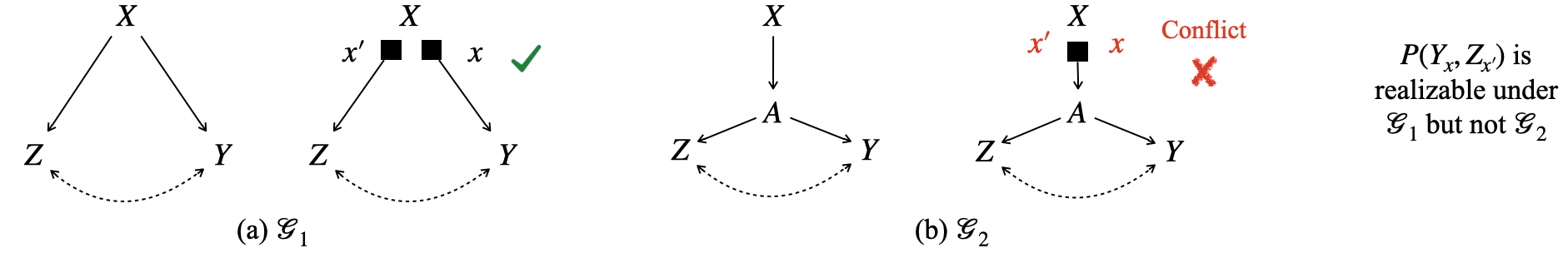}
    \vspace{-0.1in}
    \caption{It is physically possible to draw samples from $P(y_x, z_{x'})$ given graph $\mathcal{G}_1$ using \textit{ctf-rand()}, but not given $\mathcal{G}_2$.}
    \label{fig:app_realizability_example}
\end{figure}

\begin{theorem} \label{thm:ancestor_check}

    \citep[~Cor. 3.7]{raghavan2025realizability} Given causal diagram $\mathcal{G}$, a counterfactual distribution $P(\*Y_\star)$ belongs to Layer 2.5 (i.e., it is physically realizable, in principle) iff the counterfactual ancestor set $An(\*Y_\star)$ (Def. \ref{def:ctf_ancestors}) does not contain a pair of potential responses $W_\*t, W_\*s$ of the same variable $W$ under different regimes $\*t \neq \*s$. $\hfill$ $\blacksquare$
\end{theorem}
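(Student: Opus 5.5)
The plan is to prove both directions using the formal characterization of $\mathcal{L}_{2.5}$ in Def. \ref{def:l2.5}. The key idea is that each physical variable $W$ is realized exactly once per unit, even under \textit{ctf-rand()} actions. The only freedom \textit{ctf-rand()} gives is to choose, edge by edge, which value of a parent a child \emph{perceives}. As a preprocessing step I would apply the exclusion operator (Lem. \ref{lem:exclusion}) to every term of $\*Y_\star$ and to every term of $An(\*Y_\star)$ (Def. \ref{def:ctf_ancestors}). Then any two distinct regimes $\*t \neq \*s$ on the same $W$ differ on some intervened variable that is a genuine ancestor of $W$ in the corresponding mutilated graph.

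\emph{Sufficiency.} Assume every variable $W$ appears in $An(\*Y_\star)$ under a single regime $\*t_W$. Then $An(\*Y_\star)$ is an ancestral set, so the AST (Thm. \ref{thm:ast}) rewrites $P(An(\*Y_\star))$ as a ctf-factor $P(\bigwedge_W W_{\*{pa}_w} = w)$ in which each $W$ appears once. Each parent value in $\*{pa}_w$ is one of two kinds: it is either the natural value of a parent that itself appears in the set, or a constant fixed by $\*t_W$.

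I would then construct the data-collection regime directly. For every edge $B \to W$ on which $\*{pa}_w$ prescribes a constant $b$ (rather than $B$'s own realized value), perform \textit{ctf-rand}$(B \to W)$ and keep the units receiving $b$. All other edges are left natural.

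Under this regime each $W$ is produced by $f_W(\*U_W, \*{pa}_w)$, exactly as in the ctf-factor. The joint draw of all $W$ therefore samples $P(An(\*Y_\star))$, and marginalizing gives $P(\*Y_\star)$. To finish, I would check this against Def. \ref{def:l2.5}: with $\*X$ the set of randomized parents and $\*x_i$ the perceived values, condition (ii) holds. The reason is that any ancestor $V_i$ of $V_j$ that is not intervened in $V_j$'s regime appears once, so $V_i$ perceives the same $B$-value in both contexts.

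\emph{Necessity.} Suppose $W_\*t, W_\*s \in An(\*Y_\star)$ with $\*t \neq \*s$. After exclusion, pick a variable $B$ on which the two regimes disagree and which is an ancestor of $W$ in the mutilated graph. Take a directed path $B \to V_1 \to \dots \to W$ that avoids other intervened variables. Walk down this path to the first vertex $V_i$ whose counterfactual ancestors of $W_\*t$ and $W_\*s$ require different perceived values of some parent $B'$. In the realizability formula this single node $V_i$ is then an un-intervened ancestor of two terms $V_j, V_{j'}$ that demand $\*x \cap B'$ to take two different values. That violates condition (ii) of Def. \ref{def:l2.5}, so no regime in $\mathcal{L}_{2.5}$ can produce $P(\*Y_\star)$.

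\emph{Main obstacle.} I expect the hard step to be this necessity argument. Condition (ii) of Def. \ref{def:l2.5} is phrased in terms of the intervened sets of the realizing expression, not the query, so the formal difficulty is to rule out that some cleverly chosen $\*X$ realizes the same distribution through a different syntactic form. I would handle this by arguing at the level of the unit $\*u$: any realizing expression must evaluate $W$ once per unit, hence at a single perceived parent configuration. Distinct $W_\*t$ and $W_\*s$ can be non-functionally related under some SCM compatible with $\mathcal{G}$, which the exclusion step guarantees. So the realized sample cannot contain both values, and the two distributions differ for that SCM.
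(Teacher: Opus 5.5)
The paper gives no proof of this statement. It is quoted from \citet{raghavan2025realizability} (Cor.~3.7), so your argument has to stand on its own.

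\textbf{Sufficiency.} This half works. When every variable occurs once in $An(\mathbf{Y}_\star)$, the AST ctf-factor is exactly what the following regime samples, unit by unit: ctf-randomize every edge that carries a constant perceived value. One correction when you match this to Def.~\ref{def:l2.5}: take the $\mathbf{x}_i$ to be the post-exclusion query subscripts, not only the perceived parent values, since those subscripts may contain non-parent ancestors.

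\textbf{Localising the conflict.} This step also works, but there is a cleaner version. Take a topologically minimal variable $V^*$ that has two versions in $An(\mathbf{Y}_\star)$. By minimality, its parents that are intervened in neither version have identical versions. So, after exclusion, the two versions must differ in the perceived value of a direct parent $B'$: constant versus constant, or constant versus natural (as in $Y_x$ versus $Y$). This avoids the problem that your path from $B$ can pass through variables intervened in only one of $\mathbf{t},\mathbf{s}$.

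\textbf{The gap: the rest of necessity.} Two things go wrong.

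First, your appeal to condition (ii) concerns only the one expression read off from the query, and even there it is not automatic. Condition (ii) tests ancestry in $\mathcal{M}_{\mathbf{x}}$ for the whole $\mathbf{x}$. If the path from $V^*$ to $V_j$ runs through a variable intervened only in another term, the violation you claim does not appear. For example, take $X\to A\to C\to Y$, $A\to Z$, $C\to D$ with query $P(y_x,z_{x'},d_c)$: once the edges into $C$ are cut, $A$ is no longer an ancestor of $Y$.

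Second, and more importantly, membership in $\mathcal{L}_{2.5}$ is a property of the \emph{distribution}. You must rule out that \emph{some} $\mathcal{L}_{2.5}$ expression over the same variables agrees with $P(\mathbf{Y}_\star)$ in every SCM compatible with $\mathcal{G}$, for any choice of its intervened sets and perceived values. Your unit-level patch does not deliver this, for two reasons.
\begin{enumerate}
\item The conflicting variable is typically not a query variable. For $P(y_x,z_{x'})$ in $\mathcal{G}_2$ of Fig.~\ref{fig:app_realizability_example}, the conflict is $A_x,A_{x'}$ while the sample is over $Y,Z$. So ``the sample cannot contain both values of $W$'' does not yet constrain the output.
\item Whether some regime matches depends on the SCM, and regimes that randomize downstream of the bottleneck are exactly the ones your argument ignores. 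If $A:=X$, $Y:=A$, $Z:=A$, the regime $\{\textit{ctf-rand}(A\to Y),\textit{ctf-rand}(A\to Z)\}$ with perceived values $x,x'$ reproduces $P(Y_x,Z_{x'})$ exactly.
\end{enumerate}

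\textbf{What is missing.} You need an explicit witness, in two parts.
\begin{enumerate}
\item An SCM compatible with $\mathcal{G}$ in which $V^*$ depends on $B'$ through noise shared by both of its versions (e.g.\ $V^*:=B'\oplus U$), and this dependence is copied along directed paths to the two query terms whose ancestor sets contain the two versions.
\item A case analysis showing that every candidate regime, whichever edges it randomizes, yields a different joint law for those two terms. The key observation is that, absent randomization on those paths, both terms are functions of a single realization of $V^*$ at one perceived value of $B'$. If the regime does randomize some edge on a path, you must check separately that the coupling through $U$ which the query exhibits is broken.
\end{enumerate}
Without such a construction, the ``only if'' direction is asserted rather than proved.
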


A simple way to test to test if some $P(\*Y_\star)$ belongs to $\mathcal{L}_{2.5}$ is to list the counterfactual ancestors (Def. \ref{def:ctf_ancestors}) of $\*Y_\star$. Given $\mathcal{G}_2$ in Fig. \ref{fig:app_realizability_example}, $An(Y_x, Z_{x'}) = \{Y_x, Z_{x'}, A_x, A_{x'}\}$ which contains both $A_x, A_{x'}$ and thus $P(y_x, z_{x'})$ lies outside $\mathcal{L}_{2.5}$.

\subsection{Layer 2.25 ($\mathcal{L}_{2.25}$)} \label{app:layer_2_25}

$\mathcal{L}_{2.5}$ contains distributions which are realizable possibly using multiple \textit{ctf-rand()} procedures for the same variable, such as $P(y_x, z_{x'})$ in Fig. \ref{fig:app_realizability_example}a. Layer 2.25 ($\mathcal{L}_{2.25}$) is a subset of $\mathcal{L}_{2.5}$, containing only the distributions which can be realized using at most one \textit{ctf-rand()} procedure per variable.

\begin{definition}[Layer 2.25] \label{def:l2.25}
    As SCM $\mathcal{M} = \langle \*U, \*V, \mathcal{F}, P(\*u) \rangle$ induces a family of joint distributions over $\*V$, indexed by each interventional value set $\*x$. Layer 2.25, or $\mathcal{L}_{2.25}$ of the PCH is defined to contain all distributions satisfying the following expression. For $\*X, \*Y \subseteq \*V$ and $\*x \in Domain(\*X)$:
    \begin{align}
        &P^{\mathcal{M}}\bigg( \bigwedge_{V_i \in \*Y \setminus \*X} V_{i_{[\*x_i]}} = v_i, \bigwedge_{V_i \in \*Y \cap \*X, v_i = V_i \cap \*x} V_{i_{[\*x_i \setminus v_i]}} = v_i \bigg) \nonumber \\
        = &\sum_{\*u} \mathbbm{1}\bigg[ \bigwedge_{V_i \in \*Y \setminus \*X} V_{i_{[\*x_i]}}(\*u) = v_i, \bigwedge_{V_i \in \*Y \cap \*X, v_i = V_i \cap \*x} V_{i_{[\*x_i \setminus v_i]}} (\*u) = v_i  \bigg]P(\*u), \text{ where}
    \end{align}
     \begin{itemize}
         \item [i.] the interventional subscript for each term, $\*x_i \subseteq \*x$ and $\bigcup_i \*x_i = \*x$; and
         \item [ii.] for any $v_i \in \*x$ and all $V_j \in \*Y$, if $V_i \in An(V_j)$ in $\mathcal{M}_{\*x \setminus V_j}$, then $v_j \in \*x_j$. $\hfill$ $\blacksquare$
     \end{itemize}
\end{definition}

\medskip
A visual intuition for these layers is provided in the examples in Fig. \ref{fig:app_l2.5_2.25}.
\begin{itemize}
    \item [a.] $\mathcal{L}_1$ (Fig. \ref{fig:app_l2.5_2.25}a) simply represents the observational regime of the system under its natural behavior. 
    \item [b.] $\mathcal{L}_2$ (Fig. \ref{fig:app_l2.5_2.25}b) represents interventional regimes, where a standard randomization action \textit{rand($X$)} is used to override and fix some variable $X$ in the system. 
    \item [c.] $\mathcal{L}_{2.25}$ (Fig. \ref{fig:app_l2.5_2.25}c)  represents counterfactual distributions which can be physical realized using counterfactual randomization actions of the form \textit{ctf-rand($X \rightarrow \*{Ch}(X)$)}, where at most one randomization is permitted per variable in a way that affects all outgoing causal paths from the variable. 
    \item [d.] $\mathcal{L}_{2.5}$ (Fig. \ref{fig:app_l2.5_2.25}d) generalizes this to all counterfactual distributions which can be physically realized using multiple \textit{ctf-rand($X \rightarrow C$)} actions per variable, in a way that may affect separate downstream variables differently. 
\end{itemize}

\begin{figure}[h]
        \centering
        \begin{tikzpicture}

        \node (t) at (0-9,-1.7) {(a) $\mathcal{L}_{1}$};
        \node (X) at (0-9,0) {$X$};
        \node (Y) at (-0.8-9,-1) {$Y$};
        \node (Z) at (0.8-9,-1) {$Z$};
        \path [->] (X) edge (Y);
        \path [->] (X) edge (Z);
        
        \node (t) at (0-6,-1.7) {(b) $\mathcal{L}_{2}$};
        \node (X) at (0-6,0) {$X$};
        \node (Y) at (-0.8-6,-1) {$Y$};
        \node (Z) at (0.8-6,-1) {$Z$};
        \node[fill=black,draw,inner sep=0.2em, minimum width=0.2em] (intervention) at (-0.8-6,0) {\ };
        \path [->] (intervention) edge (X);
        \path [->] (X) edge (Y);
        \path [->] (X) edge (Z);

        \node (t) at (0-3,-1.7) {(c) $\mathcal{L}_{2.25}$};
        \node (X) at (0-3,0) {$X$};
        \node (Y) at (-0.8-3,-1) {$Y$};
        \node (Z) at (0.8-3,-1) {$Z$};
        \node[fill=black,draw,inner sep=0.2em, minimum width=0.2em] (intervention) at (0-3,-0.35) {\ };
        \path [->] (intervention) edge (Y);
        \path [->] (intervention) edge (Z);

        \node (t) at (0.,-1.7) {(d) $\mathcal{L}_{2.5}$};
        \node (X) at (0,0) {$X$};
        \node (Y) at (-0.8,-1) {$Y$};
        \node (Z) at (0.8,-1) {$Z$};
        \node[fill=black,draw,inner sep=0.2em, minimum width=0.2em] (intervention) at (-0.3,-0.3) {\ };
        \node[fill=black,draw,inner sep=0.2em, minimum width=0.2em] (intervention2) at (0.3,-0.3) {\ };
        \path [->] (intervention) edge (Y);
        \path [->] (intervention2) edge (Z);

        
    \end{tikzpicture}
    \caption{Difference in how an intervention on $X$ affects downstream variables in $\mathcal{L}_1$, $\mathcal{L}_2$, $\mathcal{L}_{2.25}$, and $\mathcal{L}_{2.5}$.}
    \label{fig:app_l2.5_2.25}
\end{figure}
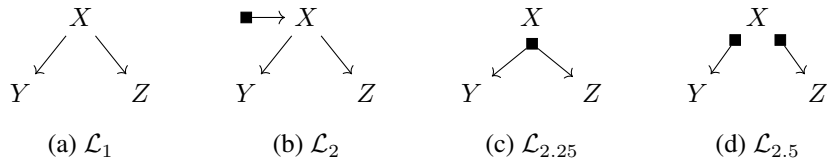

\subsection{Causal lattice framework} \label{app:causal_lattice}

We describe in this subsection an intuition for Thm. \ref{thm:id_limits} and Cor. \ref{cor:id_connection_informal} using a {causal lattice} over the \textit{ctf-factors} that can be generated from an input distribution (see Sec. \ref{sec:preliminaries} for a definition of a \textit{ctf-factor}). This lattice functions as an \textbf{inference generator}: all the nodes in the lattice are effectively causal quantities that can be identified using input data distributions.

We formulate our \textit{\textbf{causal lattice}} as follows. Given a causal diagram and input data distributions:
\vspace{-0.3cm}
\begin{itemize}
    \item The \textbf{\textit{source nodes}} of the causal lattice are all the available (i.e. input) data distributions. 
    \item Each node has \textbf{\textit{outgoing edges}} to all the distributional quantities that can be computed using the former node. Outgoing edges include
    \begin{itemize}
        \item Mapping from a source node to an equivalent \textit{ctf-factor} formulation (Thm. \ref{thm:ast}): 1-to-1 connection 
        \item Decomposing a larger \textit{ctf-factor} into smaller \textit{ctf-factors} (Eq. \ref{eq:ctf_factor_decompose}): 1-to-many connections
        \item Composing smaller \textit{ctf-factors} into a larger \textit{ctf-factor} (Eq. \ref{eq:ctf_factor_compose}): many-to-1 connections
        \item Mapping from a \textit{ctf-factor} to an equivalent non-\textit{ctf-factor} distribution, if the latter is ancestral (Thm. \ref{thm:ast}): 1-to-1 connection 
        \item Marginalization of a distribution to get a smaller distribution: 1-to-1 connection
    \end{itemize}
    \item There could be multiple valid pathways from the set of input data distributions to a particular quantity of interest, via different sets of intermediate nodes.
\end{itemize}

\begin{example}
    In Fig. \ref{fig:example_ID_query}, conjoining the respective distribution trees in the upper and lower half of the figure would constitute a valid sub-lattice of the causal lattice induced by the input data distribution $P(x, b', c_{bx'}, w', y_{w},a,d)$. $\hfill$ $\square$
\end{example}

Next, we define a way to rank the level of "inconsistency" that characterizes any given \textit{ctf-factor}. This could be seen as a generalization of Def. \ref{def:consistent_ctffactor} from \citet{correaetal:21}.

\begin{definition}[Ctf-factor inconsistency level]
    A ctf-factor is said to have an inconsistency level, as defined by the table in Fig. \ref{fig:app_inconsistency}. If the ctf-factor satisfies several rows, the highest number is chosen (see Sec. \ref{sec:preliminaries} for a definition of a ctf-factor). $\hfill$ $\blacksquare$
\end{definition}
\begin{figure}[h!]
    \centering
    \includegraphics[width=0.95\textwidth]{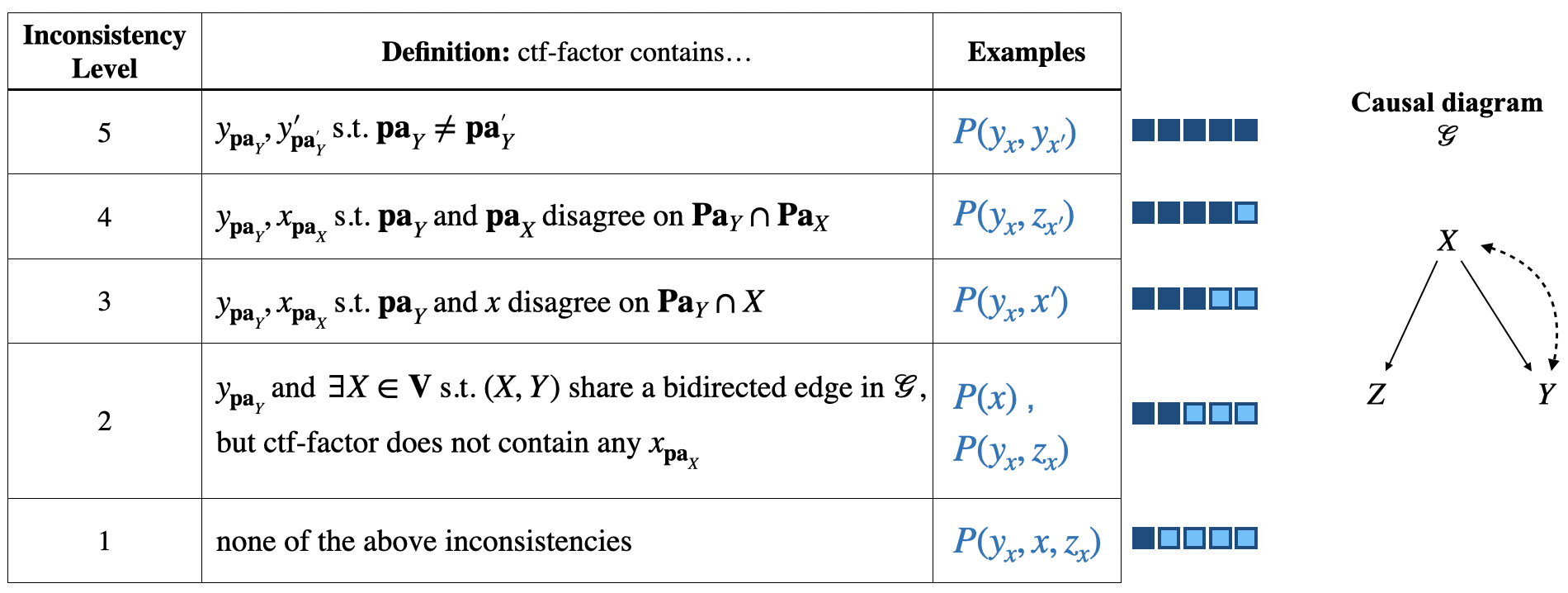}
    \caption{Levels of inconsistency of a ctf-factor, with examples for the causal diagram shown on the right.}
    \label{fig:app_inconsistency}
\end{figure}


We illustrate this in the example provided in Fig. \ref{fig:app_lattice}, which shows sections of the causal lattice generated from an example causal graph and available input distributions. Each node is a the distributions which can be identified from the input data, culminating in all possible target quantities which  are identifiable. Nodes which are \textit{ctf-factors} are colored \textcolor{blue}{\textbf{blue}}, and assigned an inconsistency level per Fig. \ref{fig:app_inconsistency}.

The key insights of this causal lattice presentation are as follows:
\vspace{-0.2cm}
\begin{itemize}
    \item Input distribution nodes belonging to $\mathcal{L}_1, \mathcal{L}_2, \mathcal{L}_{2.25}, \mathcal{L}_{2.5}$ (respectively) have outgoing edges to \textit{ctf-factors} of  $\leq$ inconsistency level 1, 2, 3, 4 (respectively). E.g. the observational distribution $P(w,x,a,z,y)$ can only point to \textit{ctf-factors} of inconsistency level 1, shown on the left in Fig. \ref{fig:app_lattice}.
   
    \item Different types of arrows from some ctf-factor(s) to other(s) can change the inconsistency levels differently:
    \begin{itemize}
        \item A 1-to-many arrow can decrease inconsistency level from the preceding node. E.g., in the section marked (ii) in Fig. \ref{fig:app_lattice}, inconsistency level goes from 3 to 1.
        \item A 1-to-1 arrow can reduce inconsistency level, or can increase it from 1 to 2. E.g., in the section marked (i) in Fig. \ref{fig:app_lattice}, inconsistency level goes from 1 to 2.
        \item A many-to-1 arrow can increase inconsistency level over each of the preceding nodes. E.g., in the section marked (iii) in Fig. \ref{fig:app_lattice}, inconsistency level goes from 3,1,1 to 4.
    \end{itemize}

    \item Target output nodes belonging to $\mathcal{L}_1, \mathcal{L}_2, \mathcal{L}_{2.25}, \mathcal{L}_{2.5}$ (respectively) have incoming edges from \textit{ctf-factors} of  $\leq$ inconsistency level 1, 2, 3, 4 (respectively). E.g. the interventional distribution $P(w_x,a_x)$ requires an incoming edge from \textit{ctf-factor} of inconsistency level 2, shown on the left in Fig. \ref{fig:app_lattice}.
    
    \item  {$\mathcal{L}_{3}$ output nodes have incoming edges from other $\mathcal{L}_{3}$ nodes or \textit{ctf-factors} of inconsistency level 5. And each \textit{ctf-factors} of inconsistency level 5  has incoming edges from other $\mathcal{L}_{3}$ nodes or \textit{ctf-factors} of inconsistency level 5}.
\end{itemize}

\textbf{This increase/decrease in inconsistency along lattice pathways is what allows higher-order counterfactual quantities from $\mathcal{L}_i$ to be identified from lower-layer $\mathcal{L}_j$ data, $j<i$.}

However, the last point is a fundamental limitation. If we want an $\mathcal{L}_3$ output node, it needs an incoming edge from a node with inconsistency level 5. The reasoning is provided in the proof of Thm. \ref{thm:id_limits}.

\textbf{By induction, it follows that no quantity in $\mathcal{L}_3 \setminus \mathcal{L}_{2.5}$ is identifiable because there is no lattice path to it starting from a physically realizable input data distribution} (as illustration on the bottom right in Fig. \ref{fig:app_lattice}).

\begin{figure}[h!]
    \centering
    \includegraphics[width=\textwidth]{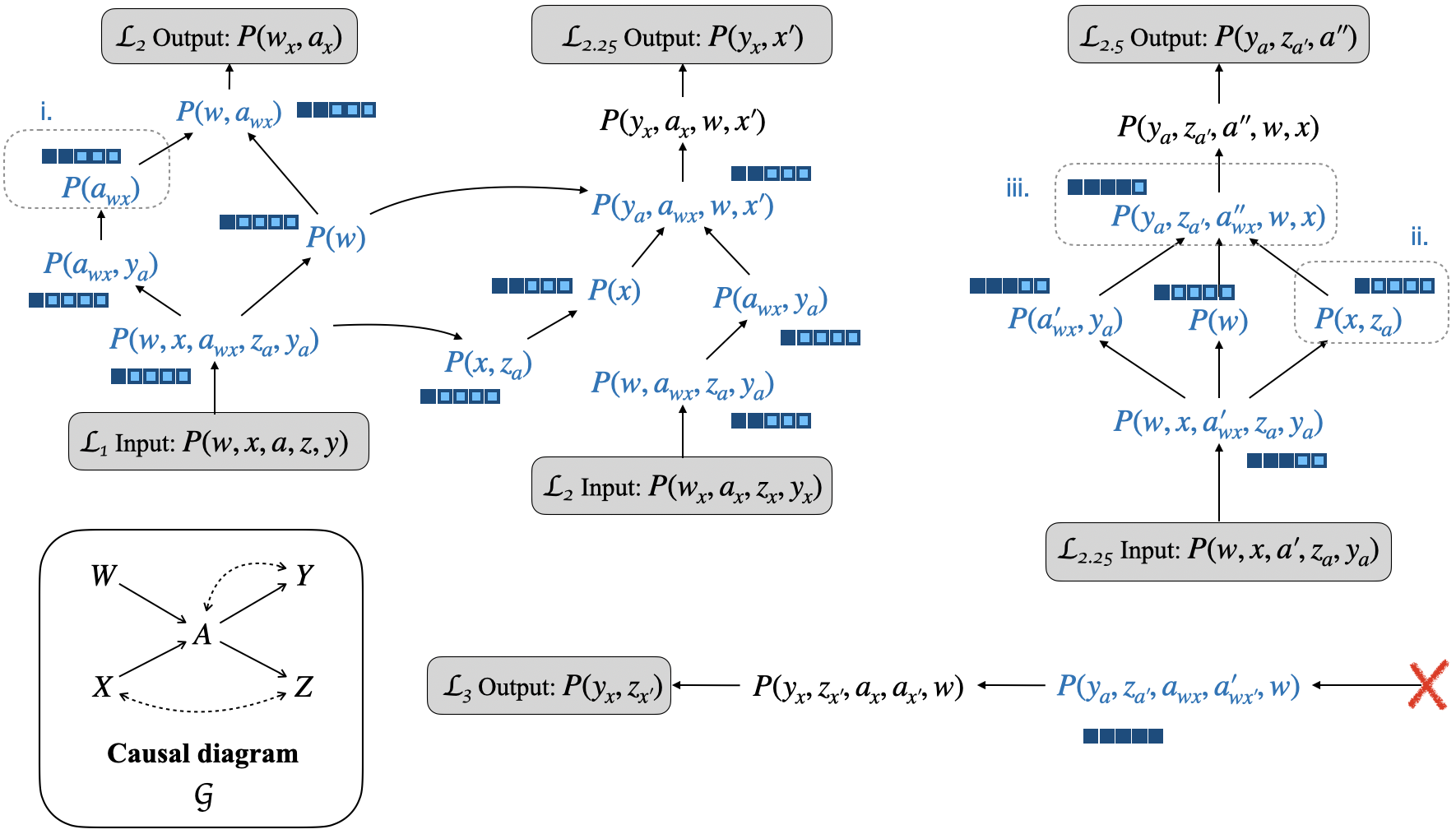}
    \caption{Example illustrating sections of the causal lattice generated from a graph $\mathcal{G}$, and input data distributions as source nodes. Each subsequent node is a distribution which can be identified from all distributions pointing into it. \textcolor{blue}{\textbf{Blue}} nodes are \textit{ctf-factors}, each assigned an inconsistency level per Fig. \ref{fig:app_inconsistency}. $\mathcal{L}_3$ output nodes don't have a valid lattice path starting from a realizable input data distribution.}
    \label{fig:app_lattice}
\end{figure}


\section{Partial Identification: Example Details}
\label{app:partial_id}

The simulation code is provided in the supplementary material, for reproducibility. We follow a \textit{Markov Chain Monte Carlo} (MCMC) methodology developed in \citet{zhang22ab} to derive empirical bounds for quantities of interest:

We generate synthetic input datasets from a random underlying (hidden) SCM. With this data, we derive a posterior distribution over all possible SCMs compatible with the causal graph and input data. Sampling from this posterior, we get a distribution over the values for our target query, giving us a range of feasible values. We repeat this with 5 random SCMs to ensure consistent results.

\underline{Hyperparameters}: $N=10^4$ samples per input distribution; credible interval $95\%$.

\subsection{Example 2}
\label{app:ex1}

The causal graph for this example is shown in Fig. \ref{fig:app_exp1}a. 

Causal assumptions: $Y$ represents an automated AI decision to issue a speeding ticket to a driver based on video footage. $X$ represents the color of the driver's car. $Z$ is an indicator of whether the driver was over the speed limit or not. $X$ might affect $Z$ if pedestrians and other drivers react to, say,
a red car and affect its speeding. $X$ might affect $Y$ directly due to a high correlation in training data between the color preference of different socioeconomic groups and their speeding tendency. Speeding and outcome
might be affected by an unobserved confounder - unlabeled
road obstacles (which present as video artifacts). Car color and outcome  might be affected by an unobserved confounder - unlabeled driver attributes (which can be picked up in video footage).

\begin{figure}[h]
    \centering
    \includegraphics[width=0.95\textwidth]{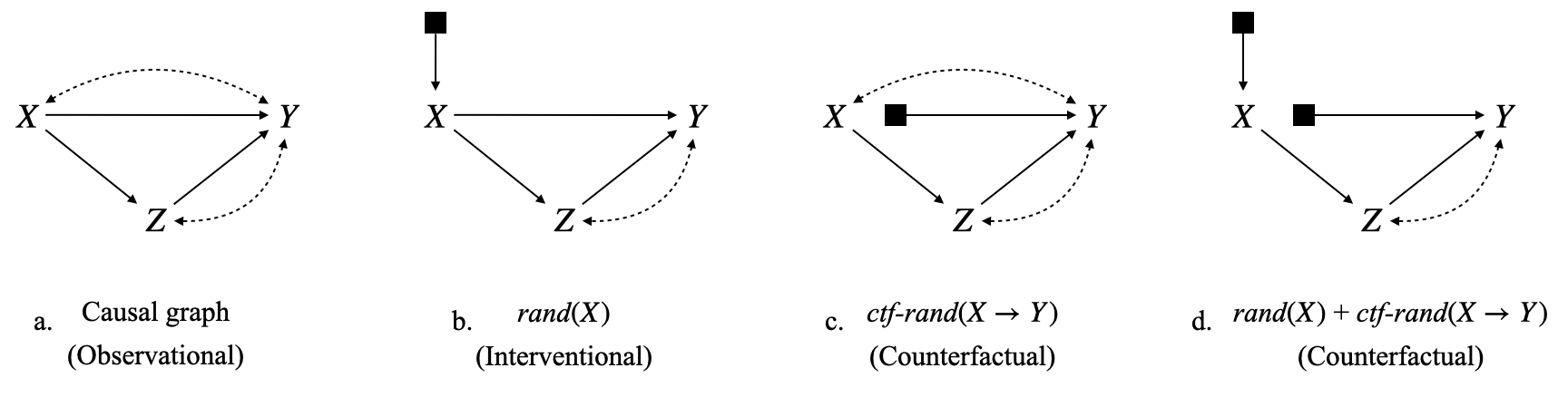}
    \caption{Causal diagram and different data-collection regimes for Example 2 (Traffic Camera v2).}
    \label{fig:app_exp1}
\end{figure}

We are interested in obtaining empirical bounds for two queries: 
\begin{itemize}
    \item [(i)] NTE-like $P(Y_{X=1} | X=0, Y=0)$: comparison between using observational + interventional input data (orange plots) vs. using counterfactual input data from the regime shown in Fig. \ref{fig:app_exp1}c (blue plots)
    \item [(ii)] NDE-like $P(Y_{X=1, Z_{X=0}} = 1)$: comparison between using observational + interventional input data (orange plots) vs. using counterfactual input data from the regime shown in Fig. \ref{fig:app_exp1}d (blue plots)
\end{itemize}

Results: in Fig. \ref{fig:app_exp1_results} we show results for each query across 5 randomly generated underlying true causal models. Across all examples, using counterfactual data (blue plots) narrows the credible interval for the query vs using observational and/or interventional data alone (orange plots). The true target value is indicated by a red line. 

\begin{figure}[t]
        \centering
        \begin{tikzpicture}
        
        \node[inner sep=0pt] (image_node_name) at (5,1.25) {
        \includegraphics[width=0.75\textwidth]{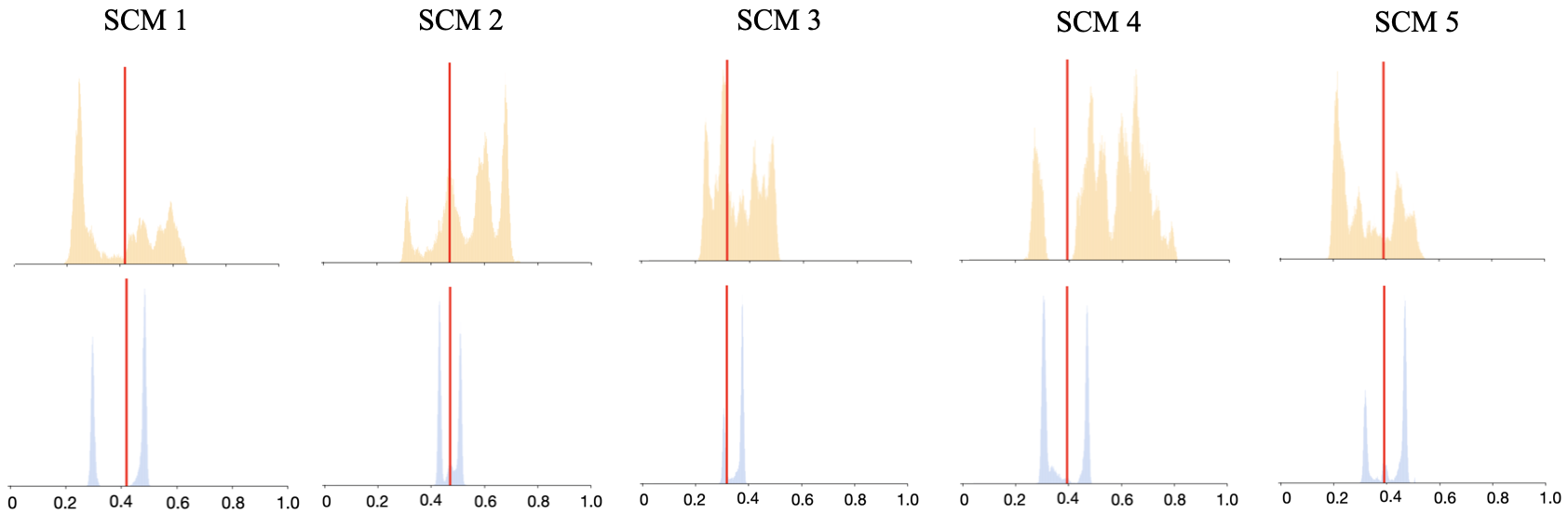}};

        \node[inner sep=0pt] (image_node_name) at (5,-3.75) {
        \includegraphics[width=0.75\textwidth]{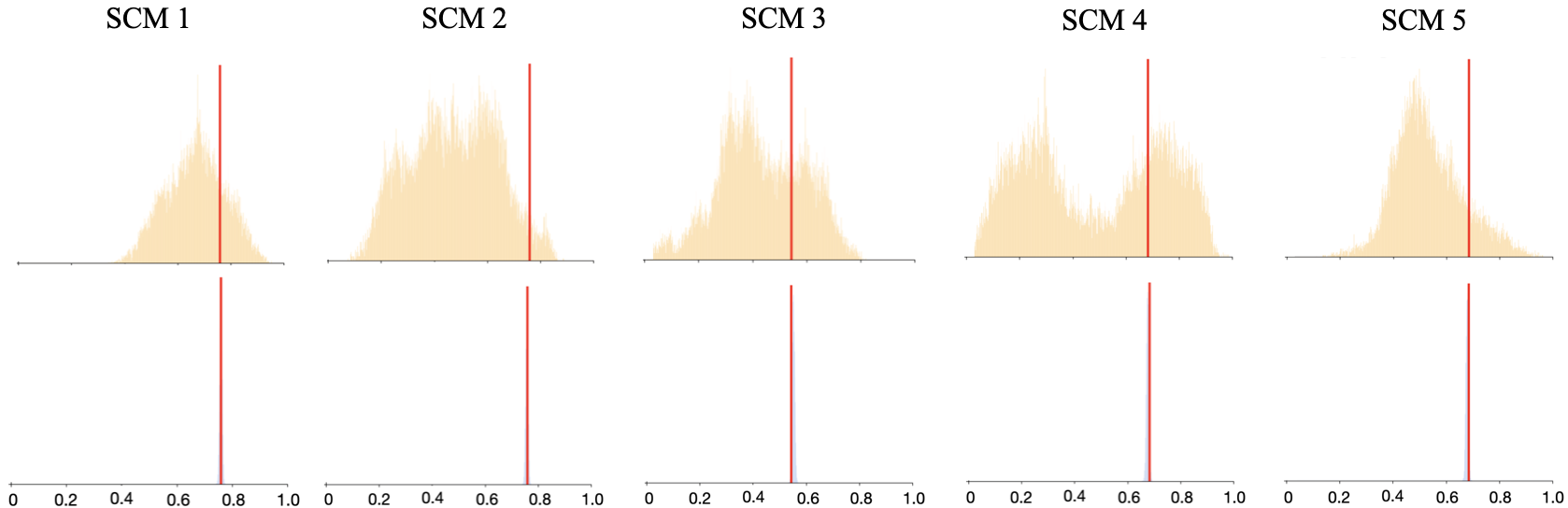}};
        
        \node[align=center] (t) at (-4,1.5) {\small Bounds for\\ \\ \small NTE:\\\small $P(y_x \mid x',y')$ };

        \node[align=center] (t) at (-4,-3.5) {\small NDE:\\\small $P(y_{xZ_{x'}})$ };
        
        \node[fill=BrickRed,draw,inner sep=0.2em, minimum width=0.2em] (box) at (-4,0.5-1.25) {\ };
        \node (t) at (-3.6+0.2,0.5-1.25) {\small truth};

        \node[fill=orange,draw,inner sep=0.2em, minimum width=0.2em] (box) at (-4,0.1-1.25) {\ };
        \node (t) at (-3.4+0.2,.1-1.25) {\small $\mathcal{L}_2$ range};

        \node[fill=hanblue,draw,inner sep=0.2em, minimum width=0.2em] (box) at (-4,-0.3-1.25) {\ };
        \node (t) at (-3.3+0.2,-0.3-1.25) {\small $\mathcal{L}_{2.5}$ range};

        \draw[] (-1.5,-1.25) -- (11.5,-1.25); 
        
    \end{tikzpicture}
    \caption{Example 2 results (over 5 random underlying SCMs) showing partial identification bounds for NTE and NDE quantities. Bounds are tighter using counterfactual data (blue) than interventional data (orange). Since NDE is identifiable from counterfactual data, blue bounds are not visible as they collapse to the true value (red).}
    \label{fig:app_exp1_results}
\end{figure}

\subsection{Example 3}\label{app:ex2}

The causal graph for this example is shown in Fig. \ref{fig:app_exp2}a. 

Causal assumptions: $Y$ represents a favourable outcome in a drug de-addiction program within 6 months. $X$ indicates a decision made by an experienced program officer about whether to send the program participant for intensive counseling sessions with a specialized therapist.

\begin{figure}[h]
    \centering
    \includegraphics[width=0.65\textwidth]{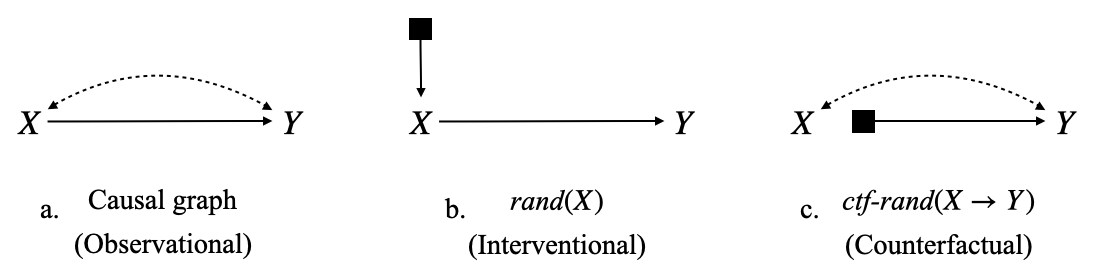}
    \caption{Causal diagram and different data-collection regimes for Example 3 (Unit Selection).}
    \label{fig:app_exp2}
\end{figure}

Decisions can be made under three data-collection modes, with data values as follows:
\begin{itemize}
    \item [a.] Observational (Fig. \ref{fig:app_exp2}a): the program officer follows their intuitive judgment based on years of experience, which may be affected by unobserved factors and biases. $\mathcal{L}_1$ data reveals that $P(X=1)=0.85$, $P(Y=1 | X=0) = 0.35$, and $P(Y=1 | X=1) = 0.15$.
    \item [b.] Interventional (Fig. \ref{fig:app_exp2}b): the program officer overrides their natural inclination and assigns a decision to a participant, such as using a randomizing device as in a clinical trial.  $\mathcal{L}_2$ data reveals $P(Y=1 ; \doo{X=0}) = 0.605$, and $P(Y=1 ; \doo{X=1}) = 0.225$.
    \item [c.] Counterfactual (Fig. \ref{fig:app_exp2}c): the program officer first registers what they normally \textit{would have} chosen for this participant $(X=x')$ before subjecting the unit to a fixed treatment $\doo{X=x}$ conditioned on $x'$. $\mathcal{L}_{2.5}$ data reveals $P(Y_{X=1}=1 | X=0) = 0.65$, and $P(Y_{X=0}=1 | X=1) = 0.65$.
\end{itemize}

Following the \underline{Interventional Strategy (1)}, recommended by \citet{Li_Pearl_2022}:

Using observational and interventional data, run the MCMC methodology described earlier to estimate the bounds for the proportion of each canonical type, $P(Y_{X=0}, Y_{X=1})$. Combine these bounds with the benefit function shown in Fig. \ref{fig:exp2} to derive the estimated bound of the avg. treatment benefit for the whole population.

Simulation using synthetic data ($N=10^4$ samples) shows a 95\% credible interval of the avg. population benefit $\Delta(1) \in [-1.3, 1.6]$.

It is inconclusive whether to administer the treatment $X=1$ to the whole population, because it could result in net negative or positive benefit on average.

Following a \underline{Counterfactual Strategy (2)}:

Using counterfactual data, run the MCMC methodology described earlier to estimate the bounds for $P(Y_{X=0}, Y_{X=1} | X=x')$ - the proportion of each canonical type in the sub-population for which the program officer feels naturally inclined to assign treatment $X=x'$. Combine these bounds with the benefit function shown in Fig. \ref{fig:exp2} to derive the \textit{conditional} subpopulation-level treatment benefit bounds.

Simulation using synthetic data ($N=10^4$ samples) shows a 95\% credible interval of the conditional benefits $\Delta(1 | X=0) \in [5.7, 11.6]$ and $\Delta(1 | X=1) \in [-2.5, -0.1]$.

The clear strategy for the program officer is to \textbf{go against their intuition} $X=x'$:
\begin{itemize}
    \item Assign treatment $\doo{X=1}$ to participants to whom they would have intuitively been inclined to reject for counseling (natural $X=0$), since $\Delta(1 | X=0) > 0$;
    \item Withhold treatment $\doo{X=0}$ for participants to whom they would have intuitively been inclined to recommend for counseling (natural $X=1$), since $\Delta(1 | X=1) < 0$;
\end{itemize}

This provably dominates \underline{Strategy (1)} because
\begin{align}
    \Delta(1) &= P(X=0)\Delta(1 | X=0) + P(X=1)\Delta(1 | X=1) & \text{Strategy 1 benefit}\\
    &< P(X=0)\Delta(1 | X=0) & \text{Strategy 2 benefit}\label{eq:benefit_dominance}
\end{align}

If the program officer chooses 0 for the whole population, they incur 0 benefit. If they choose 1 for the whole population, this would be strictly suboptimal than choosing 1 only for the subpopulation with natural $X=0$ (Eqn. \ref{eq:benefit_dominance}).
\section{Proofs of Results}
\label{app:proofs}

\subsection{Proofs for Sec. \ref{sec:ctf_id}}
\label{app:proofs_for_id}

Our proof strategy for the completeness of \textsc{ctfIDu+} will be to go step-by-step and show that each step is both necessary and sufficient to identify the query $P(\*Y_\star = \*y)$ from a set of input distributions indexed by $\mathbb{A}$. Refer to Fig. \ref{fig:identification_steps} for a helpful summary of the steps.

\begin{lemma}[Step ii] The exclusion operation is both necessary and sufficient for identification. \label{lem:stepii}
\end{lemma}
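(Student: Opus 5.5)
The plan is to show that replacing $\*Y_\star$ by $||\*Y_\star||$ (Line 3 of Alg. \ref{alg:ctfidu_plus}) is an \emph{equivalence}: the original query and the reduced query are equal as functionals in every SCM compatible with $\mathcal{G}$. Once we have that, the two notions in the statement follow together. \emph{Sufficiency} means an expression for $P(||\*Y_\star||=\*y)$ from $\mathbb{A}$ is an expression for $P(\*Y_\star=\*y)$. \emph{Necessity} means that if $P(\*Y_\star=\*y)$ is identifiable, then so is $P(||\*Y_\star||=\*y)$, so applying the step loses no identifiable queries. Since identifiability from $\langle\mathcal{G},\mathbb{A}\rangle$ is defined as unique computability across all models inducing $\mathcal{G}$, two quantities that agree in every such model are identifiable or non-identifiable together.

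The core step is a unit-level argument using Lem. \ref{lem:exclusion}. Fix an SCM $\mathcal{M}$ compatible with $\mathcal{G}$ and a unit $\*u$. For each potential response $Y_{\*x}\in\*Y_\star$, set $\*Z=\*X\cap An(Y)_{\mathcal{G}_{\overline{\*X}}}$. We argue that $Y_{\*x}(\*u)=Y_{\*z}(\*u)$. The reason is that in $\mathcal{M}_{\*x}$, $Y$ is computed recursively from the mechanisms of its ancestors in $\mathcal{G}_{\overline{\*X}}$, and every intervened variable in $\*X\setminus\*Z$ has no directed path to $Y$ in that graph. Concretely, we induct along a topological order of $An(Y)_{\mathcal{G}_{\overline{\*X}}}$: each such ancestor $W$ takes the same value under $\doo{\*x}$ and $\doo{\*z}$, because its parents are either in $\*Z$ (fixed identically) or ancestors handled earlier. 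This is exactly the content of Lem. \ref{lem:exclusion}.

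Since the equality holds pointwise in $\*u$ for each conjunct, the indicator product in the $\mathcal{L}_3$ valuation is unchanged. Hence $P^{\mathcal{M}}(\*Y_\star=\*y)=P^{\mathcal{M}}(||\*Y_\star||=\*y)$ for every compatible $\mathcal{M}$, and the equivalence above yields both directions. We should also handle the trivial cases of Lines 4--5 in the same way. If $y_{\*x},y'_{\*x}$ both appear with $y\neq y'$, or $Y_y=y'$ appears, the event is empty for every $\*u$ (by functionality, respectively by the composition/effectiveness axiom). So returning $0$ is correct in every model, and the query is trivially identifiable.

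The main obstacle we expect is bookkeeping rather than substance. Terms in $\*Y_\star$ may share variables or may collapse to identical potential responses after exclusion, such as $Y_{x}$ and $Y_{xw}$ both becoming $Y_x$. After exclusion these must be merged, or declared contradictory if their values differ. We must therefore check that the merged event is still equal to the original one as a set of units. For the degenerate effectiveness case $Y\in\*X$, we must also confirm that exclusion retains the intervention on $Y$ itself, since $Y\in An(Y)$ in $\mathcal{G}_{\overline{\*X}}$, so that the contradiction is still detected.
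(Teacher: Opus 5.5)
Your proposal is correct and takes essentially the same route as the paper. Both arguments rest on Lem.~\ref{lem:exclusion}, which gives $P^{\mathcal{M}}(\mathbf{Y}_\star=\mathbf{y}) = P^{\mathcal{M}}(||\mathbf{Y}_\star||=\mathbf{y})$ in every model compatible with $\mathcal{G}$, so identifiability from $\langle \mathcal{G}, \mathbb{A}\rangle$ is unchanged; the paper stops at that one-line appeal, while your topological-induction re-derivation of the lemma and your handling of merged or contradictory terms (Lines 4--5) are correct details the paper leaves implicit.
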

\begin{proof}
    By Lem. \ref{lem:exclusion}, $Y_\*x = ||Y_\*x||, \forall Y_{\*x} \in \*Y_\star$. The identification result is the same for $||\*Y_\star||$ as it is the original.
\end{proof}

As a precursor to handling Step iii., we prove an intermediary result next.

\begin{lemma} \label{lem:necessary_ancestors_lem1}
    Suppose $P(\*W_\star = \*w)$ is not identifiable from a set of input distributions and causal diagram $\mathcal{G}$, and there exist terms $A_{\*{t_1}},B_{\*{t_2}} \in \*W_\star$ s.t. $A_{\*{t_1}}$ is a counterfactual parent of $B_{\*{t_2}}$. Then $\sum_{a} P(\*W_\star = \*w)$ is not identifiable from the same input, either. (See Def. \ref{def:ctf_ancestors} for a definition of \textit{counterfactual ancestors}.)
\end{lemma}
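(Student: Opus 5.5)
We argue by contraposition through a model-pair construction. Since $P(\*W_\star = \*w)$ is not identifiable from the input distributions $\mathbb{A}$ and $\mathcal{G}$, there exist two SCMs $\mathcal{M}_1, \mathcal{M}_2$, both inducing $\mathcal{G}$, that agree on every input distribution indexed by $\mathbb{A}$ but disagree on $P(\*W_\star = \*w)$. The goal is to modify this pair into a new pair $\mathcal{M}'_1, \mathcal{M}'_2$ with three properties: they still induce $\mathcal{G}$, they still agree on all inputs, and they disagree on the marginal $\sum_a P(\*W_\star = \*w)$. The guiding idea is to \emph{encode} the value of $A$ into its counterfactual child $B$, so that summing out $A_{\*t_1}$ loses no information about the discrepancy.

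\textbf{Step 1 (construction).} We enlarge the domain of $B$ to $\mathcal{D}_B \times \mathcal{D}_A$. In $\mathcal{M}'_k$, the mechanism for $B$ becomes
\[
f'_B(\*{pa}_B, \*u_B) = \bigl(f_B(\*{pa}_B,\*u_B),\, a\bigr),
\]
where $a$ is the value of the parent $A$ fed into $B$. Every child of $B$ reads only the first coordinate, and all other mechanisms and $P(\*u)$ are left unchanged. This preserves the graph, since no edges or confounders are added. Because $A_{\*t_1}$ is a counterfactual parent of $B_{\*t_2}$, the value of $A$ entering $B$ in regime $\*t_2$ is exactly $A_{\*t_1}$. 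Hence the event $\{A_{\*t_1}=a, B_{\*t_2}=b\}$ corresponds to $\{B'_{\*t_2}=(b,a)\}$, and the remaining terms of $\*W_\star$ are unaffected.

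\textbf{Step 2 (discrepancy transfers).} It follows that $\sum_a P^{\mathcal{M}'_k}(\*W'_\star = \*w')$, with $B'_{\*t_2}$ set to $(b,a)$, equals $P^{\mathcal{M}_k}(\*W_\star=\*w)$. The marginal therefore still differs between $\mathcal{M}'_1$ and $\mathcal{M}'_2$. If one insists on keeping the original domain of $B$, an alternative is a bit-encoding in the spirit of the ctf-hedge construction of Lem.~\ref{lem:ctfhedge_nonid}, but the enlarged-domain route is simpler.

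\textbf{Step 3 (inputs still agree): the main obstacle.} Each input distribution $P^{\mathcal{A}}(\*T_\star)$ of $\mathcal{M}'_k$ must be shown to be a fixed function of input distributions of $\mathcal{M}_k$. In any regime, the second coordinate of $B$ equals the value of $A$ as perceived by $B$. This is either a randomized constant, when the regime contains a ctf-rand or rand action on $A\to B$, or the natural or regime-value of $A$, which is itself recorded in that same regime's data.

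The delicate case is when $A$ is not ancestral to the observed terms in some regime, or when $A$'s perceived value differs from its recorded value, as happens under path-specific ctf-rand($A\to \*C$) with $B\notin\*C$. In those cases we must check that $A$'s value as seen by $B$ is still a coordinate of the observed regime distribution. I would first try to argue this directly from the definition of the regime map (Alg.~\ref{alg:regime_regex}). The second coordinate then becomes a deterministic function of recorded quantities, so $P^{\mathcal{A}}_{\mathcal{M}'_1}=P^{\mathcal{A}}_{\mathcal{M}'_2}$.

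If that direct argument fails for some regime, the fallback is to restrict attention to realizable input regimes. There each variable in the data is observed under a single regime, by Thm.~\ref{thm:ancestor_check}, so the value of $A$ entering $B$ is determined by the observed record. This would be the step I expect to require the most care.
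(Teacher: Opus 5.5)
Your proposal follows the paper's route. The paper's proof defers to the model-pair argument of \citet[Lem.~4]{correaetal:21}, which you are in effect reconstructing: enlarge $B$ so it carries the value of $A$ it reads, so that summing out $A_{\*t_1}$ does not erase the discrepancy. Fix the second coordinate of $B'_{\*t_2}$ to the value of $A_{\*t_1}$ specified in $\*w$, not to the summation index. The paper's only new content is your Step~3 check that, in a realizable input regime, the value $B$ reads from $A$ is either a randomization constant or a recorded coordinate.

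What closes your ``delicate case'' is full visibility of each input regime, which the paper invokes explicitly, not Thm.~\ref{thm:ancestor_check}: every variable, $A$ included, is recorded unless overridden by \textit{rand}($A$), in which case $B$ reads a constant. The hypothesis is genuinely needed: in $A \rightarrow B$ with input $P(B)$ alone, $P(a,b)$ is not identifiable but $\sum_a P(a,b)$ is. Finally, \textit{ctf-rand}($A \rightarrow \*C$) with $B \notin \*C$ is not delicate, since $B$ then reads the recorded natural $A$.
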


\begin{proof}
This was proved in \citet[~Lem. 4]{correaetal:21}. The steps remain identical when the input scope includes realizable Layer 3 distributions. In particular, equations (43), (44) in their proof and the case-analysis that follows are the only location where they assume input is restricted to Layer 2. For a realizable input regime $\mathcal{A}$ under full visibility, $A_{1_{[\*t}]} \not \in \*Z_\star$ only if action-set $\mathcal{A}$ contains the action \textit{rand($A$)} corresponding to $\doo{a_1}$. In such a regime  $A_{1_{[\*t}]}$ would not be a counterfactual parent of any potential response, and so would not appear in $\*D_\star \setminus \*Z_\star$ either, in their equations (43-44).
\end{proof}

\begin{lemma}[Step iii] \label{lem:stepiii} In order to identify $P(\*Y_\star = \*y)$ from $\mathcal{G}$ and $\mathbb{A}$ it is necessary and sufficient to identify $P(\*W_\star = \*w)$ from $\mathcal{G}$ and $\mathbb{A}$, where $\*W_\star = An(\*Y_\star)$, the set of counterfactual ancestors (Def. \ref{def:ctf_ancestors}) of $\*Y_\star$.
\end{lemma}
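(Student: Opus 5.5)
Both directions are needed, and the plan handles each separately.

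\textbf{Sufficiency.} This is immediate. By Def.~\ref{def:ctf_ancestors} we have $\*Y_\star \subseteq An(\*Y_\star) = \*W_\star$, and $\*y$ is the restriction of $\*w$ to $\*Y_\star$. Hence
\begin{align}
P(\*Y_\star = \*y) = \sum_{\*w \setminus \*y} P(\*W_\star = \*w),
\end{align}
so any expression identifying $P(\*W_\star=\*w)$ from $\langle \mathcal{G}, \mathbb{A}\rangle$ yields one for the query by marginalization. No property of the input regimes enters this step, so it holds verbatim for Layer 3 inputs.

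\textbf{Necessity.} This is the substantive direction. Assume $P(\*W_\star = \*w)$ is not identifiable from $\langle \mathcal{G}, \mathbb{A}\rangle$, and show the query is not either.
\begin{enumerate}
\item Order the terms of $\*W_\star \setminus \*Y_\star$ in reverse topological order of their underlying variables: $A^{(1)}_{\*t_1}, \dots, A^{(k)}_{\*t_k}$. Each $A^{(i)}$ is chosen to have no counterfactual descendant among the terms still to be removed.
\item Define $\*W^{(0)}_\star = \*W_\star$ and $\*W^{(i)}_\star = \*W^{(i-1)}_\star \setminus \{A^{(i)}_{\*t_i}\}$, so that $\*W^{(k)}_\star = \*Y_\star$.
\item Prove by induction on $i$ that $P(\*W^{(i)}_\star)$ is non-identifiable. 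The inductive step is Lem.~\ref{lem:necessary_ancestors_lem1}, applied to the non-identifiable $P(\*W^{(i-1)}_\star)$, which marginalizes out $a^{(i)}$.
\end{enumerate}

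The hypothesis of Lem.~\ref{lem:necessary_ancestors_lem1} requires a term $B_{\*t'} \in \*W^{(i-1)}_\star$ of which $A^{(i)}_{\*t_i}$ is a counterfactual parent. Such a $B$ exists for the following reason. Every element of $An(\*Y_\star) \setminus \*Y_\star$ lies on a directed path (in the appropriate mutilated graph $\mathcal{G}_{\underline{\*X}}$) to some element of $\*Y_\star$. The next node on that path is a counterfactual child of $A^{(i)}$. Because the removal order is reverse topological, that child has not yet been removed, so it is either in $\*Y_\star$ or still in $\*W^{(i-1)}_\star$.

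\textbf{Main obstacle.} The delicate point is making this ``counterfactual child'' bookkeeping precise. The notion must match the one used in Lem.~\ref{lem:necessary_ancestors_lem1}, where the child's regime subscript contains the parent's value variable. Concretely, we must check that for $A_{\*z} \in An(Y_{\*x})$ with $A \notin \*X$, the path to $Y$ passes through a term $B_{\*z'} \in An(Y_{\*x})$ with $A \in Pa(B)$ and $A \notin \*Z'$. This follows from the construction $\*z = \*x \cap An(W)_{\mathcal{G}_{\overline{\*X}}}$ in Def.~\ref{def:ctf_ancestors}.

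A second subtlety is repeated variables under different regimes. Distinct terms $A_{\*t}$ and $A_{\*s}$ are handled as separate steps of the induction, with the same order argument applied to each.

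Finally, the validity of Lem.~\ref{lem:necessary_ancestors_lem1} for realizable Layer 3 input regimes has already been established. So once the ordering is fixed, the induction closes and the equivalence follows.
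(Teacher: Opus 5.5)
\textbf{Error in the removal order.} Your overall plan is the paper's: marginalize for sufficiency, and for necessity peel off the terms of $\*W_\star\setminus\*Y_\star$ one at a time using Lem.~\ref{lem:necessary_ancestors_lem1}. But the order you fix is backwards, and with it the inductive step fails.

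You order the terms in \emph{reverse} topological order, so each $A^{(i)}$ has no counterfactual descendant among the terms still to be removed. That means every counterfactual child of $A^{(i)}$ lying in $\*W_\star\setminus\*Y_\star$ has \emph{already} been marginalized out. Your justification, ``because the removal order is reverse topological, that child has not yet been removed,'' is therefore false. The next node on the path from $A^{(i)}$ to $\*Y_\star$ is either in $\*Y_\star$ or already gone, and the second case occurs in general.

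For example, take $X\rightarrow A\rightarrow B\rightarrow Y$ and $\*Y_\star=\{Y_x\}$, so $\*W_\star=\{A_x,B_x,Y_x\}$. Your order removes $B_x$ first, leaving $\{A_x,Y_x\}$. There $A_x$ is not a counterfactual parent of any remaining term, so Lem.~\ref{lem:necessary_ancestors_lem1} cannot be invoked to remove it. The induction stalls.

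\textbf{The fix.} Remove terms in topological order (ancestors first), which is what the paper's proof does (``apply Lem.~\ref{lem:necessary_ancestors_lem1} in topological order''). Then, when $A^{(i)}$ is removed, all of its counterfactual descendants in $\*W_\star\setminus\*Y_\star$ are still present. In particular, the next node on its path to $\*Y_\star$ is still present: your ``main obstacle'' paragraph correctly identifies it as a term $B_{\*z'}\in An(\*Y_\star)$ with $A\in Pa(B)$ and $A\notin\*Z'$. That term lies in $\*W^{(i-1)}_\star$ or in $\*Y_\star$, so the hypothesis of Lem.~\ref{lem:necessary_ancestors_lem1} holds at every step.

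With this single change your argument coincides with the paper's. Your verification that a counterfactual child always survives is more explicit than the paper's one-line appeal to induction.
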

\begin{proof}
    If $P(\*W_\star = \*w)$ is identifiable from $\mathcal{G}$ and $\mathbb{A}$, then $P(\*Y_\star = \*y) = \sum_{\*w \setminus \*y} P(\*W_\star = \*w)$.

    Reverse direction: every counterfactual ancestor of $\*Y_\star$ is contained in $\*W_\star$. Thus, we apply Lem. \ref{lem:necessary_ancestors_lem1} in topological order to argue by induction that if $P(\*W_\star = \*w)$ is not identifiable then $\sum_{\*w \setminus \*y} P(\*W_\star = \*w)$ is not identifiable either. 
\end{proof}

\begin{lemma}[Step iv] \label{lem:stepiv} In order to identify $P(\*W_\star = \*w)$ from $\mathcal{G}$ and $\mathbb{A}$, for some $\*W_\star = An(\*W_\star)$, it is necessary and sufficient to identify $P(\*W'_\star = \*w)$ from $\mathcal{G}$ and $\mathbb{A}$, where $\{\*W'_\star=\*w\}$ is the result of applying the ancestral set transformation, or AST, to $\{\*W_\star=\*w\}$. Further, $P(\*W'_\star = \*w)$ satisfies the definition of a ctf-factor.
\end{lemma}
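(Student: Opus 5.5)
The plan is to show that the AST is an exact equality of probabilities that holds in every SCM compatible with $\mathcal{G}$. Identifiability of the two expressions is then equivalent. The ctf-factor claim will be checked directly from the form of the output.

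\emph{Equality.} By Thm. \ref{thm:ast}, for every SCM $\mathcal{M}$ inducing $\mathcal{G}$ we have $P^{\mathcal{M}}(\*W_\star=\*w)=P^{\mathcal{M}}(\*W'_\star=\*w)$, because $\*W_\star$ is ancestral. I will re-derive the mechanism briefly, since it is the only real content.

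Consider each $W_{\*t}\in\*W_\star$ and each parent $P\in\*{Pa}_W$. Either $P\in\*T$, in which case its value is fixed by $\*t$. Or $P\notin\*T$, in which case the exclusion operator (Lem. \ref{lem:exclusion}) and Def. \ref{def:ctf_ancestors} give $P_{\*t\cap An(P)}\in\*W_\star$. On the event $\*W_\star=\*w$ that term equals the value $p$ recorded in $\*w$. Composition then gives $W_{\*t}=W_{\*t,p}$ on this event.

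Iterating over all parents yields $W_{\*t}=W_{\*{pa}_w}$ pointwise in $\*u$ on the event. The reverse inclusion follows by the same argument applied in topological order: once all parent terms take the values in $\*w$, each $W_{\*{pa}_w}=w$ forces $W_{\*t}=w$. So the two indicator products agree for every $\*u$, and summing against $P(\*u)$ gives equality.

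\emph{Identifiability transfer.} Suppose $P(\*W'_\star=\*w)$ is identifiable from $\langle\mathcal{G},\mathbb{A}\rangle$, with expression $f(\mathbb{A})$. Then $P(\*W_\star=\*w)=f(\mathbb{A})$ in every compatible model, so it is identifiable too. This gives sufficiency.

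For necessity, argue by contraposition. If $P(\*W'_\star=\*w)$ is not identifiable, there are two models $\mathcal{M}_1,\mathcal{M}_2$ compatible with $\mathcal{G}$ that agree on all distributions indexed by $\mathbb{A}$ but differ on $P(\*W'_\star=\*w)$. By the equality above they differ on $P(\*W_\star=\*w)$ as well, which witnesses its non-identifiability. The input data are untouched by the transformation, so nothing specific to Layer 3 inputs enters this step.

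\emph{Ctf-factor form.} After the AST, every term has the form $W_{\*{pa}_w}$, whose subscript is a full assignment to $\*{Pa}_W$. This matches the definition of a ctf-factor $\{V_{i[\*{pa}_i]}\}$ in Sec. \ref{sec:preliminaries}.

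One subtlety needs care. Two terms $W_{\*t},W_{\*s}$ with $\*t\neq\*s$ might map to the same $W_{\*{pa}}$. If their values coincide, the duplicate is simply merged. If their values differ, the probability is $0$, a case already handled by Line 4 of Alg. \ref{alg:ctfidu_plus}.

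The main obstacle I expect is the pointwise equality when a parent $P\notin\*T$ appears in $\*W_\star$ under a regime different from the one induced from $\*t$. The check is that Def. \ref{def:ctf_ancestors} assigns the subscript $\*t\cap An(P)_{\mathcal{G}_{\overline{\*T}}}$, and that this is exactly the potential response $P$ inherits inside $W_{\*t}$. I will verify this by induction on topological order.
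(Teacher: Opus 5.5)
Your proposal is correct and follows the paper's proof. The paper simply invokes Thm.~\ref{thm:ast} to get $P(\mathbf{W}_\star=\mathbf{w})=P(\mathbf{W}'_\star=\mathbf{w})$ in every model compatible with $\mathcal{G}$ and reads off the ctf-factor form $\bigwedge W_{\mathbf{pa}_W}=w$; your pointwise re-derivation of the AST and your explicit two-model contrapositive just spell out what the paper leaves implicit.

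One small slip: conflicting duplicates after the AST are not caught by Line 4 of Alg.~\ref{alg:ctfidu_plus}. For example, with $X\to Z\to Y$, the query $P(Y_x=y,Y_{x'}=y')$ with $y\neq y'$ passes Line 4, yet both terms become $Y_z$ whenever $\mathbf{w}$ assigns the same value $z$ to $Z_x$ and $Z_{x'}$. Such terms are $0$ in every model and hence trivially identified, so your argument is unaffected.
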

\begin{proof}
    By Thm. \ref{thm:ast}, $P(\*W_\star = \*w) = P(\*W'_\star = \*w)$. By construction, $\{\*W'_\star=\*w\}$ is of the form $\{\bigwedge_{W_{\*t} \in \*W_\star} W_{\*{pa}_W} = w\}$, satisfying the definition of a ctf-factor (see Preliminaries in Sec. \ref{sec:preliminaries}).
\end{proof}

\begin{lemma}[Step v] \label{lem:stepv} In order to identify a ctf-factor $P(\*W_\star = \*w)$ from $\mathcal{G}$ and $\mathbb{A}$, it is necessary and sufficient to identify each ctf-factor $P(\*C^j_\star = \*c^j), j = 1...k,$ from $\mathcal{G}$ and $\mathbb{A}$, where $\{\*C^j_\star\}$ is a partition of $\*W_\star$ s.t. each $\*V(\*C^j_\star)$ forms a c-component in $\mathcal{G}[\*V(\*W_\star)]$.
\end{lemma}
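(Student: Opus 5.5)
Sufficiency follows directly from the counterfactual factorization theorem (Thm. \ref{thm:ctf_factorization}). If each $P(\*C^j_\star = \*c^j)$ is identifiable from $\mathcal{G}$ and $\mathbb{A}$, then Eq. \ref{eq:ctf_factor_compose} gives $P(\*W_\star = \*w) = \prod_j P(\*C^j_\star = \*c^j)$. This equality holds in every SCM compatible with $\mathcal{G}$, because the c-components of $\mathcal{G}[\*V(\*W_\star)]$ share no exogenous variables. Any two models agreeing on the input distributions therefore agree on each factor, and hence on the product.

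For necessity I would argue by contrapositive. Suppose some factor $P(\*C^{j}_\star = \*c^{j})$ is not identifiable. Then there are two SCMs $\mathcal{M}_1, \mathcal{M}_2$ compatible with $\mathcal{G}$ that agree on every distribution indexed by $\mathbb{A}$ but differ on this factor. The goal is to build two models that also agree on the input yet differ on the full ctf-factor $P(\*W_\star=\*w)$.

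The plan is to keep the mechanisms and exogenous variables of $\mathcal{M}_1$ (resp. $\mathcal{M}_2$) for the variables in $\*C^{j}$, and give every other variable of $\*V(\*W_\star)$ a common mechanism in both models. Because $\*C^{j}$ is a c-component of $\mathcal{G}[\*V(\*W_\star)]$, its exogenous variables can be chosen independent of the remaining ones without violating $\mathcal{G}$. The common mechanism should make the product of the other factors, evaluated at $\*w$, a fixed positive constant shared by both models. If one model's other variables are made uniform random draws independent of their parents (respecting positivity), each $P(\*C^{i}_\star=\*c^{i})$ for $i\neq j$ equals $\prod 1/|dom|$ in both models. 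The full ctf-factor then differs between the models by a positive multiple of the difference in the $j$-th factor.

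The main obstacle is checking that the modified models still agree on all input distributions in $\mathbb{A}$. These are Layer 3 quantities $P^{\mathcal{A}}(\*T_\star)$, which by Thm. \ref{thm:ctf_factorization} factor over c-components of the input regime, not of $\mathcal{G}[\*V(\*W_\star)]$. A c-component of the input may straddle $\*C^j$ and the other variables. I would handle this as in the analogous step of \citet[Lem. 4/Thm. 2]{correaetal:21}: invoke their ctf-factor non-identifiability construction, which makes the "other" variables independent noise. Each input ctf-factor then becomes the corresponding factor restricted to $\*C^{j}$ times constants, and the disagreement is confined to the $\*C^{j}$ part. Their argument only assumes Layer 2 inputs where the input is expressed via c-factors. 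As in Lem. \ref{lem:necessary_ancestors_lem1}, I would note that a realizable regime $\mathcal{A}$ again yields a ctf-factor over $\mathcal{G}$ after AST, so replacing c-factors with ctf-factors leaves the marginalization argument unchanged. Positivity of all distributions ensures the scaling constants are nonzero, which completes the necessity direction.
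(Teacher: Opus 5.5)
Your sufficiency direction matches the paper's: the product formula in Thm.~\ref{thm:ctf_factorization} holds in every model compatible with $\mathcal{G}$, so identifiable factors give an identifiable product.

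The necessity direction has a genuine gap. The step you flag as the main obstacle is never actually discharged. Set $\mathbf{N} := \mathbf{V}(\mathbf{W}_\star)\setminus \mathbf{C}^j$. Once the mechanisms of $\mathbf{N}$ are replaced by uniform noise, the input distributions change. For example, an observational input becomes $P'_k(\mathbf{v}) = \bigl(\prod_{N\in\mathbf{N}} 1/|\mathrm{dom}(N)|\bigr)\, P_k(\mathbf{v}\setminus\mathbf{n};\mathrm{do}(\mathbf{n}))$, and analogous intervened quantities appear for the other regimes in $\mathbb{A}$. The fact that $\mathcal{M}_1$ and $\mathcal{M}_2$ agree on the original inputs does not constrain these new quantities.

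In particular, you leave the variables outside $\mathbf{V}(\mathbf{W}_\star)$ as they are in $\mathcal{M}_1$ resp.\ $\mathcal{M}_2$. Such variables can share latents with both $\mathbf{N}$ and $\mathbf{C}^j$: c-components of $\mathcal{G}[\mathbf{V}(\mathbf{W}_\star)]$ can be linked through outside vertices, e.g.\ $X \leftrightarrow Z \leftrightarrow Y$ with $Z \notin \mathbf{V}(\mathbf{W}_\star)$. So nothing prevents an arbitrary witness pair from disagreeing on the modified inputs. Your assertion that each input ctf-factor becomes ``the factor restricted to $\mathbf{C}^j$ times constants'' is exactly what would need proof. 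The appeal to Correa et al.\ does not supply it: it would require a witness pair with special structure, which is what the later hedge/thicket constructions provide, not something available at this step.

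The missing idea is that no model construction is needed. The second part of Thm.~\ref{thm:ctf_factorization} (Eq.~\ref{eq:ctf_factor_decompose}) writes each $P(\mathbf{C}^j_\star=\mathbf{c}^j)$ explicitly as a product of ratios of marginals of $P(\mathbf{W}_\star=\mathbf{w})$. These marginals are taken along a topological order of $\mathcal{G}[\mathbf{V}(\mathbf{W}_\star)]$ with the subscripts held fixed. Hence any two models that agree on the inputs also agree, by identifiability, on the ctf-factor $P(\mathbf{W}_\star=\mathbf{w})$ as a function of its values, and therefore agree on every factor. That one observation is the paper's entire proof of necessity.
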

\begin{proof}
    By Thm. \ref{thm:ctf_factorization}, if we can identify each $P(\*C^j_\star = \*c^j)$ we can compute $P(\*W_\star = \*w)$ as the product of these terms. By the same theorem, if we can identify $P(\*W_\star=\*w)$, we can compute each $P(\*C^j_\star = \*c^j)$ using a topological ordering over $\mathcal{G}[\*V(\*W_\star)]$.
\end{proof}


\begin{lemma}[Step vi.]\label{lem:step_vi}
    Lines 11-12 return an expression $P(\*T_\star)$ which is a valid ctf-factor.
\end{lemma}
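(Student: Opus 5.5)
The argument follows the two lines in order. Line 11 calls $\textsc{regime-regex}(\mathcal{G},\mathcal{A})$. Line 12 applies the ancestral set transformation (Thm. \ref{thm:ast}). The goal is to show that the output is an event $\{\*T_\star=\*t\}$ of the form $\{V_{1_{[\*{pa}_1]}},\dots,V_{k_{[\*{pa}_k]}}\}$, which is exactly the ctf-factor format of Sec. \ref{sec:preliminaries}. The plan has three steps: characterize what the helper returns, verify that this set is ancestral, and then invoke the AST exactly as in the proof of Lem. \ref{lem:stepiv}.

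\textbf{Step 1.} I would establish what $\textsc{regime-regex}$ returns for a realizable regime $\mathcal{A}$. In a regime built from observation, \textit{rand()} and \textit{ctf-rand()} actions, every variable $V\in\*V$ is measured exactly once. Its value is generated by $f_V$, whose parent arguments are of two kinds. For edges $B\to V$ targeted by some \textit{ctf-rand}$(B\to \*C)$ with $V\in\*C$, the argument is the randomized value $b$. Otherwise it is the natural (or \textit{rand}-fixed) value of $B$ realized in that same regime. Hence the helper outputs a joint event with exactly one potential response $V_{\*s_V}$ per observed $V$. Here $\*s_V$ collects the ctf-randomized parent values and any \textit{rand}-fixed values, and this is the specification of \citet{raghavan2025realizability} for $\mathcal{L}_{2.5}$ (Def. \ref{def:l2.5}). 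A variable under \textit{rand}$(A)$ simply appears as an intervention value in subscripts, not as a measured term.

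\textbf{Step 2.} I would show that this set is ancestral, i.e.\ $An(\*T_\star)=\*T_\star$ in the sense of Def. \ref{def:ctf_ancestors}. Take $V_{\*s_V}$ and a parent $B\notin \*S_V$. By Step 1, $B$ is measured in the same regime under the subscript $\*s_V\cap An(B)_{\mathcal{G}_{\overline{\*S_V}}}$. This holds because every intervention upstream of $B$ that affects $V$ through $B$ is, by realizability, the one applied to $B$'s own inputs. It is precisely the content of Thm. \ref{thm:ancestor_check} that no variable needs two different regimes. After applying the exclusion operator (Lem. \ref{lem:exclusion}) to drop non-ancestral subscripts, each counterfactual ancestor of a term therefore already appears in $\*T_\star$ with matching subscript. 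Induction over a topological order of $\mathcal{G}$ extends this to all ancestors.

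\textbf{Step 3.} Since $\*T_\star$ is ancestral, Thm. \ref{thm:ast} rewrites $P(\*T_\star=\*t)$ as $P(\bigwedge_{W_{\*s}\in\*T_\star}W_{\*{pa}_W}=w)$. The parent values are taken from the subscripts where intervened, and otherwise from $\*t$. Each observable variable appears at most once, because the helper produced one term per variable. The result is therefore a ctf-factor, as in Lem. \ref{lem:stepiv}.

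The main obstacle is Step 2. The difficulty is making precise that the subscripts produced by the helper are consistent with Def. \ref{def:ctf_ancestors}, particularly when one variable is ctf-randomized toward several children with different values. I would handle this by reducing it to condition (ii) of Def. \ref{def:l2.5} and Thm. \ref{thm:ancestor_check}, rather than re-deriving it from the physical procedure.
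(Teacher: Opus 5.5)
Your overall plan (show the tagged regime expression is ancestral, then apply the AST) is the paper's, and Step 3 is fine. The error is in Step 1, and Step 2 inherits it. The potential response sampled for $V$ in a \textit{ctf-rand()} regime is not $V_{\mathbf{s}_V}$ with $\mathbf{s}_V$ consisting only of ctf-randomized parent values and \textit{rand}-fixed values. A counterfactual subscript is an intervention on the whole model, so the other paths out of the randomized variable must also be pinned down. In the paper's own \textsc{regime-regex} example (Fig.~\ref{fig:app_regex_example}), the helper returns $W_{x'}, Z_{w}, Y_{xtw}$, whereas your recipe gives $W_{x'}, Z, Y_x$. These differ: $Y_x=f_Y(x,T_x,Z_x)$, while the regime's $Y$ is $f_Y(x,T,Z_{x'})$, and your $Z$ is the natural one rather than $Z_{x'}$.

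Accordingly, the Step 2 identity is false as stated. For $V=Y$, $B=T$ it yields $\mathbf{s}_Y\cap An(T)_{\mathcal{G}_{\overline{X}}}=\{x\}$, i.e.\ $T_x$, but $T$ is measured naturally. In fact $An(Y_x)=\{Y_x,T_x,W_x,Z_x\}$ conflicts with the measured $T$ and $W_{x'}$, so by Thm.~\ref{thm:ancestor_check} the set you construct is not even in $\mathcal{L}_{2.5}$. Your remark about ``interventions upstream of $B$ that affect $V$ through $B$'' is the right path-specific intuition. However, Def.~\ref{def:ctf_ancestors} is not path-specific, so it cannot carry the formal claim.

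The missing idea is that no subscript bookkeeping is needed. Let $\mathbf{T}'_\star$ be any correct un-nested tag of the realizable regime; the helper's output is one, and the paper notes any equivalent tag works. The argument then runs as follows.
\begin{enumerate}
\item Each term is its own ctf-ancestor after exclusion (Lem.~\ref{lem:exclusion}), so $\mathbf{T}'_\star\subseteq An(\mathbf{T}'_\star)$.
\item By Thm.~\ref{thm:ancestor_check}, realizability means $An(\mathbf{T}'_\star)$ holds at most one potential response per variable.
\item Every variable not under \textit{rand()} is measured and so has a term in $\mathbf{T}'_\star$. A \textit{rand()}-ed variable appears only in subscripts and is never a ctf-ancestor.
\end{enumerate}
Hence every $W_{\mathbf{z}}\in An(\mathbf{T}'_\star)$ coincides with $W$'s term in $\mathbf{T}'_\star$, giving $An(\mathbf{T}'_\star)=\mathbf{T}'_\star$. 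This is the paper's proof. It replaces both your parent-by-parent claim and the topological induction, after which your Step 3 goes through unchanged.
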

\begin{proof}
    \textbf{Claim}: under full visibility, given an un-nested $P(\*T'_\star)$ corresponding to a realizable distribution $\mathcal{A} \in \mathbb{A}$, $\*T'_\star$ is ancestral. I.e., $An(\*T'_\star) = \*T'_\star$. Since $\mathcal{A}$ is a physically realizable distribution, by Thm. \ref{thm:ancestor_check}, $An(\*T'_\star)$ cannot contain two potential responses of the same observable variable. The ancestor set of each potential response $V_\*x \in \*T'_\star$ that is measured in this regime must minimally contain itself. The only opportunity for some variable to not be measured is when it is being subjected to a \textit{rand()} action (i.e. a $\doo{}$ intervention). But in this case, it won't be a ctf-ancestor to any other potential response. It follows that $An(\*T'_\star) = \*T'_\star$.

    Since any valid way of tagging a realizable input distribution satisfies this lemma, the output of \textsc{REGIME-REGEX} (Alg. \ref{alg:regime_regex}) will be some $P(\*T'_\star)$ where $\*T'_\star$ is ancestral. Applying the AST (Thm. \ref{thm:ast}) gives us a ctf-factor $P(\*T_\star)$ as needed. \textbf{Note}: \textsc{REGIME-REGEX} is merely a helper function for indexing a distribution. Any equivalent way of tagging the same counterfactual distribution works.
\end{proof}

\textbf{Lemma 3.3} (Ctf-hedge non-identifiability)\textbf{.} \textit{Let $\{\*T_\star = \*t\}$ be a ctf-hedge rooted in $\{\*C_\star = \*c\}$, with subgraph $\mathcal{G}$. $Q[\*C_\star](\*c)$ is not identifiable from $Q[\*T_\star](\*t)$ given $\mathcal{G}$.}

\begin{proof}
    We develop a bit-encoding scheme to construct a pair of SCMs $\mathcal{M}^1$ and $\mathcal{M}^2$ that witnesses the non-identifiability. I.e., $P^1(\*T_\star = \*t) = P^2(\*T_\star = \*t)$ but $P^1(\*C_\star = \*c) \neq P^2(\*C_\star = \*c)$. As a preliminary step, we remove from the subscripts in $\*T_\star$ any variables not present in subgraph $\mathcal{G}$, and we also delete any directed edges within $\*V(\*C_\star)$. We can reflect this in the SCM definition by having each variable ignore the value of the removed subscript in $\mathcal{M}^1, \mathcal{M}^2$. Next, we see that by virtue of the "value chaining" in a ctf-hedge, we can apply the consistency property as $\*{Pa}_i = \*{pa}_i \implies V_{i_{[\*{pa}_i]}} = V_i$, to get $P(\*T_\star = \*t) = P(\*T = \*t)$, the observational distribution. Thus, it suffices to construct $\mathcal{M}^1, \mathcal{M}^2$ to match in $P(\*t)$.

    Adapting the strategy in \citet[~Thm. 4]{shpitser:pea06a}, let all variables take values in $\{0,1\}$. W.l.o.g, pick an assignment for values $\*c \subset \*t$ s.t. $\sum \*c = 0 \pmod{2}$. Assign one latent confounder per bidirected edge, independently sampled $\sim Ber(0.5)$. In both $\mathcal{M}^1, \mathcal{M}^2$, set each observable variable to be the (mod 2) sum of its observable and latent parents, i.e. the bit parity of its parents. However, in $\mathcal{M}^2$, set the variables in $\*C = \*V(\*C_\star)$ to ignore values of parents in $\*T \setminus \*C$ and latents shared with $\*T \setminus \*C$. By construction, the bit parity of $\*C$ is always even in both models: in $\mathcal{M}^1$ the sum counts each latent bit twice as it gets passed down the chain, and in $\mathcal{M}^2$ the sum counts each latent bit pointing within $\*C$ twice. It can also be verified that any assignment having $\sum \*c = 0 \pmod{2}$ is equally likely in both models by virtue of the random sampling and edge count in a min. spanning tree. Thus, $P^1(\*t) = P^2(\*t)$ as needed. It is straightforward to introduce positivity by adding some noise to each variable, and we leave that a post-processing step.
    
    However, if we $\doo{\*{pa}_\*C \setminus \*c}$, this breaks the constant-0 parity in $\mathcal{M}^1$ because there is always at least one bidirected edge from $\*T \setminus \*C$ to $\*C$, which is ignored in $\mathcal{M}^2$. $P^1( 0 = \sum \*c \pmod{2} \mid \doo{\*{pa}_\*C \setminus \*c}) = 0.5 $, while $P^2(0 = \sum \*c \pmod{2} \mid \doo{\*{pa}_\*C \setminus \*c}) = 1$. Finally, note that if we set $\*{pa}_\*C \setminus \*c$ according to the subscripts in $\*C_\star$ and use the consistency property, $P(\*C \mid \doo{\*{pa}_\*C \setminus \*c}) = P(\*C_\star = \*c)$, giving us the inequality that proves non-identification.
\end{proof}

\textbf{Lemma 3.4} ($\textsc{identify}^+$ soundness and completeness)\textbf{.} \textit{Let $Q[\*T_\star](\*t)$ be a ctf-factor in which each observable variable appears at most once, and $\mathcal{G}[\*V(\*T_\star)]$ is a c-component. Let $Q[\*C_\star](\*c)$ be a ctf-factor s.t. $\*C_\star \subseteq \*T_\star, \*c \subseteq \*t$. $Q[\*C_\star](\*c)$  is identifiable from $Q[\*T_\star](\*t)$ and $\mathcal{G}$ iff $\textsc{identify}^+$ returns an expression for it.}
\begin{proof}
    We begin by noting that since each recursive call of $\textsc{identify}^+$ either reduces the size of $\*T_\star$ by at least one, or exits if $\*T_\star = \*C_\star$, or \textbf{FAILS}, the outer call of $\textsc{identify}^+$ must terminate with either an expression returned or \textbf{FAIL}. Steps 5 and 9 are licensed by probability axioms. Step 12 is proved in Thm. \ref{thm:ctf_factorization}. This establishes the soundness of any expression returned by $\textsc{identify}^+$.

    If $\textsc{identify}^+$ \textbf{FAILS}, this is precisely because it has detected a \textit{ctf-hedge} structure (Def. \ref{def:ctf_hedge}): [i] $\*T_\star$ has at most one potential response per observable variable; [ii] $\*T_\star$ corresponds to a c-component which we can convert to a bidirected minimum spanning tree by having variable functions ignore some edges; [iii] $\*H_\star$ must minimally include $\*C_\star$ and is set to the whole $\*T_\star$ only when a parent's value appears in some child's subscript in a "chained" way for the whole c-component outside $\*C_\star$; [iv] one-child policy can be enforced by ignoring extra directed edges. By Lem. \ref{lem:ctfhedge_nonid}, this scenario is only possible when $P(\*C_\star=\*c)$ is indeed non-identifiable, giving us the completeness of $\textsc{identify}^+$. 
\end{proof}

One might suspect that if identification using separate ctf-factors individually does not work, perhaps a combination of ctf-factors that contain a target ctf-factor might collectively make it identifiable. Let us define an aggregated structure which relieves this suspicion.

\vspace{-0.in}
\begin{wrapfigure}{r}{0.35\textwidth}
    \centering
    \includegraphics[width=0.35\textwidth]{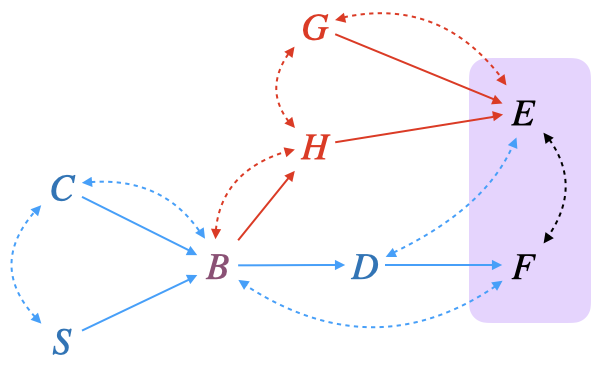}
    \caption{Subgraphs of a ctf-thicket.}
    \label{fig:app_thicket}
\end{wrapfigure}

\begin{definition}[Counterfactual (Ctf-) Thicket] \label{def:ctf_thicket}
    Let $\{\*T^1_\star = \*t^1\},...,\{\*T^a_\star = \*t^a\}$ be ctf-hedges all rooted in $\{\*C_\star = \*c\}$ (Def. \ref{def:ctf_hedge}), with subgraphs $\mathcal{G}^1,...,\mathcal{G}^a$, respectively. Then the set $\{\{\*T^i_\star = \*t^i\}\}_{i=1}^a$ forms a \textit{counterfactual, or ctf-thicket} rooted in $\{\*C_\star = \*c\}$.
\end{definition}

Consider the structure in Fig. \ref{fig:app_thicket}. $\{s, c, b_{sc}, d_{b}, f_{d}, e_{gh} \}$ (tagged in blue) and $\{b', h_{b'}, g, e_{gh}, f_d\}$ (tagged in red) are each individually ctf-hedges rooted in $\{E_{gh} =e, F_{d}=f\}$ (tagged in purple), with their separate subgraphs. $B$ belongs in both subgraphs. Taken together, they constitute a ctf-thicket rooted in $\{E_{gh} =e, F_{d}=f\}$.

\begin{lemma}[Ctf-thicket non-identifiability] \label{lem:ctfthicket_nonid}
    Let $\{\{\*T^i_\star = \*t^i\}\}_{i=1}^a$ be a ctf-thicket rooted in $\{\*C_\star = \*c\}$ (Def. \ref{def:ctf_thicket}), with subgraphs $\{\mathcal{G}^i\}$. $P(\*C_\star = \*c)$ is not identifiable from $\{P(\*T^i_\star = \*t^i)\}_{i=1}^a$ given $\bigcup_i \mathcal{G}^i$.
\end{lemma}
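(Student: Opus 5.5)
We plan to generalize the two-model parity construction used for Lem. \ref{lem:ctfhedge_nonid} (itself adapted from \citet[~Thm. 4]{shpitser:pea06a}) so that a single pair of SCMs $\mathcal{M}^1,\mathcal{M}^2$, defined over the union graph $\bigcup_i \mathcal{G}^i$, agrees simultaneously on every $P(\*T^i_\star=\*t^i)$ but disagrees on $P(\*C_\star=\*c)$. This mirrors how thickets extend hedges in \citet{lee:etal19}.

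The first step is preprocessing, as in the hedge proof. Within each $\mathcal{G}^i$ we delete directed edges inside $\*C$, drop subscript variables not in $\mathcal{G}^i$, and use the value-chaining property of each hedge together with consistency. This reduces each input $P(\*T^i_\star=\*t^i)$ to an observational-type marginal $P(\*T^i=\*t^i)$ of a single model, namely the union model restricted to $\*T^i$.

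The second step is the construction. All variables are binary, with one independent $Ber(0.5)$ latent per bidirected edge of the union of the spanning trees. In $\mathcal{M}^1$ every observed variable is the parity of all its observed parents and latents in $\bigcup_i\mathcal{G}^i$. In $\mathcal{M}^2$ the variables in $\*C$ instead ignore every parent and latent coming from $\bigcup_i(\*T^i\setminus\*C)$. The root $\*C$ is shared by all hedges, so in both models $\sum\*c\equiv 0 \pmod 2$ observationally.

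The third step verifies, for each $i$ separately, that the marginal over $\*T^i$ is uniform over parity-even assignments in both models. The spanning-tree edge count in $\mathcal{G}^i$ gives exactly enough free latent bits to make this marginal uniform, exactly as argued for a single hedge. Variables outside $\*T^i$ can be marginalized because, in a hedge, every vertex has at most one child and the latents not in $\mathcal{G}^i$ act as independent noise. For the disagreement, under $\doo{\*{pa}_\*C\setminus\*c}$ the parity of $\*C$ in $\mathcal{M}^1$ becomes a fair coin, because some bidirected edge connects $\*C$ to $\*T^1\setminus\*C$. In $\mathcal{M}^2$ the parity stays even with probability $1$. Consistency then converts this into $P^1(\*C_\star=\*c)\ne P^2(\*C_\star=\*c)$, and positivity is restored by small noise.

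The main obstacle is the third step when hedges overlap, for example when $B$ lies in two subgraphs or a vertex has children in different $\mathcal{G}^i$. A variable's function in the union model then depends on parents and latents from several hedges, which may break the per-hedge uniformity argument and the one-child property. We would first try to resolve this by letting each vertex's function use only the edges of one designated hedge, chosen by topological precedence, and ignore the rest, so that each $\*T^i$ marginal reduces to the single-hedge computation. If that fails, the fallback is to carry a separate parity bit per hedge, so that variables take values in $\{0,1\}^a$ as in the thicket construction of \citet{lee:etal19}, and then check the marginals coordinate-wise.
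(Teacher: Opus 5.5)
\paragraph{Where the proposal breaks.} The step that fails is your reduction of each input to an observational marginal of the union model, and with it the per-hedge agreement in your third step. Dropping subscript variables outside $\mathcal{G}^i$ is harmless in the single-hedge lemma, where the models only have to respect $\mathcal{G}^i$. Here, however, all hedges share the root $\{\mathbf{C}_\star=\mathbf{c}\}$. So the subscripts of $\mathbf{C}$ in $P(\mathbf{T}^i_\star=\mathbf{t}^i)$ also fix parents of $\mathbf{C}$ that belong to other hedges and are not observed in input $i$. In $\bigcup_j\mathcal{G}^j$ these are genuine inputs to $\mathbf{C}$ and cannot be ignored.

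Holding them fixed cuts hedge $j$'s chain into $\mathbf{C}$. A latent on a bidirected edge between $\mathbf{C}$ and $\mathbf{T}^j\setminus\mathbf{C}$ is then no longer cancelled, so the parity of $\mathbf{C}$ is a fair coin in $\mathcal{M}^1$ but constant in $\mathcal{M}^2$. This is exactly the mechanism you use to separate the models on the target, and it is already active under input $i$.

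Concretely, take $A\rightarrow Y\leftarrow B$ with $A\leftrightarrow Y\leftrightarrow B$, and the hedges $\{A=a,Y_{ab}=y\}$ and $\{B=b,Y_{ab}=y\}$ rooted in $\{Y_{ab}=y\}$. In your $\mathcal{M}^1$, $Y_{ab}=a\oplus b\oplus U_{AY}\oplus U_{BY}$. On $\{A=a\}=\{U_{AY}=a\}$ this equals $b\oplus U_{BY}$, a fair coin independent of $A$. In $\mathcal{M}^2$, $Y$ is constant. Hence $P^1(A=a,Y_{ab}=y)=1/4$ while $P^2(A=a,Y_{ab}=y)\in\{0,1/2\}$, so the first input already distinguishes the models.

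The thicket of Fig.~\ref{fig:app_thicket} has the same feature: its red input holds $D$ at $d$ inside $F_d$ without observing $D$, which cuts the blue chain. Your ``designated hedge'' repair does not help either. Whichever hedge $Y$ follows, the other input cuts that hedge's chain.

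\paragraph{The fallback, and what is missing.} Your fallback uses one parity bit per hedge, draws bits of variables outside $\mathcal{G}^i$ uniformly, and checks marginals coordinate-wise. This is the route the paper takes, and the check fails at the same place. In coordinate $2$ of the first input, $Y^{(2)}_{ab}=b^{(2)}\oplus U^{(2)}_{BY}$ is uniform in $\mathcal{M}^1$ and constant in $\mathcal{M}^2$.

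More generally, independent coordinates make every input distribution a product over coordinates. Agreement as distributions therefore forces agreement coordinate by coordinate. But if some other hedge meets $\mathbf{T}^j$ only in $\mathbf{C}$, as in the example, then coordinate $j$ of that input is just coordinate $j$ of the target times independent uniform bits. So any separation of the target in coordinate $j$ is also visible in that input.

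The missing idea is a pair of models that differ only on the event that every hedge's chain is cut at once. That event occurs under the target, where every parent of $\mathbf{C}$ is held at its subscript value and nothing in $\bigcup_i(\mathbf{T}^i\setminus\mathbf{C})$ is observed. It occurs under no input, since input $i$ keeps hedge $i$ at its natural values by value chaining and consistency.

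In the example, put $D_A=A\oplus U_{AY}$ and $D_B=B\oplus U_{BY}$. Let $Y=(1\oplus D_A)(1\oplus D_B)$ in $\mathcal{M}^1$ and $Y=1\oplus D_A\oplus D_B$ in $\mathcal{M}^2$. The two agree unless $D_A=D_B=1$, which never happens on the events of either input. Yet $P^1(Y_{ab}=1)=1/4\neq 1/2=P^2(Y_{ab}=1)$.

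The general case is harder: $\mathbf{C}$ may have several vertices, and each hedge's deviation is spread along its chain. Building such an ``all hedges broken'' term there is the part your proof still needs.
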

\begin{proof}
    Extend the bit-encoding scheme used in the proof of Lem. \ref{lem:ctfhedge_nonid} by having each variable and latent in the combined subgraph be an $a$-bit variable, where the $i$-th bits are used to encode the constraints for ctf-hedge $i$. If a variable does not belong to $\mathcal{G}^i$, set the $i$-th bit uniformly at random. Since each dimension operates independently, it can be verified that $P(\*t^i)$ matches $\forall i$ in $\mathcal{M}^1$ and $\mathcal{M}^2$, but the do-distribution $P(\*C \mid \doo{\*{pa}_\*C \setminus \*c})$ differs. Setting $\*{pa}_\*C \setminus \*c$ as per the subscripts in $\*C_\star$, we see that $P(\*C_\star = \*c)$ differs in $\mathcal{M}^1$ and $\mathcal{M}^2$, completing the proof.
\end{proof}

Finally, we show why it is sufficient to deal with c-components and not the whole distribution.

\begin{lemma}[Step vii] \label{lem:stepvii}
    Given a set of realizable input distributions $\{\mathcal{A} \in \mathbb{A}\}$, let $\mathcal{G}^\mathcal{A}$ be the graph corresponding to input distribution $\mathcal{A}$. Let ctf-factor $P^{\mathcal{A}}(\*T_\star = \*t)$ be the result of performing the AST transformation (Thm. \ref{thm:ast}) on the input expression corresponding to $\mathcal{A}$. Let $\*T^1_\star,...,\*T^m_\star$ be a partition of $\*T_\star$ s.t. each $\*V(\*T^i_\star)$ is a c-component in $\mathcal{G}^\mathcal{A}$, and $P^{\mathcal{A}}(\*T^1_\star = \*t^1),...,P^{\mathcal{A}}(\*T^m_\star = \*t^m)$ be their corresponding ctf-factors. Let $P(\*C_\star = \*c)$ be a target ctf-factor s.t. $\*V(\*C_\star)$ is a c-component in $\mathcal{G}[\*V(\*C_\star)]$. The target $P(\*C_\star = \*c)$ is not identifiable from the overall set of input distributions $\mathbb{A}$ if it is not identifiable from some ctf-factor $P^\mathcal{A}(\*T^i_\star = \*t^i)$ where $\*C_\star \subseteq \*T^i_\star$ and $\mathcal{A \in \mathbb{A}}$. In other words, if $P(\*C_\star = \*c)$ fails on identification from every single $P^\mathcal{A}(\*T^i_\star = \*t^i)$ where $\*T_\star^i$ contains $\*C_\star$, then $P(\*C_\star = \*c)$ is non-identifiable from the data.
\end{lemma}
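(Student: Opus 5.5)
The plan is to argue by constructing a witness pair of SCMs that agree on \emph{every} input distribution in $\mathbb{A}$ but disagree on $P(\*C_\star = \*c)$. The construction has four steps.

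\textbf{Step 1: reduce each input regime to its relevant c-components.} Fix an input regime $\mathcal{A} \in \mathbb{A}$. By Lem. \ref{lem:step_vi} together with Thm. \ref{thm:ctf_factorization}, $P^{\mathcal{A}}(\*T_\star = \*t)$ factorizes as $\prod_i P^{\mathcal{A}}(\*T^i_\star = \*t^i)$ over the c-components of $\mathcal{G}^{\mathcal{A}}$. Hence two SCMs agree on the input from $\mathcal{A}$ as soon as they agree on every factor $P^{\mathcal{A}}(\*T^i_\star = \*t^i)$.

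\textbf{Step 2: discard factors that cannot contain $\*C_\star$.} Since $\*V(\*C_\star)$ is a single c-component, it lies inside a unique $\*V(\*T^i_\star)$ for each $\mathcal{A}$, whenever it is contained in one at all. Suppose $\*C_\star \not\subseteq \*T^i_\star$ for every $i$ in some regime $\mathcal{A}$. Then the terms of $\*C_\star$ must be missing from, or appear with different subscripts in, that regime's expression. I will build the witness models so that the mechanisms of $\*V(\*C_\star)$ are altered only on the exogenous ``bits'' used by the hedges. Any factor whose c-component does not contain $\*C_\star$ will either involve only variables outside the hedge subgraphs, whose bits are uniform and identical in both models, or be handled as described below.

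\textbf{Step 3: collect the hedges.} For each regime $\mathcal{A}$ whose component $\*T^{i_\mathcal{A}}_\star$ contains $\*C_\star$, Lem. \ref{lem:stepviii} says that $\textsc{identify}^+$ fails only upon detecting a ctf-hedge $\{\*T'^{\mathcal{A}}_\star = \*t'^{\mathcal{A}}\} \subseteq \*T^{i_\mathcal{A}}_\star$ rooted in $\{\*C_\star = \*c\}$. Collecting these hedges over all such $\mathcal{A}$ gives a ctf-thicket in the sense of Def. \ref{def:ctf_thicket}.

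\textbf{Step 4: build the witness and verify agreement.} Apply the bit-encoding construction of Lem. \ref{lem:ctfthicket_nonid}, using one bit-coordinate per hedge. Variables outside a given hedge's subgraph draw that coordinate uniformly at random, and mechanisms outside the union of subgraphs are identical across $\mathcal{M}^1, \mathcal{M}^2$. By construction the two models differ on $P(\*C_\star = \*c)$. It remains to show they agree on each full factor $P^{\mathcal{A}}(\*T^{i_\mathcal{A}}_\star = \*t^{i_\mathcal{A}})$, and not merely on the hedge marginal. I will obtain this by extending the hedge equality to the full component: the terms of $\*T^{i_\mathcal{A}}_\star \setminus \*T'^{\mathcal{A}}_\star$ are exactly those $\textsc{identify}^+$ could marginalize out or factor away. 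These are non-ancestral terms removed at Line 3, whose mechanisms are shared across models, and other c-components split at Line 10, whose factors are unaffected by the hedge bits. Each of these operations preserves agreement.

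\textbf{Main obstacle.} The hard part is Step 4's verification that a single pair of models matches \emph{all} regimes simultaneously. The danger is that a hedge for regime $\mathcal{A}$ puts its $\mathcal{A}$-specific bit on a variable that appears in regime $\mathcal{B}$ under different subscripts, which would break equality in $\mathcal{B}$. I would try first to exploit the coordinate independence: in regime $\mathcal{B}$, coordinate $\mathcal{A}$ is observed only through variables whose value chaining differs from that in $\mathcal{A}$. For those variables I would show the parity constraint is invisible, meaning the coordinate is uniform in both models, by an edge-count argument on the spanning tree as in Lem. \ref{lem:ctfhedge_nonid}.
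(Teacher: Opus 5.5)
Your proposal has a genuine gap, in two places.

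First, every discrepancy between your $\mathcal{M}^1$ and $\mathcal{M}^2$ comes from hedge coordinates, and a hedge exists only for a regime whose partition has a block $\mathbf{T}^i_\star\supseteq\mathbf{C}_\star$. The lemma also asserts non-identifiability when no regime in $\mathbb{A}$ has such a block, in which case its hypothesis is vacuous. This is case (a) in the proof of Thm.~\ref{thm:ctfidu_completeness}, and it is the case Thm.~\ref{thm:id_limits} relies on: there $\mathbf{C}^j_\star$ contains $W_{\mathbf{t}},W_{\mathbf{s}}$, and Lem.~\ref{lem:no_input_with_repeats} excludes every containing block. In that case your thicket is empty, so $\mathcal{M}^1=\mathcal{M}^2$ and nothing is witnessed.

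The idea you are missing is the paper's extra bit coordinate, defined on potential responses rather than on variables. The construction is:
\begin{itemize}
\item fix a bidirected spanning tree on the terms of $\mathbf{C}_\star$, with a fresh $\mathrm{Ber}(1/2)$ latent on each tree edge;
\item give each $V_{\mathbf{pa}_V}\in\mathbf{C}_\star$ the XOR of its incident tree latents, i.e.\ $V$'s bit is that XOR when its parents take exactly those subscript values, and a private uniform bit otherwise;
\item give every other potential response a uniform bit, and flip one term in $\mathcal{M}^2$.
\end{itemize}
The parity of $\mathbf{c}$ is then $0$ in $\mathcal{M}^1$ and $1$ in $\mathcal{M}^2$. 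A regime that never observes all these terms jointly sees i.i.d.\ uniform bits in both models, because the mod-$2$ incidence rows of any proper vertex subset of a tree are linearly independent.

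Second, your Step 2 claim that a factor not containing $\mathbf{C}_\star$ involves only variables outside the hedge subgraphs is false. A realizable regime measures every variable it does not \textit{rand()}, so it typically observes $\mathbf{V}(\mathbf{C}_\star)$ under other subscripts, and the $\mathcal{M}^2$ rewiring of $\mathbf{C}$ can show up there. Take $X\to Y$, $X\to Z$, $X\leftrightarrow Y$, $Y\leftrightarrow Z$, target $\mathbf{C}_\star=\{Y_x=y,Z_{x'}=z\}$ with $x\neq x'$, and let $\mathbb{A}$ consist of \textit{ctf-rand}$(X\to Y)$ and \textit{rand}$(X)$.
\begin{itemize}
\item The first regime yields the hedge $\{X=x',Y_x=y,Z_{x'}=z\}$. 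Its bit models are $X=U_1$, $Y=U_1\oplus U_2$, $Z=X\oplus U_2$ in $\mathcal{M}^1$, and $X=U_1$, $Y=Z=U_2$ in $\mathcal{M}^2$. They agree on that regime and differ on the target.
\item The \textit{rand}$(X)$ regime does not contain $\mathbf{C}_\star$. There $\mathcal{M}^2$ forces $Y_{x''}=Z_{x''}$, while in $\mathcal{M}^1$ the bit $Y_{x''}\oplus Z_{x''}=U_1\oplus x''$ is uniform.
\end{itemize}
So the edge-count invisibility you hope for fails. The target is still non-identifiable in this example, but a valid witness must perturb something else, e.g.\ the coupling between $Z_x$ and $Z_{x'}$.

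The paper does not close this point either. It checks the hedge coordinates only on containing regimes and asserts agreement elsewhere by ``independence of dimensions.'' Its extra coordinate, as written, would also be visible in the containing regimes, since those observe all of $\mathbf{C}_\star$ jointly. The extra coordinate therefore settles the case with no containing regime, but the mixed case needs a real cross-regime argument that neither your proposal nor the paper supplies.
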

\begin{proof}
    Recall from Thm. \ref{thm:ancestor_check} that each variable in a realizable input distribution is measured at most once. Thus, for each input regime $\mathcal{A}$, the partition of $\*T_\star$ by c-components is s.t. there is at most one $\*T^i_\star \supseteq \*C_\star$. Assume the target ctf-factor $P(\*c_\star)$ fails on identification with every input ctf-factor $P^\mathcal{A}(\*t^i_\star)$ where the partition subset $\*T^i_\star \supseteq \*C_\star$ in regime $\mathcal{A}$.

    W.l.o.g we index the first $a=1,2...,a'$ regimes to be ones where the partition per regime $a$ contains exactly one subset $\*T^i_\star \supseteq \*C_\star$. Define two SCMs $\mathcal{M}^1, \mathcal{M}^2$ as follows. Let each observable variable be an $(a'+1)$-bit encoding, where each bit $a=1,...,a'+1$ represents some input data regime constraint. 

    \underline{Encoding for first $a'$ bits}

    Each bit $a \in [a']$ encodes constraints for the first $a'$ regimes. By the proof steps of Lem. \ref{lem:stepviii}, if $P(\*c_\star)$ cannot be individually identified from each $P^a(\*t^i_\star)$, then it has detected a \textit{ctf-hedge} $\{\*t^i_\star\}$ rooted in $\{\*c_\star\}$, for each $a \in [a']$. Collectively, these satisfy the definition of a \textit{ctf-thicket} (Def. \ref{def:ctf_thicket}). Define the SCMs following the bit-encoding scheme used in the proof of Lems. \ref{lem:ctfthicket_nonid}, \ref{lem:ctfhedge_nonid}: independently draw each latent confounder in the ctf-thicket $\sim Ber(0.5)$. For $a \in [a']$, let the $a$-th bit of each observable variable encodes the hedge constraints for input distribution $a$ (refer to proof of Lem. \ref{lem:ctfhedge_nonid}). If a variable does not belong to the ctf-hedge for input distribution $a$, set the $a$-th bit uniformly at random.

    By construction, the SCMs $\mathcal{M}^1, \mathcal{M}^2$ are s.t. $P^{1,a}(\*t_\star) = P^{2,a}(\*t_\star)$ for every input distribution $a \in [a']$ but $P^1(\*c_\star) \neq P^2(\*c_\star)$, when we only consider the first $a'$ bits of each variable.

    \underline{Encoding for last bit}

    For the last bit, we define the encoding at the level of potential responses. Consider the ancestral multi-world network, or AMWN, comprising of the counterfactual ancestors (Def. \ref{def:ctf_ancestors}) of the set $\*C_\star \cup \bigcup \{\*T_\star\}_{a = a'+1}^{|\mathbb{A}|}$. The nodes of this graph are the potential responses in these sets, plus bidirected edges shared between the potential responses (representing latent confounders), and directed edges for any ancestral relationships between them. Since each potential response is of the form $V_{\*{pa}_V}$, there is no parent-child relationship between these potential responses.

    Consider the set $\*C_\star$ in the AMWN. There is a path between any two nodes in this set since $\*V(
    \*C_\star)$ is a c-component. Define a minimum spanning tree over these nodes by ignoring extra bidirected edges. As mentioned earlier, we independently draw each of these latent confounders $\sim Ber(0.5)$. In SCM $\mathcal{M}^1$ and $\mathcal{M}^2$, set the last bit of each $V_{\*{pa}_V} \in \*C_\star$ to be the $\pmod{2}$ sum of its two latent parents, i.e.,  $V_{\*{pa}_V} = U_1 \oplus U_2$ (since this is a min. spanning tree, there will be exactly two latent parents per potential response). However in $\mathcal{M}^2$ choose an arbitrary $Y_{\*{pa}_Y} \in \*C_\star$ and flip its last bit as $Y_{\*{pa}_Y} = U_1 \oplus U_2 \oplus 1$. For every other potential response, set its last bit uniformly at random.

    Due to the edge count in a min. spanning tree, each latent variable figures contributes exactly twice to the bit parity of $\*C_\star$, so we have $P^1(0=\sum \*c_\star) = 1$ and $P^2(0=\sum \*c_\star) = 0$, considering only the last bit. Now consider each input distribution $\mathcal{A}$ where each partition subset $\*T^i_\star \not \supseteq \*C_\star$. Since latent terms don't neatly cancel out, it can be verified that $P^\mathcal{A}(0 = \sum \*t^i_\star) = 0.5$, when considering only the last bit, under both $\mathcal{M}^1, \mathcal{M}^2$. By symmetry, $P^\mathcal{A}(\*t^i_\star)$ is a uniform distribution for the last bit. I.e., $P^\mathcal{A}(\*t_\star) = \prod_i P^\mathcal{A}(\*t^i_\star)$ is a uniform distribution for the last bit.

    \underline{Overall}

    Since the dimensions operate independently under this scheme, it can be verified that input distributions match across both SCMs: $P^{\mathcal{A},1}(\*t_\star) = P^{\mathcal{A},2}(\*t_\star), \forall \mathcal{A} \in \mathbb{A}$, but $P^1(\*c_\star) \neq P^2(\*c_\star)$. Thus, $P(\*c_\star)$ is non-identifiable from the data.
\end{proof}

We now have the ingredients for our overall result.

\textbf{Theorem 2.1}($\textsc{ctfIDu}^+$ soundness and completeness)\textbf{.} \textit{Given an un-nested counterfactual expression $\*Y_\star$, $P(\*Y_\star = \*y)$ is identifiable from a causal diagram $\mathcal{G}$ and a set of input distributions  $\mathbb{A}$, iff $\textsc{ctfIDu}^+$ returns an expression for it.}
\begin{proof}
    Given query $P(\*Y_\star = \*y)$ and $\mathcal{G}$, Lemmas \ref{lem:stepii}, \ref{lem:stepiii}, \ref{lem:stepiv}, \ref{lem:stepv} show that it is necessary and sufficient to identify each of the ctf-factors $\{P(\*C^j_\star = \*c^j)\}$ derived from Lines 3-9, in order to identify the query.

    If all $P(\*C^j_\star = \*c^j)$ have been identified by $\textsc{identify}^+$ using some input computable from the available data, Lem. \ref{lem:stepviii} shows the returned expressions are correct, and can be composed in Line 22 to identify the original input query, by Thm. \ref{thm:cfactor_decomposition}. This proves the \textbf{soundness} of $\textsc{ctfIDu}^+$.

    If any $P(\*C^j_\star = \*c^j)$ has not been identified, it is either (a) because there was no ctf-factor $P(\*T^i_\star = \*t^i)$ computable from input data s.t. $\*V(\*T^i_\star)$ is a c-component and $\*C^j_\star \subseteq \*T^i_\star$; or (b) $\textsc{identify}^+$ returned FAIL on all attempts to identify $P(\*C^j_\star = \*c^j)$ from qualifying input ctf-factors. In either case, by Lem. \ref{lem:stepvii}, this means $P(\*C^j_\star = \*c^j)$ is not identifiable from the \textit{cross-regime} collection of all the input data distributions. 

    Since identifying $P(\*C^j_\star = \*c^j)$ is strictly necessary, this means $\textsc{ctfIDu}^+$ returns FAIL in Line 20 only when the original query is indeed non-identifiable, proving the \textbf{completeness} of $\textsc{ctfIDu}^+$.
\end{proof}

\subsection{Proofs for Sec. \ref{sec:id_limits}}

\begin{lemma} \label{lem:no_input_with_repeats}
    Given a set $\mathbb{A}$ of input data distributions, where each $\mathcal{A} \in \mathbb{A}$ belongs to $\mathcal{L}_{2.5}$, line 13-14 of \textsc{ctfIDu+} (Alg. \ref{alg:ctfidu_plus}) will never produce a ctf-factor $Q[\*T^i_\star](\*t^i)$ s.t. the counterfactual set $\*T^i_\star$ contains potential responses $W_\*t, W_\*s$ of the same variable $W$ under conflicting regimes $\*t \neq \*s$.
\end{lemma}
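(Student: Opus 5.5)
The plan is to follow a single realizable input regime $\mathcal{A}\in\mathbb{A}$ through Lines 11--14 of \textsc{ctfIDu+} (Alg. \ref{alg:ctfidu_plus}). We show that every transformation applied there maps a set of potential responses with no repeated variable to another set with no repeated variable. Since $Q[\*T^i_\star](\*t^i)$ is the end product of this pipeline, it then cannot contain $W_\*t, W_\*s$ with $\*t\neq\*s$.

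\emph{Step 1 (input).} Let $P(\*T'_\star)$ be the un-nested expression returned by $\textsc{regime-regex}(\mathcal{G},\mathcal{A})$ in Line 11. Because $\mathcal{A}$ lies in $\mathcal{L}_{2.5}$, Thm. \ref{thm:ancestor_check} applies: $An(\*T'_\star)$ contains no pair of potential responses of the same variable under different regimes. Since $\*T'_\star\subseteq An(\*T'_\star)$, the same holds for $\*T'_\star$. The claim in the proof of Lem. \ref{lem:step_vi} also gives $An(\*T'_\star)=\*T'_\star$, so $\*T'_\star$ is ancestral and Thm. \ref{thm:ast} can be applied.

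\emph{Step 2 (AST, Line 12).} Thm. \ref{thm:ast} replaces each $W_{\*t}\in\*T'_\star$ by exactly one term $W_{\*{pa}_W}$. It neither adds nor duplicates variables, so the map $W_\*t\mapsto W_{\*{pa}_W}$ is a bijection that preserves $\*V(\cdot)$. Hence every variable $W$ appears at most once in the resulting $\*T_\star$.

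\emph{Step 3 (factorization, Lines 13--14).} The sets $\*T^1_\star,\dots,\*T^m_\star$ partition $\*T_\star$ according to the c-components of $\*V(\*T_\star)$. By Thm. \ref{thm:ctf_factorization}, each $\*T^i_\star=\{H_{\*{pa}_h}\in\*T_\star\mid H\in\*T^i\}$ is a subset of $\*T_\star$. A subset of a set with no repeated variable has no repeated variable, which gives the lemma.

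\emph{Main obstacle.} The only delicate point is Step 2. One must confirm that the AST, by filling subscripts with parent values taken from $\*w$, never introduces a second copy of any variable. This is clear because the AST only rewrites subscripts and keeps a one-to-one correspondence with the terms of $\*T'_\star$. A second check is that variables fixed by a \emph{rand()} action do not appear as measured responses; this is handled by the argument already given in Lem. \ref{lem:step_vi}.
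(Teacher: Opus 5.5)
Your proposal is correct and follows essentially the same route as the paper. Both use the realizability of each $\mathcal{A}\in\mathbb{A}$ together with Thm.~\ref{thm:ancestor_check} (and the ancestrality from Lem.~\ref{lem:step_vi}) to rule out two potential responses of the same variable in the input set, and then note that the c-component partition in Line 13 only takes subsets. Your explicit check that the AST preserves one term per variable is a step the paper leaves implicit, since it uses the same symbol for the input set before and after the transformation.
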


\begin{proof}
    Each input distribution $P(\*T_\star = \*t)$, corresponding to some $\mathcal{A} \in \mathbb{A}$, belongs to $\mathcal{L}_{2.5}$. We know from the proof of Lem. \ref{lem:step_vi} that $\*T_\star$ is ancestral, i.e. $An(\*T_\star) = \*T_\star$.
    
    By Thm. \ref{thm:ancestor_check}, this means $\*T_\star$ cannot contain potential responses $W_\*t, W_\*s$ of the same variable $W$ under conflicting regimes $\*t \neq \*s$. Thus, when partitioning the set in line 13 of Alg. \ref{alg:ctfidu_plus}, there will be no subset $\*T^i_\star$ containing any such $W_\*t, W_\*s$.
\end{proof}

\textbf{Theorem 3.1} (Limit of identification)\textbf{.} \textit{Given a query $Q$ belonging to $\mathcal{L}_i$ of the PCH and no lower layer, for every $j<i$ there exists a graph $\mathcal{G}$ s.t. $Q$ is identifiable from $\mathcal{G}$ and input data from $\mathcal{L}_j$, except for $i=3$.}

\begin{proof}
    Thm. \ref{thm:ancestor_check} shows that a distribution $P(\*Y_\star)$ is physically realizable (i.e. we can physically draw iid samples from it) in principle using \textit{ctf-rand()} or some other actions, iff the set of counterfactual ancestors $An(\*Y_\star)$ does not contain some pair of potential outcomes $W_\*t, W_\*s$ of the same variable $W$ under different regimes $\*t \neq \*s$. For instance, in Fig. \ref{fig:app_realizability_example}b, $An(Y_x, Z_{x'})$ is the set $\{Y_x, Z_{x'}, A_x, A_{x'}\}$ which contains both $A_x, A_{x'}$ thus rendering $P(Y_x, Z_{x'})$ not realizable per this graph.

    Since \citet{yang2025hierarchy} define $\mathcal{L}_{2.5}$ to be precisely those distributions which can be realized via \textit{ctf-rand()} or other actions, this means a distribution falls within $\mathcal{L}_{2.5}$ iff it passes this counterfactual ancestor check without conflict.

    \underline{For $P(\*Y_\star = \*y)$ belonging to $\mathcal{L}_3 \setminus \mathcal{L}_{2.5}$}: 

    From Thm. \ref{thm:ctfidu_completeness}, $P(\*Y_\star = \*y)$ is identifiable from $\mathcal{L}_{2.5}$ data and graph $\mathcal{G}$ iff \textsc{ctfIDu+} does not FAIL on these inputs. Line 8 of Alg. \ref{alg:ctfidu_plus} gathers the counterfactual ancestor set $\*W_\star = An(\*Y_\star)$ which, by Thm. \ref{thm:ancestor_check} must contain some $W_\*t, W_\*s, \*t \neq \*s$. Line 9 partitions $\*W_\star$ (after subscript re-mapping) into clusters $\{\*C^j_\star\}$ s.t. potential responses belong to the same cluster if their observable variables belong to the same c-component in $\mathcal{G}$. $W_\*t, W_\*s$ will always be clustered in the same $\*C^j_\star$ since they are both of the same variable $W$.

    The ctf-factor for this cluster $Q[\*C^j_\star](\*c^j)$ will then be passed through the \textsc{identify+} subroutine in hopes of identifying it from some input ctf-factor $Q[\*T^i_\star](\*t^i)$ s.t. $\*C^j_\star \subseteq \*T^i_\star$. By Lem. \ref{lem:no_input_with_repeats} no input distribution from $\mathcal{L}_{2.5}$ can produce such a $\*T^i_\star$ containing $W_\*t, W_\*s$. Thus, $Q[\*C^j_\star](\*c^j)$ is never passed through \textsc{identify+}, and remains unindentified. \textsc{ctfIDu+} fails on $P(\*Y_\star = \*y)$ regardless of the graph $\mathcal{G}$.

    \underline{For $P(\*Y_\star = \*y)$ belonging to $\mathcal{L}_{2.5}, \mathcal{L}_{2.25}$ or $\mathcal{L}_{2}$}:

    We assume that the query $P(\*Y_\star=\*y)$ satisfies the membership definition for $\mathcal{L}_{i}$. E.g., if we are told it belongs to $\mathcal{L}_{2}$, we assume all the subscripts in $\*Y_\star$ are the same $\*x$ etc. The layer definitions are gives in Secs. \ref{sec:preliminaries}, \ref{app:layer_2_5}, \ref{app:layer_2_25}.

    Define the input distribution to be the observational $P(\*V)$ - if a query is identifiable from $\mathcal{L}_1$ data, it is automatically identifiable from higher layers because $\mathcal{L}_1 \subseteq \mathcal{L}_{>1}$. Define the input causal graph $\mathcal{G}'$ as follows, 
    \vspace{-0.1in}
    \begin{itemize}
        \item For an $\mathcal{L}_{2.5}$ or $\mathcal{L}_{2.25}$ query
        \begin{itemize}
            \item [-] $P(\*Y_\star=\*y)$ must be paired alongside a graph $\mathcal{G}$ to begin with. As clarified in earlier sections, membership in  $\mathcal{L}_{2.5}$ depends on the graph and not on the form of the expression alone (e.g., see Fig. \ref{fig:app_realizability_example})
            \item Construct a new graph $\mathcal{G}'$ from $\mathcal{G}$ by removing any bidirected edges from it. $P(\*Y_\star=\*y)$ remains an $\mathcal{L}_{2.5}$ or $\mathcal{L}_{2.25}$ query according to $\mathcal{G}'$, since the layer definition is agnostic to bidirected edges.
        \end{itemize}
        \item For an $\mathcal{L}_{2}$ query
        \begin{itemize}
            \item [-] Start with an empty $\mathcal{G}'$. Add a vertex for every variable appearing in $\*Y_\star$ including subscripts. For every $Y_\*x \in \*Y_\star$, add a directed edge from each $X \in \*X$ to $Y$.
        \end{itemize}
    \end{itemize}
    \vspace{-0.1in}
     
    Line 8 of Alg. \ref{alg:ctfidu_plus} gathers the counterfactual ancestor set $\*W_\star = An(\*Y_\star)$ which, by Thm. \ref{thm:ancestor_check} cannot contain a pair $W_\*t, W_\*s, \*t \neq \*s$. Thus, when line 9 partitions $\*W_\star$, each cluster $\*C^j_\star$ contains at most one potential response for each SCM variable $V \in \*V$. Since there are no bidirected edges between any variable in $\mathcal{G}$, each cluster $\*C^j_\star$ contains exactly one potential response $\{V_{\*{pa}_v}\}$, and we effectively need to just identify a set of ctf-factors of the form $Q[\*C^j_\star](\*c^j) = P(V_{\*{pa}_v} = v) = P(v ; \doo{\*{pa}_v})$. Applying Rule 2 of do-calculus, $P(v ; \doo{\*{Pa}_v = \*{pa}_v}) = P(v \mid \*{Pa}_v = \*{pa}_v)$, since there are no unobserved confounders. By line 22 of the \textsc{ctfIDu+} Alg. \ref{alg:ctfidu_plus}, $P(\*Y_\star)$ is identified from $\mathcal{L}_1$ data and graph $\mathcal{G}'$.
    

    \underline{\textbf{Note}}: the input graph does not always need to be free of bidirected edges for identification to succeed. \textbf{What kinds of confounding still permit identification are determined by what ctf-factors can be separated and recombined from input data - an intuition we try to convey using a \textit{causal lattice} framework in Sec. \ref{app:causal_lattice}}.
\end{proof}

\hypertarget{cor32}{\textbf{Corollary 3.2}} (Id - realizability duality (formal))\textbf{.} \textit{Consider a causal diagram $\mathcal{G}$ and a query $Q = P(\*Y_\star = \*y)$ belonging to $\mathcal{L}_i$. The following implication holds for identifiability $\forall i$: }
\textit{
\begin{align}
    \text{$Q$ is ID from $\mathcal{L}_j$ data, $j<i$} &\implies \text{$Q$ belongs to $\mathcal{L}_{2.5}$} \label{eq:implication1}\\
    \text{$Q$ is ID from $\mathcal{L}_j$ data, $j<i$} &\not \implies \text{$Q$ belongs to $\mathcal{L}_{j}$} \label{eq:implication2}
\end{align}
Furthermore, the following implication holds for realizability $\forall i$:
\begin{align}
    \text{$Q$ is realizable} &\implies \text{$Q$ is ID from the available data} \label{eq:implication3}\\
    \text{$Q$ is realizable} &\not \implies \text{$Q$ is ID from $\mathcal{L}_j$ data, $j < i$} \label{eq:implication4}
\end{align}}
\vspace{-0.8cm}
\begin{proof}

    Recall that the PCH is a containment hierarchy. Higher layers automatically contain lower ones.
    
    \underline{Eq. \ref{eq:implication1}}: This follows from Theorem \ref{thm:id_limits}. If a query is identifiable, it cannot belong to $\mathcal{L}_3 \setminus \mathcal{L}_{2.5}$. By definition, this means $Q$ can be physically realized, in principle, were all \textit{ctf-rand()} actions to be permitted in the system (as we stress, \textit{ctf-rand()} may not always be feasible or desirable in a given situation).

    \underline{Eq. \ref{eq:implication2}}: Importantly, identifiability says nothing about what physical actions are minimally necessary to realize the distribution through sampling. E.g., for the graph in Fig. \ref{fig:app_l2.5_2.25}, the query $P(y_x, z_{x'})$ is identifiable from $\mathcal{L}_2$ data as
    \begin{align}
        P(y_x, z_{x'}) = P(y; \doo{x}). P(z; \doo{x'})
    \end{align}
    However, it is not possible to physically sample from $P(y_x, z_{x'})$ using just the standard $\mathcal{L}_2$ action of \textit{rand($X$)}. This query belongs to $\mathcal{L}_{2.5}$, requiring the joint actions $\{\textit{ctf-rand}(X \rightarrow Y),\textit{ctf-rand}(X \rightarrow Z)\}$ in order to directly sample from it. Similar counter-examples can be constructed for other layers in a straightforward way.

    \underline{Eq. \ref{eq:implication3}}: This is a trivial implication. If $Q$ can be realized by physical actions, this distribution already belongs to the available data. Feeding this input distribution into the \textsc{ctfIDu+} algorithm trivially returns an ID expression.

    \underline{Eq. \ref{eq:implication4}}: Importantly, realizability says nothing about the ability to reduce the query to lower layer data. E.g., given the causal graph in Fig. \ref{fig:app_proof_example}(a), we can directly sample from the distribution $P(y_x, z_{x'})$ using the physical actions $\{\textit{ctf-rand}(X \rightarrow Y), \textit{ctf-rand}(X \rightarrow Z)\}$. However, due to the confounding between $Y$ and $Z$, $P(y_x, z_{x'})$ is non-identifiable from $\mathcal{L}_2$, or even $\mathcal{L}_{2.25}$ data. It is straightforward to construct similar counter-examples for other layers, too.
\end{proof}

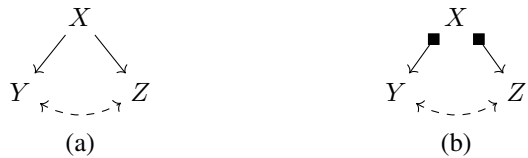
\begin{figure}[h]
        \centering
        \begin{tikzpicture}

        \node (t) at (0-5,-1.7) {(a)};
        \node (X) at (0-5,0) {$X$};
        \node (Y) at (-0.8-5,-1) {$Y$};
        \node (Z) at (0.8-5,-1) {$Z$};
        \path [->] (X) edge (Y);
        \path [->] (X) edge (Z);
        \path [<->] (Y) edge[bend right=30, dashed] (Z);

        \node (t) at (0.,-1.7) {(b)};
        \node (X) at (0,0) {$X$};
        \node (Y) at (-0.8,-1) {$Y$};
        \node (Z) at (0.8,-1) {$Z$};
        \node[fill=black,draw,inner sep=0.2em, minimum width=0.2em] (intervention) at (-0.3,-0.3) {\ };
        \node[fill=black,draw,inner sep=0.2em, minimum width=0.2em] (intervention2) at (0.3,-0.3) {\ };
        \path [->] (intervention) edge (Y);
        \path [->] (intervention2) edge (Z);
        \path [<->] (Y) edge[bend right=30, dashed] (Z);

        
    \end{tikzpicture}
    \caption{(a) Causal diagram; (b) $P(y_x, z_{x'})$ is realizable by joint \textit{ctf-rand()} actions, but is non-ID from $\mathcal{L}_{2.25}$ or $\mathcal{L}_{2}$ data.}
    \label{fig:app_proof_example}
\end{figure}

\subsection{Proofs for Sec. \ref{sec:partial_id}}

\textbf{Proposition 4.1.} \textit{Given causal diagram $\mathcal{G}$ and query $Q=P(\*y_\star)$, let $[l,r]^{\mathbb{A}} \subseteq [0,1]$ be the tight partial identification bounds for $Q$ given input data regimes $\mathbb{A}$. Then, for any $\mathbb{A}' \supset \mathbb{A}$, the bounds $[l,r]^{\mathbb{A}'} \subseteq [l,r]^{\mathbb{A}}$.}

\begin{proof}
    Given a causal diagram $\mathcal{G}$, we can parametrize the space of SCMs compatible with $\mathcal{G}$ using the "canonical model" framework. A canonical representation casts each (unknown) exogenous variable as an indicator for mapping functions for each variable from their parent variables.

    Following \citep{balke:pea94b, zhang22ab}, the input distributions impose constraints which can be written as a linear/polynomial program in terms of these "canonical" mapping parameters. Each SCM fully determines the value of the query $Q$. Let $\Omega$ be the polytope of SCMs that satisfies the constraints imposed by $\mathbb{A}$. For any $\mathbb{A}' \supset \mathbb{A}$, the feasible set must be a subset of $\Omega$, so the range of possible $Q$ values can't be larger.
\end{proof}

\textbf{Lemma 4.2} (NTE - $\mathcal{L}_1$ bounds)\textbf{.} \textit{Given a bow graph causal structure (Fig. \ref{fig:nte_bounds}.a) and {observational data $P(X,Y)$}, the identification query $P(y_x \mid x',y'), x \neq x'$ is tightly bounded in the range $[0,1]$.}

\begin{proof}
    Let $X, Y$ take values in sets $\mathcal{X}, \mathcal{Y}$ respectively. Consider the joint probability table with columns for all potential responses in this model: $(X, \{Y_{x''}\}_{x'' \in \mathcal{X}} )$. Fix some values $x' \in \mathcal{X}, y' \in \mathcal{Y}$. The probability mass for all rows having $X=x'$ in the table is fixed by the input observational data $\sum_{y''} P(x', y'')$.

    Conditional on $(X=x')$, the re-normalized mass assigned to each row having $(Y_{x'} = y' \mid X=x') $ is constrained by the observational input data as $P(x', y') / \sum_{y''} P(x', y'')$. This follows because $X = x' \implies Y = Y_{x''}$ by consistency.

    Fix some values $x \in \mathcal{X}, y \in \mathcal{Y}, x \neq x'$. Conditional on $(X=x', Y_{x'}=y')$, the re-normalized mass assigned to each row having $(Y_{x} = y \mid X=x', Y_{x'}=y')$ is unconstrained. We can define an assignment where all the re-normalized mass is allocated to the rows having $(Y_{x} = y \mid X=x', Y=y')$, and another assignment where all the re-normalized mass is allocated to rows having $Y_{x} = y^*, y^* \neq y$.

    So the tight bounds for $P(y_x \mid x',y'), x \neq x'$ are [0,1] given $P(X,Y)$. If we assume positivity for all distributions, this becomes the open interval $(0,1)$.
\end{proof}

\textbf{Lemma 4.3} (NTE - {$\mathcal{L}_2$} bounds)\textbf{.} \textit{Given a bow graph causal structure (Fig. \ref{fig:nte_bounds}.a), {observational data $P(X,Y)$}, and {interventional data $P(Y_x)$}, $\forall x$, the query $P(y_x \mid x',y'), x \neq x'$, is tightly bounded in the range $[l, r]$ defined as 
    \begin{align}
        l &= \max\bigg \{ 0, \frac{\alpha_{\min}-(1 -{P(y' \mid x'))} }{{P(y' \mid x')}} \bigg\} &
        r &= \min\bigg \{ 1, \frac{\alpha_{\max} }{{P(y' \mid x')}} \bigg\}, \text{ where }\\
        \alpha_{min} &:= \max \bigg\{0,\frac{{P(y_x)}-(1-{P(x'))}}{{P(x')}} \bigg\} &
        \alpha_{max} &:= \min \bigg\{1,\frac{{P(y_x)}}{{P(x')}} \bigg\} 
    \end{align}
    Further, $[l, r] \subseteq [0,1]$}

\begin{proof}

For any two events, $A, B$ having valid probability marginals $P(A), P(B)$, the intersection probability $P(A \cap B)$ is bounded by the Fréchet–Höffding bounds for two events,
\begin{align}
    \max \{ 0,P(A)+P(B) - 1 \} \leq P(A \cap B ) \leq \min \{ P(A), P(B) \}
\end{align}
These bounds are known to be tight in terms of input $P(A), P(B)$. I.e., there is some valid probability measure for which either extreme assignment is possible for $P(A \cap B)$, given valid $P(A), P(B)$. Setting $A = \{Y_x = y\}$ and $B=\{X=x'\}$,
\begin{align}
    \max \{ 0,P(y_x)+P(x') - 1 \} \leq P(y_x, x' ) \leq \min \{ P(y_x), P(x') \}
\end{align}
Dividing by $P(x') > 0$ gives
\begin{align}
    \alpha_{min} \leq P(y_x \mid x' ) \leq \alpha_{max}, \text{ for } \alpha_{min} = \max \bigg \{0, \frac{P(y_x) - (1-P(x'))}{P(x')} \bigg \}, \alpha_{max} = \bigg \{ \frac{P(y_x)}{P(x')}, 1\bigg \} \label{eq:extremal_ett}
\end{align}
Now consider the conditional version of the Fréchet–Höffding bounds:
\begin{align}
    \max \{ 0,P(A \mid B)+P(C \mid B) - 1 \} \leq P(A \cap C \mid B ) \leq \min \{ P(A \mid B), P(C \mid B) \}
\end{align}
Setting $C = \{Y= y'\}$, and dividing by $P(y' \mid x') > 0$ we have
\begin{align}
    \max \bigg \{ 0, \frac{P(y_x \mid x') - (1- P(y' \mid x'))}{P(y' \mid x')} \bigg \} \leq P(y_x \mid x', y') \leq \min \bigg \{ \frac{P(y_x \mid x')}{P(y' \mid x')}, 1 \bigg \} \label{eq:prop4_4}
\end{align}

Following the reasoning in the earlier proof of Lem. \ref{lem:nte_l1_bounds}, $P(y_x \mid x')$ is unconstrained by observational data $P(y' \mid x')$ alone. We can vary this term on either side of the inequality. Assigning the extremal values for $P(y_x \mid x')$ from Eq. \ref{eq:extremal_ett},

\begin{align}
    \max \bigg \{ 0, \frac{\alpha_{min} - (1- P(y' \mid x'))}{P(y' \mid x')} \bigg \} \leq P(y_x \mid x', y') \leq \min \bigg \{ \frac{\alpha_{max}}{P(y' \mid x')}, 1 \bigg \} \label{eq:extremal_ps}
\end{align}

This range obviously must be contained in [0,1]. If all distributions are positive, $0, 1$ would be adjusted to $0_+$ and $1_-$.
\end{proof}

\textbf{Proposition 4.4} (NTE - {$\mathcal{L}_{2.5}$} bounds)\textbf{.} 
    \textit{Given a bow graph causal structure (Fig. \ref{fig:nte_bounds}.a), {observational data $P(X,Y)$}, {interventional data $P(Y_x)$}, and {counterfactual data $P(Y_x \mid X)$}, $\forall x$, the identification query $P(y_x \mid x',y'), x \neq x',$ is tightly bounded in the range $[l', r']$ defined as 
    \begin{align}
        l' &= \max\bigg \{ 0, \frac{{P(y_x\mid x')}-(1 -{P(y' \mid x'))} }{{P(y' \mid x')}} \bigg\} &
        r' &= \min\bigg \{ 1, \frac{{P(y_x \mid x')}}{{P(y' \mid x')}} \bigg\}
    \end{align}
    Further, $[l', r'] \subseteq [l,r]$ as defined in Lem. \ref{lem:nte_l2_bounds}.}

\begin{proof}
    It was proved in Eq. \ref{eq:prop4_4}, that $P(y_x \mid x') \in [l', r']$. $[l, r]$ is derived by assigning extremal values to $P(y_x \mid x') \in [\alpha_{min}, \alpha_{max}]$ in Eq. \ref{eq:extremal_ps}, to push $[l', r']$ to its widest in terms of $\mathcal{L}_2$ data. So, $[l', r'] \subseteq [l, r]$. If we assume all distributions are positive, the $0, 1$ would be adjusted to $0_+$ and $1_-$ accordingly.
\end{proof}

\section{Indexing an Input Data Distribution}
\label{app:regex}

This section is not strictly needed to understand our main results. Thm. \ref{thm:ctfidu_completeness} works with any way of writing each input data distribution as an un-nested $\mathcal{L}_3$ expression. Here, we provide a systematic way to translate from an intuitive index for each data distribution (using the actions taken in the data-collection regime) into an un-nested $\mathcal{L}_3$ expression. Any other equivalent expression would also work.


We index an input data distribution by the physical actions $\mathcal{A}$ that the experimenter takes in order to collect data. For instance, Fig. \ref{fig:app_regime_example}(Left) illustrates the observational regime, corresponding to $\mathcal{A} = \emptyset$. Fig. \ref{fig:app_regime_example}(Center) illustrates an interventional regime, where the experimenter performs a standard randomized intervention on $X$, $A = \{\textit{rand}(X)\}$. Fig. \ref{fig:app_regime_example}(Right) illustrates a counterfactual data-collection regime, where the experimenter performs a counterfactual randomized intervention on $X$, $A = \{\textit{ctf-rand}(X \rightarrow Y)\}$. See Sec. \ref{sec:preliminaries} for the definitions of these actions.  

\begin{algorithm}[t!]
        \caption{$\textsc{regime-regex}$} \label{alg:regime_regex}
        \begin{algorithmic}[1]
          
          \STATE {\bfseries Input:} Causal diagram $\mathcal{G}$; actions $\mathcal{A}$ which index the input data distribution

          \smallskip
          \STATE {\bfseries Output:} Un-nested $\mathcal{L}_3$ expression $P^{\mathcal{A}}(\*V_\star=\*v)$ for the distribution of samples drawn under action set $\mathcal{A}$

          \medskip
          \STATE Initialize empty conjunction $\*V_\star = \emptyset$

          \smallskip
          \FOR{each $V \in \*V$}
            \STATE Initialize a potential response $V_{[.]}$, with empty subscript
            
            \FOR{each intervention $a \in A$}
                \STATE$X \gets$ variable intervened upon in $a$
                \STATE $x_a \gets$ fixed value assigned to $X$ under $a$
                \STATE $\*C \gets$ (subset of $Ch(X)$ affected by $a)\cap An(V)$
                \STATE $\*C' \gets (Ch(X)\setminus \*C) \cap An(V)$

                \smallskip
                \FOR{each $C \in \*C$}
                    \smallskip
                    \IF{$a$ is superseded by a previous $a'$ involving $(X,C)$}
                        \STATE Skip $C$
                    \ENDIF

                    \smallskip
                    \IF{$C = V$}
                        \STATE Add or replace $x_a$ in the subscript of $V_{[.]}$
                    \ELSIF{$C \neq V$}
                        \STATE Add or replace $C_{x_a}$ in the subscript of $V_{[.]}$
                    \ENDIF

                \smallskip
                \ENDFOR

                \smallskip
                \FOR{each $C' \in \*C'$}
                    \smallskip
                    \IF{encountered a previous $a'$ involving $(X,C')$}
                        \STATE Skip $C'$
                    \ENDIF

                    \smallskip
                    \IF{$C' \neq V$}
                        \STATE Add or replace $C'$ in the subscript of $V_{[.]}$
                    \ENDIF

                \smallskip
                \ENDFOR
            
            \ENDFOR

          \smallskip
          \STATE Add clause $V_{[.]} = v$ to conjunction $\*V_\star = \*v$
          \ENDFOR

        \smallskip
        \STATE Apply consistency property to $P(\*V_\star=\*v)$ to get un-nested $P(\*V'_\star=\*v)$
        \smallskip
        \STATE Return $P(\*V'_\star=\*v)$
        \end{algorithmic}
      \end{algorithm}

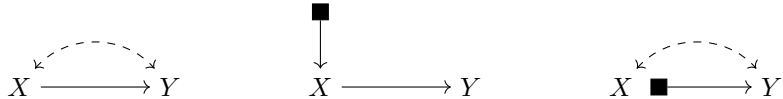
\begin{figure}[h!]
        \centering
        \begin{tikzpicture}
        \node (X) at (0,0.4) {$X$};
        \node (Y) at (2,0.4) {$Y$};
        \path [->] (X) edge (Y);
        \path [<->,dashed] (X) edge [bend left=50] (Y);

        \node (X) at (0+4,0.4) {$X$};
        \node (Y) at (2+4,0.4) {$Y$};
        \path [->] (X) edge (Y);
        \node[fill=black,draw,inner sep=0.3em, minimum width=0.3em] (intervention) at (4,0.4+1) {\ };
        \path [->] (intervention) edge (X);

        \node (X) at (0+8,0.4) {$X$};
        \node (Y) at (2+8,0.4) {$Y$};
        \path [<->,dashed] (X) edge [bend left=50] (Y);
        \node[fill=black,draw,inner sep=0.3em, minimum width=0.3em] (intervention) at (8+0.5,0.4) {\ };
        \path [->] (intervention) edge (Y);
    \end{tikzpicture}
    \caption{Data-collection regimes corresponding to (Left) $\mathcal{A} = \emptyset$; (Center) $\mathcal{A} = \{\textit{rand}(X)\}$; (Right) $\mathcal{A} = \{\textit{ctf-rand}(X \rightarrow Y)\}$.}
    \label{fig:app_regime_example}
\end{figure}

Note that the regime in Figure \ref{fig:app_regime_example}(center) corresponds precisely to the sub-model $\mathcal{M}_x$, where a $\doo{x}$ intervention replaces the equation $f_X$ with a constant value $x$. However, the regime in Figure \ref{fig:app_regime_example}(right) cannot be defined in terms of a sub-model. Next, we provide a subroutine (Alg. \ref{alg:regime_regex}) for systematically mapping the distribution index $\mathcal{A}$ to a \textbf{un-nested counterfactual regular expression}, corresponding to the distribution $P^{\mathcal{A}}(\*v_\star)$, i.e., the distribution of variables sampled under this regime.



\begin{figure}[h!]
        \centering
        \begin{tikzpicture}
        \node (X) at (0,0) {$X$};
        \node (Y) at (4,0) {$Y$};
        \node (T) at (2,1) {$T$};
        \node (W) at (1.3,-1) {$W$};
        \node (Z) at (2.6,-1) {$Z$};
        \path [->] (X) edge (Y);
        \path [->] (X) edge (T);
        \path [->] (T) edge (Y);
        \path [->] (X) edge (W);
        \path [->] (W) edge (Z);
        \path [->] (Z) edge (Y);

        \node (X) at (0+7,0) {$X$};
        \node (Y) at (4+7,0) {$Y$};
        \node (T) at (2+7,1) {$T$};
        \node (W) at (1.3+7,-1) {$W$};
        \node (Z) at (2.6+7,-1) {$Z$};
        
        
        \node[fill=black,draw,inner sep=0.3em, minimum width=0.3em] (intervention2) at (0.5+7,0) {\ };
        \node[black] (x1) at (0.9+7,0.2) {$x$};
        
        \node[fill=black,draw,inner sep=0.3em, minimum width=0.3em] (intervention3) at (0.3+7,-0.3) {\ };
        \node[black] (x2) at (0.3+7,-0.6) {$x'$};
        
        \path [->] (intervention2) edge (Y);
        \path [->] (X) edge (T);
        \path [->] (T) edge (Y);
        \path [->] (intervention3) edge (W);
        \path [->] (W) edge (Z);
        \path [->] (Z) edge (Y);

    \end{tikzpicture}
    \caption{Example for regular expression under a regime (right) involving actions $\mathcal{A} = \{\textit{ctf-rand}(X \rightarrow Y),\textit{ctf-rand}(X \rightarrow W)\}$.}
    \vspace{-0.2in}
    \label{fig:app_regex_example}
\end{figure}
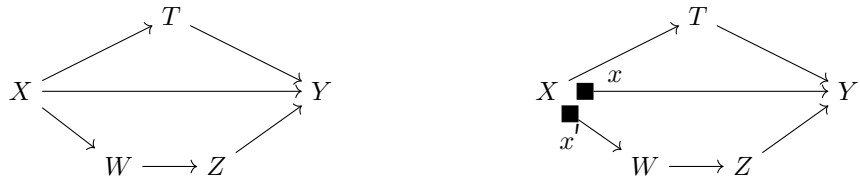

\textit{Example. } Consider the graph $\mathcal{G}$ in Figure \ref{fig:app_regex_example}, being subjected to a regime indexed by actions $\mathcal{A} = \{\textit{ctf-rand}(X \rightarrow Y),\textit{ctf-rand}(X \rightarrow W)\}$, as illustrated. The intermediate output of $\textsc{regime-regex}$($\mathcal{G},\mathcal{A}$) at Line 31 would be $P(X, T, W_{x'}, Z_{W_{x'}}, Y_{xTW_{x'}})$.

The final output of $\textsc{regime-regex}$($\mathcal{G},\mathcal{A}$) would be the expression
\vspace{-0.05in}
\begin{align}
    P(X=x'', T=t, W_{x'}=w, Z_{w}=z, Y_{xtw}=y)
\end{align}
$\hfill$ $\blacksquare$

\begin{proposition}
    Given an input data distribution indexed by a set of physical actions $\mathcal{A}$, Alg. \ref{alg:regime_regex} produces an un-nested Layer 3 expression $P(\*V_{\star} = \*v)$, corresponding to this data distribution.
\end{proposition}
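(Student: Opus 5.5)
The plan is to argue directly from the SCM semantics of the regime. I fix a unit $\*u$ and trace how each observed variable is generated under the modified mechanisms that the actions in $\mathcal{A}$ induce, then check that the nested expression built by Alg. \ref{alg:regime_regex} evaluates, at that same $\*u$, to exactly the value recorded in the regime. The distribution over $\*u$ is untouched by the actions, which are all physical manipulations of inputs to mechanisms. Hence pointwise agreement for every $\*u$ gives equality of the sampled joint distribution and $P(\*V_\star=\*v)$ via the $\mathcal{L}_3$ valuation in Sec. \ref{sec:preliminaries}.

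\textbf{Step 1: per-variable invariant.} I proceed by induction along a topological order of $\mathcal{G}$. For each $V$ with parents $\*{Pa}_V$, the regime feeds $f_V$ with $\*U_V$ and one value per parent $X$:
\begin{itemize}
\item the randomized constant $x_a$, if some \textit{ctf-rand}$(X\rightarrow \*C)$ with $V\in\*C$ applies, or if \textit{rand}$(X)$ applies;
\item otherwise the value of $X$ as realized in the regime.
\end{itemize}
The inner loop over $C\in\*C$ with $C=V$ inserts $x_a$ into the subscript of $V_{[.]}$, and supersession keeps only the operative action per edge. For an ancestor chain, the branch $C\neq V$ records $C_{x_a}$ in $V$'s subscript. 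The loop over $\*C'$ records that unaffected children see the natural value. So $V_{[.]}$ is a nested counterfactual whose subscript lists, for every randomized edge on an ancestral path into $V$, the constant received at that edge. Unrandomized edges are left to the natural mechanism.

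\textbf{Step 2: pointwise equality.} Using the induction hypothesis that every parent's regime value equals its own nested expression at $\*u$, I apply the recursive definition of nested potential responses (Sec. \ref{sec:preliminaries}, as in \citet{correa2024ctfcalc}). This gives $V_{[.]}(\*u)$ equal to $f_V$ applied to exactly the inputs in the case split above. This is the regime value.

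\textbf{Step 3: un-nesting.} The consistency rule in the final line replaces each nested subscript $C_{x_a}$ by the value $c$ that $C_{x_a}$ jointly takes in the same conjunction, as in the example yielding $Z_w$, $Y_{xtw}$. Because every variable is measured exactly once, every nested subscript term also appears as a clause of $\*V_\star$. Substituting its value is therefore an identity of events, as in Thm. \ref{thm:cut} without the sum. The result is an un-nested $\mathcal{L}_3$ expression.

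\textbf{Main obstacle.} I expect the main difficulty to be Step 1's bookkeeping for variables receiving several actions along different paths. Supersession and the ``add or replace'' rules must ensure each edge carries exactly the operative constant, and the ordering of $\mathcal{A}$ must not matter. I would handle this by proving the invariant edge-by-edge rather than action-by-action: for each edge $X\rightarrow C$ into $An(V)$, exactly one of ``constant $x_a$'' or ``natural'' is recorded.
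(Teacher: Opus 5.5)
Your Step~3 is essentially the paper's whole proof. The paper only remarks that the expression after Line~31 has at most one level of nesting, and removes it with the consistency property $X_\star(\mathbf{u})=x \Rightarrow Y_{\ldots[X_\star]}(\mathbf{u})=Y_{\ldots[x]}(\mathbf{u})$. It takes for granted that the nested expression describes the regime. Your Steps~1--2 try to supply that correspondence, which is a worthwhile addition. However, all three steps hinge on a claim you assert rather than prove: that every nested entry in the subscript of $V_{[.]}$ is the \emph{same} potential response as the clause the algorithm records for that variable. ``Every variable is measured exactly once'' only gives one clause per variable. It does not make that clause equal to the entry, and for the algorithm as written it need not be. Take $X\to T\to C\to V$, $X\to C$, $\mathcal{A}=\{\textit{ctf-rand}(X\rightarrow C)\}$. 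The loops produce the clauses $X$, $T$, $C_{xT}$ and $V_{C_xT}$. The entry $C_x$ is not a clause, and it is a different variable: $C_x(\mathbf{u})=f_C(x,T_x(\mathbf{u}),\mathbf{u}_C)$, whereas in the regime $V$ receives $f_C(x,T(\mathbf{u}),\mathbf{u}_C)$. So the consistency substitution in Step~3 is not licensed, and the pointwise equality in Step~2 fails for this output. Likewise, a bare entry $C'$ from the $\mathbf{C}'$ loop fixes $C'$ at its unintervened value, which is wrong when a randomized edge lies upstream of $C'$. The paper's consistency step silently relies on the same identification.

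Relatedly, Step~1's invariant misdescribes the output. A subscript fixes variables, not edges, so $V_{[.]}$ cannot record ``the constant received at each randomized edge.'' What the loops actually do is fix every child of an acted-upon variable lying in $An(V)\setminus\{V\}$, and put $x_a$ in the subscript only for randomized edges directly into $V$. An edge-by-edge invariant is therefore not what the loops maintain. To repair the argument, take (or redefine) the entry for $C$ in $V$'s subscript to be $C$'s own recorded expression $C_{[.]}$. Then induct over all predecessors of $V$ in topological order, not only its parents:
\begin{itemize}
\item by the induction hypothesis, each fixed entry evaluates to its regime value;
\item since every child of an acted-upon variable in $An(V)$ is fixed, every unfixed variable of $An(V)\setminus\{V\}$ has only unmodified incoming edges, so it reproduces its regime value in the submodel;
\item hence $V_{[.]}(\mathbf{u})$ equals $V$'s regime value.
\end{itemize}
Under this reading nesting can exceed one level. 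For example, $X\to B\to C\to V$, $X\to C$ with \textit{ctf-rand}$(X\rightarrow C)$ at $x$ and \textit{ctf-rand}$(X\rightarrow B)$ at $x'$ gives $V_{C_{xB_{x'}}B_{x'}}$. Substituting innermost-first by consistency still terminates, because each clause nests only clauses of strict ancestors, and it yields the un-nested expression.
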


\begin{proof}

Note that Alg. \ref{alg:regime_regex} involves at most one level of nesting in the counterfactual expression $P(\*V_\star=\*v)$ after Line 31. The consistency property \citep[~Lemma 2.1]{correa2024ctfcalc} shows that, for any $X, Y$,
\begin{align}
    X_\star(\*u) = x \implies Y_{...[X_\star]}(\*u) = Y_{...[x]}(\*u)
\end{align}
\vspace{-0.1in}
A straightforward application of the consistency property to $P(\*V_\star=\*v)$ yields the equivalent un-nested $P(\*V_{\star'}=\*v)$.
\end{proof}



\textbf{Note on indexing values}: in each probability distribution expression, in general (unless otherwise stated), value terms in the main line and in the subscript are indices which can overlap. For instance, $P(x', y_x)$ refers to the distribution $P(X, Y_x)$. The specific quantity $P(x, y_x)$, where both the $x$ values are the same, can be obtained directly from one of the lines of this distribution table. We omit this level of granularity throughout the paper for readability .

\section{Frequently Asked Questions}
\label{app:faq}

\bigskip
\begin{enumerate}
  \setcounter{enumi}{0}
  \renewcommand{\theenumi}{Q\arabic{enumi}}
  \renewcommand{\labelenumi}{\theenumi.}

  \item \textbf{Where is the causal diagram coming from? Is it reasonable to expect the data scientist to create one?} 
  
    \textbf{Answer}. First, the assumption of the causal diagram is made out of necessity. The causal diagram is a well-known flexible data structure that is used throughout the literature to encode a qualitative description of the generating model, which is often much easier to
    obtain than the actual mechanisms of the underlying SCM \citep{pearl:2k, spirtes:etal00}. The goal of this paper is not to decide which set of assumptions is the best but rather to provide tools to perform the inferences once the assumptions have already been made, as well as understanding the trade-off between assumptions and the guarantees provided by the method.
    
    Second, the true underlying causal diagrams cannot be learned only from the observational distribution in general. There almost surely exist situations that two SCMs induce the same observational distribution but are compatible with different causal diagrams (see \citet[~Sec. 1.3]{Bareinboim2022OnPH} for details). With higher layer distributions (such as distributions from $\mathcal{L}_2$), it is possible to recover a more informative equivalence class of diagrams that encode additional constraints present in the input layer \citep{kocaoglu:etal17, li:jaber23, kuegelgenetal:23}.

  \item \textbf{What is the complexity of the $\textsc{ctfIDu}^+$ algorithm?}
  
    \textbf{Answer}. $\textsc{ctfIDu}^+$ runs in $O(zn^2(n+m))$ time, where $n, m, z,$ and $d$ refer to the number of nodes, edges, (different) interventions in $\*Y_\star$, and maximum cardinality of any observable variable in $\mathcal{G}$, respectively. See App. \ref{app:complexity}.

  \item \textbf{What is novel about this algorithm? Can one not use inference rules like the \textit{counterfactual calculus} or \textit{do-calculus} to identify counterfactuals?}
  
  \textbf{Answer}. The scope of $\textsc{ctfIDu}^+$ allows for a data scientist to additionally provide as input physically realizable $\mathcal{L}_3$ data. This allows more quantities to be identified. It also subsumes previous algorithms which assume access to only $\mathcal{L}_2$ data, since observational and interventional data belong in the scope of input, too. 

  Indeed, the recent development of the counterfactual (ctf-) calculus \citep{correa2024ctfcalc} provides a powerful set of inference rules to infer  counterfactuals queries from counterfactual (or any other) input distributions. However, what's missing is a complete method for applying these rules in a systematic way. In fact, since $\textsc{ctfIDu}^+$ makes use of the ctf-calculus in its steps, Thm. \ref{thm:ctfidu_completeness} provides proof that ctf-calculus is indeed complete for the task of identifying $\mathcal{L}_3$ quantities from physically realizable data. Prior results have only shown completeness for a scope of $\mathcal{L}_2$ input data.

    \item \textbf{What is meant by \textit{realizable} data distribution? Is it realistic to assume access to counterfactual data?}
  
  \textbf{Answer}. \textit{Realizable} data distributions are those from which an experimenter can collect data samples directly using the following actions: passive observation of a system, standard interventional randomization of some variable(s) which we notate as \textit{rand()}, or counterfactual randomization of some variable(s) which we notate as \textit{ctf-rand()}. See Sec. \ref{sec:preliminaries} for definitions of these actions.
  
  Conventional wisdom has long assumed that data can only be gathered in the real world (i.e. not in a simulated environment where the full SCM specification is known) from observational or interventional distributions. An emerging thread of research has challenged this belief, showing there are indeed realistic settings that permit \textit{counterfactual} data collection \citep{bareinboim:etal15, zhang22a, forney:etal17, yang2025hierarchy} via the procedure of \textit{ctf-rand()}.

  Each input data distribution is indexed by which actions are taken in that data-collection regime, and for which variable(s). This distribution can then be used as input to the $\textsc{ctfIDu}^+$ algorithm.


\end{enumerate}


\end{document}